\newcommand\blfootnote[1]{%
  \begingroup
  \renewcommand\thefootnote{}\footnote{#1}%
  \addtocounter{footnote}{-1}%
  \endgroup
}
\newtheorem{theorem}{Theorem}
\newtheorem{assumption}{Assumption}
\newtheorem{lemma}{Lemma}
\newtheorem{cor}{Corollary}
\newtheorem{remark}{Remark}
\newtheorem{myassumption}{Assumption}[section] 
\newtheorem{mylemma}{Lemma}[section]
\let\secmark\S
\setlist[itemize]{itemsep=0pt,parsep=2pt,topsep=2pt}
\setlist[enumerate]{itemsep=0pt,parsep=2pt,topsep=2pt}
\def\E{\mathbb{E}}
\def\R{\mathbb{R}}
\def\I{\mathcal{I}}
\def\F{\mathcal{F}}
\def\C{\mathcal{C}}
\def\Z{\mathcal{Z}}
\def\q{b}
\def\tl{\tilde}
\DeclareMathAlphabet{\mathbbb}{U}{bbold}{m}{n}
\newcommand{\ind}{\mathbbb{1}}
\def\={\mathop{\overset{\text{\rm \tiny def}}{=}}}
\def\nmid{\,|\,}
\def\unitS{\mathbb{S}_1}
\begin{document} 
\markboth{Stability in Asynchronous Stochastic Approximation}{Stability in Asynchronous Stochastic Approximation}

\title{A Note on Stability in Asynchronous Stochastic Approximation without Communication Delays\thanks{This research was supported in part by DeepMind and Amii.}}

\author[1]{\normalsize Huizhen Yu}
\author[1,2]{\normalsize Yi Wan}
\author[1,3]{\normalsize Richard S. Sutton}

\affil[1]{\normalsize Department of Computing Science, University of Alberta, Canada}
\affil[2]{\normalsize Meta AI, USA}
\affil[3]{\normalsize Alberta Machine Intelligence Institute (Amii), Canada}
\date{} 

\maketitle

\blfootnote{{\it Email addresses:} \texttt{janey.hzyu@gmail.com} (Huizhen Yu; corresponding author), \texttt{yiwan@meta.com} (Yi Wan), \texttt{rsutton@ualberta.ca} (Richard S. Sutton)}

\vspace*{-1.2cm}
%\bigskip

\noindent{\bf Abstract:} In this paper, we study asynchronous stochastic approximation algorithms without communication delays. Our main contribution is a stability proof for these algorithms that extends a method of Borkar and Meyn by accommodating more general noise conditions. We also derive convergence results from this stability result and discuss their application in important average-reward reinforcement learning problems.

\medskip
\noindent{\bf Keywords:}\\
asynchronous stochastic approximation; stability; convergence; reinforcement learning

\smallskip
\section{Introduction}

In this paper, we study the stability and convergence of a family of asynchronous stochastic approximation (SA) algorithms that hold significant importance in reinforcement learning (RL) applications. These algorithms operate in a finite-dimensional space $\R^d$ and, given an initial vector $x_0 \in \R^d$, iteratively compute $x_n \in \R^d$ for $n \geq 1$ using an asynchronous scheme. This asynchrony involves selective updates to individual components at each iteration. To be specific, at the start of iteration $n \geq 0$, a nonempty subset $Y_n \subset \I : = \{ 1, 2, \ldots, d\}$ is randomly selected according to some mechanism. The $i$th component $x_n(i)$ of $x_n$ is then updated according to the following rule: $x_{n+1}(i) = x_n(i)$ if $i \not\in Y_n$; and  
\begin{equation} \label{eq-gen-form}
x_{n+1}(i) = x_n(i) + \beta_{n,i} \left(h_i(x_n) + \omega_{n+1} (i) \right), \quad \text{if} \ i \in Y_n.
\end{equation}
This process involves a diminishing random stepsize $\beta_{n,i}$, a Lipschitz continuous function $h : \R^d \to \R^d$ expressed as $h = (h_1, \ldots, h_d)$, and a random noise term $\omega_{n+1} = (\omega_{n+1}(1), \ldots, \omega_{n+1}(d)) \in \R^d$.

The key components, including the sets $Y_n$, satisfy a set of conditions, which we detail in Section~\ref{sec-prel}. These conditions are similar to those introduced by Borkar \cite{Bor98,Bor00} for asynchronous SA in terms of the stepsizes and $Y_n$. The function $h$ satisfies the stability criterion established in the seminal work of Borkar and Meyn \cite{BoM00}. However, our requirements on noise terms $\{\omega_n\}$ are more general than considered in these prior works (cf.\ Remark~\ref{rmk-cond}b). Specifically, we assume that $\omega_{n+1} = M_{n+1} + \epsilon_{n+1}$, where $\{M_{n+1}\}$ forms a martingale difference sequence subject to specific conditional variance conditions, while $\epsilon_{n+1}$ is such that $\| \epsilon_{n+1} \| \leq \delta_{n+1} ( 1 + \| x_n \|)$ with $\delta_{n+1} \to 0$ almost surely (a.s.), as $n \to \infty$.

Importantly, although this type of noise is standard for convergence analysis (as seen in Borkar \cite[Chap.\ 2.2]{Bor09}) when the algorithm is deemed stable (i.e., $\{x_n\}$ is bounded a.s.), to the best of our knowledge, it has not been considered in the stability analysis of asynchronous SA within the Borkar--Meyn framework \cite{BoM00}. This omission presents a notable limitation, given the fundamental role of stability in SA analysis \cite{Bor09,KuY03}.

Our main contribution in this paper is a stability proof that extends Borkar and Meyn's method \cite{BoM00} by accommodating the more general noise conditions. This stability result (Theorem~\ref{thm-1}), combined with arguments from Borkar \cite{Bor98,Bor09}, then leads to convergence results (Theorems \ref{thm-2} and \ref{thm-3}) for the SA algorithms under consideration. 

As mentioned, these results have important applications in RL. In particular, a class of SA algorithms fitting within the considered algorithmic framework is known as average-reward Q-learning in RL. These algorithms use a stochastic relative value iteration approach to solve finite-state-and-action Markov or semi-Markov decision processes (MDPs or SMDPs) under the average-reward optimality criterion (Abounadi, Bertsekas, and Borkar \cite{ABB01}; Wan, Naik, and Sutton \cite{WNS21a,WNS21b}). We have applied the results of this paper to establish the convergence of several such Q-learning algorithms in a separate work (Wan, Yu, and Sutton \cite{WYS24}). In that work, we consider weakly communicating MDPs/SMDPs, a broader problem class than previously addressed in \cite{ABB01,WNS21a,WNS21b}, and we have shown that the conditions for applying the results of this paper are met. 
Crucially, the more general noise conditions are essential for addressing the SMDP case \cite{WNS21b,WYS24}, where the function $h$ is determined by expected holding times, whose estimates from data approach true values asymptotically.
In this paper, we will discuss the specializations of our convergence results to the average-reward Q-learning context (see Cor.~\ref{cor-ql} and the discussion around it).

The paper is organized as follows. Section~\ref{sec-prel} presents the algorithmic framework, the main stability and convergence theorems, and a preliminary analysis. Section~\ref{sec-stab} presents the stability proof, followed by the proofs of convergence results and their specialization to the RL context in Section~\ref{sec-cvg}. Section~\ref{sec-conc-rmks} concludes with several remarks. An alternative stability proof under a stronger noise condition from the prior works \cite{Bor98,BoM00} is provided in the \hyperref[app-alt-stab]{Appendix}. 

\section{Algorithmic Framework, Main Results, and Preliminary Analysis} \label{sec-prel}

We start by providing a precise description of the algorithmic framework and the required conditions, as outlined in the introduction.
Let $\{\alpha_n\}_{n \geq 0}$ be a given positive sequence of diminishing stepsizes.  Let $\ind \{ E\}$ denote the indicator for an event $E$. Consider an asynchronous SA algorithm of the following form: At iteration $n \geq 0$, $x_{n+1}(i) = x_n(i)$ for $i \not\in Y_n$; and for $i \in Y_n$,
\begin{equation} \label{eq-alg0}
    x_{n+1}(i)  = x_n(i)  + \alpha_{\nu(n,i)} \big( h_i (x_n) + M_{n+1}(i) + \epsilon_{n+1}(i) \big),
\end{equation}
where $\nu(n,i) \= \sum_{k=0}^n \ind \{ i \in Y_k\}$, the cumulative number of updates to the $i$th component prior to iteration $n$.  
The algorithm is associated with an increasing family of $\sigma$-fields, denoted by $\{\F_n\}_{n \geq 0}$, where each $\F_n \supset \sigma (x_m, Y_m, M_m, \epsilon_m; m \leq n)$. The following conditions will apply consistently throughout the paper, and will not be explicitly stated in intermediate results.

\begin{assumption}[Conditions on the function $h$] \label{cond-h} \hfill 
\begin{enumerate} 
\item[{\rm (i)}] $h$ is Lipschitz continuous; i.e., for some $L \geq 0$, $\| h(x) - h(y) \| \leq L \| x - y\|$ for all $x, y \in \R^{d}$.
\item[{\rm (ii)}] For $c \geq 1$, define $h_c(x) \= h(cx)/c$. As $c \to \infty$, the function $h_c(x)$ converges uniformly on compact subsets of $\R^{d}$ to a continuous function $h_\infty$. Furthermore, the ODE
$ \dot{x}(t) = h_\infty (x(t)) $
has the origin as its unique globally asymptotically stable equilibrium.
\end{enumerate}
\end{assumption}

\begin{assumption}[Conditions on noise terms $M_n, \epsilon_n$] \label{cond-ns} \hfill  
\begin{enumerate} 
\item[{\rm (i)}] For all $n \geq 0$, $\E [ \| M_{n+1} \| ] < \infty$, $\E [ M_{n+1} \mid \F_n ] = 0$ a.s., and moreover, for some deterministic constant $K \geq 0$, 
$ \E[ \| M_{n+1} \|^2 \mid \F_n ] \leq K (1 +\| x_n \|^2 )$ a.s.
\item[{\rm (ii)}] For all $n \geq 0$, $\| \epsilon_{n+1} \| \leq \delta_{n+1} ( 1 + \| x_n \|)$, where $\delta_{n+1}$ is $\F_{n+1}$-measurable and as $n \to \infty$, $\delta_n \to 0$ a.s. 
\end{enumerate}
\end{assumption} 

\begin{assumption}[Stepsize conditions]  \label{cond-ss} \hfill
\begin{enumerate} 
\item[{\rm (i)}] $\sum_n \alpha_n = \infty$, $\sum_n \alpha_n^2 < \infty$, and 
$\alpha_{n+1} \leq \alpha_n$ for all $n$ sufficiently large.
\item[{\rm (ii)}] For $x \in (0,1)$, 
$\sup_n \frac{\alpha_{[ x n]}}{ \alpha_n} < \infty$,
where $[x n]$ denotes the integral part of $xn$.
\item[{\rm (iii)}] For $x \in (0,1)$, as $n \to \infty$, $\frac{ \sum_{k=0}^{[ y n ]} \alpha_k }{ \sum_{k=0}^{n} \alpha_k} \to 1$ uniformly in $y \in [x, 1]$.
\end{enumerate}
\end{assumption} 

For $x > 0$, let $N(n,x) \= \min  \left\{ m > n : \sum_{k = n}^m \alpha_k \geq x \right\}$.
\begin{assumption}[Asynchronous update conditions] 
\label{cond-us} \hfill 
\begin{enumerate}
\item[{\rm (i)}] There exists a deterministic constant $\Delta > 0$ such that 
$\liminf_{n \to \infty} \nu(n,i)/n  \geq \Delta$ a.s., for all $i \in \I$.
\item[\rm (ii)] For each $x > 0$, the limit $\lim_{n \to \infty} \frac{ \sum_{k = \nu(n,i)}^{\nu(N(n,x), i)} \alpha_k}{ \sum_{k = \nu(n,j)}^{\nu(N(n,x), j)} \alpha_k}$ exists a.s., for all $i, j \in \I$.
\end{enumerate}
\end{assumption} 

\begin{remark}[About the algorithmic conditions] \label{rmk-cond} \rm \hfill \\*[1pt]
(a) Assumption~\ref{cond-h} on $h$ is the stability criterion introduced by Borkar and Meyn \cite{BoM00}. Note that the functions $h_c$ and $h_\infty$, like $h$, are also Lipschitz continuous with modulus $L$, and $h_\infty(0) = 0$, under this assumption.\\*[2pt]
(b) For asynchronous SA, the noise terms considered in the prior works \cite{Bor98,BoM00,ABB01} satisfy the general conditions given in Assumption~\ref{cond-ns}, but are assumed to be more specific: $\epsilon_n = 0$ and $M_n = F(x_{n-1}, \zeta_n)$, where $\{\zeta_n\}_{n \geq 1}$ are exogenous, independent, and identically distributed (i.i.d.) random variables, and $F$ is a function uniformly Lipschitz in its first argument. We will discuss these noise conditions further in Section~\ref{sec-conc-rmks} and the \hyperref[app-alt-stab]{Appendix}.\\*[2pt]
(c) Assumptions~\ref{cond-ss} and~\ref{cond-us} regarding stepsizes and asynchrony are largely the same as those used in average-reward Q-learning \cite{ABB01}. These conditions, with some minor variations in Assumption~\ref{cond-us}(ii), were originally introduced in the broader context of asynchronous SA in Borkar \cite{Bor98,Bor00} for this particular stepsize structure $\alpha_{\nu(n,i)}$. The purpose of these conditions, reflected in Lemmas~\ref{lem4} and~\ref{lem-cvg-2}, is to create partial asynchrony, aligning the asynchronous algorithm's asymptotic behavior, on average, with that of a synchronous one.\\*[2pt]
(d) This partial asynchrony is important for average-reward Q-learning applications. While Q-learning can achieve stability and convergence in fully asynchronous schemes (either of the form (\ref{eq-gen-form}) or more general) for discounted-reward MDPs and total-reward MDPs of the stochastic shortest path type \cite{Tsi94,YuB13}, these analyses do not extend to average-reward Q-learning.
\qed
\end{remark}

\subsection{Stability and Convergence Theorems} \label{sec-2.1}

We now state our main results: 
\begin{theorem}[Stability] \label{thm-1}
Under Assumptions~\ref{cond-h}--\ref{cond-us}, the sequence $\{x_n\}$ generated by algorithm (\ref{eq-alg0}) is bounded a.s.
\end{theorem}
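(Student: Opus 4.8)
The plan is to adapt the rescaling argument of Borkar and Meyn to the asynchronous, general-noise setting. The core idea is to divide the iteration index into consecutive blocks demarcated by the stepsize-weighted ``clock,'' and on each block to study a rescaled version of the iterate whose initial norm is normalized to be of order one. Concretely, I would fix a time horizon $T > 0$, set block boundaries $n_0 = 0$ and $n_{k+1} = N(n_k, T)$ (so that each block accumulates stepsize-time roughly $T$), put $r_k = \max(1, \|x_{n_k}\|)$, and define the rescaled iterate $\hat{x}_n = x_n / r_k$ for $n_k \le n < n_{k+1}$. The goal is to show that over one block the rescaled trajectory is driven essentially by the limiting vector field $h_\infty$, and hence, by the global asymptotic stability of the origin for $\dot x = h_\infty(x)$ (Assumption~\ref{cond-h}(ii)), contracts toward the origin; this yields a recursion of the form $r_{k+1} \le \tfrac12 r_k$ for all large $k$, which forces $\{r_k\}$, and therefore $\{x_n\}$, to stay bounded almost surely.

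Writing out the recursion for $\hat{x}_n$ on a block, the drift term becomes $h(x_n)/r_k = h_{r_k}(\hat{x}_n)$, which by Assumption~\ref{cond-h}(ii) is uniformly close to $h_\infty(\hat{x}_n)$ once $r_k$ is large; the scaled functions $h_{r_k}$ are moreover uniformly Lipschitz (Remark~\ref{rmk-cond}a), so a priori linear-growth bounds keep $\|\hat{x}_n\|$ of order one across the block. The rescaled martingale noise is $\hat{M}_{n+1} = M_{n+1}/r_k$; using $\|x_n\| \le C r_k$ on the block together with Assumption~\ref{cond-ns}(i), its conditional variance is bounded by a constant, so the relevant squared-stepsize sum $\sum \alpha_{\nu(n,i)}^2 \,\E[\|\hat M_{n+1}\|^2 \mid \F_n]$ is summable (via Assumption~\ref{cond-ss}(i)), and a maximal inequality shows the accumulated martingale fluctuation over each block is asymptotically negligible. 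The delicate new ingredient is the $\epsilon$ term: after rescaling it is bounded by $\delta_{n+1}(1 + \|x_n\|)/r_k \le \delta_{n+1}(1 + \|\hat x_n\|)$, which tends to zero because $\delta_n \to 0$ a.s.\ (Assumption~\ref{cond-ns}(ii)) and $\|\hat x_n\|$ is of order one; this is exactly where the more general noise condition is absorbed without disturbing the ODE limit.

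To pass from the recursion to the ODE, I would interpolate the rescaled iterates into a continuous trajectory on $[0,T]$ using the stepsize-weighted clock and invoke a Gronwall-type estimate to show it tracks the solution of $\dot x = h_\infty(x)$ from the same order-one initial condition, with error controlled by the martingale and $\epsilon$ estimates above. Global asymptotic stability then provides a single $T$ (independent of $k$) after which any order-one initial condition is mapped inside a ball of radius, say, $1/4$; translating back gives $\|x_{n_{k+1}}\| \le \tfrac14 r_k + o(r_k)$, hence $r_{k+1} \le \tfrac12 r_k$ once $r_k$ is large, and iterating bounds $\sup_k r_k$ almost surely, from which boundedness of the whole sequence follows since within each block the iterate can grow by at most a bounded multiplicative factor. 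The asynchronous structure enters through Assumption~\ref{cond-us}, which via the referenced Lemma~\ref{lem4} guarantees that, despite component-dependent update counts $\nu(n,i)$ and stepsizes $\alpha_{\nu(n,i)}$, all components share a common asymptotic timescale, so that the interpolated rescaled process tracks the \emph{synchronous} ODE $\dot x = h_\infty(x)$ rather than a distorted per-component one.

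The main obstacle I anticipate is making the ODE-tracking estimate uniform across blocks in the presence of both the multiplicative $\epsilon$ noise and the asynchrony. Unlike the classical case, the per-block error from $\epsilon$ does not decay on account of small stepsizes alone but only because $\delta_n \to 0$; one must therefore restrict to block indices large enough that $\sup_{n \ge n_k}\delta_n$ is small, and ensure this threshold interacts correctly with the random scaling factors $r_k$ and with the asynchronous clock alignment. Carefully coordinating these vanishing-error requirements --- showing that the rescaled $\epsilon$ contribution, the martingale fluctuation, and the asynchronous timescale mismatch are all simultaneously small on every sufficiently late block --- is the crux of extending the Borkar--Meyn method to this setting.
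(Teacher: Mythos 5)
Your roadmap coincides with the paper's: blocks of length $\approx T$ on the deterministic clock, rescaling by $r_k = \|x_{n_k}\|\vee 1$, a Gronwall-based tracking estimate relating the rescaled trajectory to scaled ODEs, a contraction time furnished by Assumption~\ref{cond-h}(ii), and the halving-plus-bounded-within-block-growth conclusion. However, there is a genuine gap at precisely the point that is new in this paper. Your martingale step is circular: you bound the conditional variance of $\hat M_{n+1} = M_{n+1}/r_k$ by a constant ``using $\|x_n\| \le C r_k$ on the block,'' i.e.\ assuming $\sup_n \|\hat x_n\| = O(1)$ --- but that a priori bound is exactly what the martingale estimate is needed to establish. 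The Borkar--Meyn method breaks this circle by first proving $\sup_t \E[\|\hat x(t)\|^2] < \infty$ via a pure $L^2$ recursion (no pathwise boundedness needed), then deducing a.s.\ convergence of the weighted martingale sums, then pathwise bounds and tracking. In the present setting that $L^2$ recursion is unavailable as stated, for two reasons the paper makes explicit: the asynchronous gains $\q(n,i) = \alpha_{\nu(n,i)}/\alpha_n$ are bounded by the deterministic constant $C$ only \emph{eventually}, with a sample-path-dependent threshold (Lemma~\ref{lem1}), so the $\Lambda_k$ factors cannot be inserted into an expectation recursion; and $\delta_{n+1}$ in $\|\epsilon_{n+1}\| \le \delta_{n+1}(1+\|x_n\|)$ carries no conditional-moment control whatsoever, so neither can the $\epsilon$ term. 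The paper's resolution --- the stopped auxiliary processes $\tilde x^n$, frozen at the first $k$ in the block with $\max_{i} \q(k,i) > C$ or $\delta_k > \bar a$, which satisfy deterministic bounds (Lemma~\ref{lem2a}), admit the $L^2$ estimate and martingale convergence (Lemma~\ref{lem2}), and coincide with $\hat x^n$ for all large $n$ a.s.\ --- is the missing idea. You correctly identify ``coordinating these vanishing-error requirements'' as the crux, but the maximal-inequality argument you sketch would fail without some such stopping construction (or the stronger noise structure of Assumption~\ref{cond-alt-ns}, under which the Appendix gives a stopping-free proof via the decomposition $\hat M = \hat M^o + (\hat M - \hat M^o)$, which you do not assume).

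A secondary, lesser gap: on the deterministic clock that your blocks $n_{k+1} = N(n_k, T)$ use, Lemma~\ref{lem4} does not yield a single limiting ODE $\dot x = h_\infty(x)$; the limit points of $\lambda(T_n + \cdot)$ form a whole family $\rho(t) I$ with $\rho$ possibly non-constant and path-dependent, $\tfrac1d \le \rho \le C$. Your ``single $T$ independent of $k$'' is recoverable by the time change $\tau(t) = \int_0^t \rho(s)\,ds$ with $\rho \ge \tfrac1d$ (this is the paper's Lemma~\ref{lem5}), but turning contraction of each limit ODE into contraction for the actual $\lambda(T_n+\cdot)$-driven solutions on all late blocks additionally requires the perturbation estimate uniform over the family and its neighborhoods in $\Upsilon$, plus the compactness/covering argument (Lemmas~\ref{lem6}--\ref{lem7}), since $\lambda(T_n + \cdot)$ only approaches the limit set and need not equal any limit point. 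The random clock $\tilde\alpha_n$ would give the unique limit $\tfrac1d I$ (Lemma~\ref{lem-cvg-2}) and avoid this, but, as the paper remarks, that route does not appear viable for the stability argument under the general noise condition.
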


\begin{theorem}[Convergence] \label{thm-2}
Under Assumptions~\ref{cond-h}--\ref{cond-us}, the sequence $\{x_n\}$ generated by algorithm (\ref{eq-alg0}) converges a.s.\ to a (possibly sample path-dependent) compact, connected, internally chain transitive, invariant set of the ODE $\dot{x}(t) = h(x(t))$.
\end{theorem}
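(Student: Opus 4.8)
The plan is to deduce convergence from the stability result (Theorem~\ref{thm-1}) by the ODE method, adapted to the asynchronous setting following Borkar~\cite{Bor98,Bor09}. Since Theorem~\ref{thm-1} guarantees that $\{x_n\}$ is bounded a.s., I would henceforth restrict attention to a sample path on which boundedness holds, so that there is a (path-dependent) compact set containing all iterates. On such a path the Lipschitz field $h$ is bounded and the conditional second-moment bound in Assumption~\ref{cond-ns}(i) becomes a uniform bound $\E[\|M_{n+1}\|^2 \mid \F_n] \leq K'$, which puts the analysis squarely within the regime where the ODE method applies.

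First I would control the noise. For each component $i$ the martingale part contributes a term $\sum_k \alpha_{\nu(k,i)} M_{k+1}(i)$; using the square-summability $\sum_n \alpha_n^2 < \infty$ from Assumption~\ref{cond-ss}(i) together with the now-uniform conditional variance bound, the associated martingale is $L^2$-bounded and hence converges a.s. The additive part satisfies $\|\epsilon_{n+1}\| \leq \delta_{n+1}(1 + \|x_n\|)$ with $\delta_n \to 0$ a.s.\ (Assumption~\ref{cond-ns}(ii)); on a bounded path this forces $\epsilon_{n+1} \to 0$, so its cumulative effect over any fixed continuous-time window vanishes asymptotically. Thus the effective driving term reduces to $h(x_n)$ up to asymptotically negligible perturbations.

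Next I would handle the asynchrony, which is the crux. Because the $i$th component is driven by its own local clock $\nu(n,i)$ and the local-clock stepsize $\alpha_{\nu(n,i)}$, a naive interpolation of $\{x_n\}$ need not track $\dot{x} = h(x)$. The role of the local-clock indexing, together with the partial-asynchrony conditions of Assumption~\ref{cond-us}, is precisely to balance the effective per-component update rates so that the limiting ODE is the unweighted field $\dot{x}(t) = h(x(t))$ rather than a frequency-weighted version. Concretely, Assumption~\ref{cond-us}(i) ensures every component is updated at a positive fraction of the iterations, so no local clock lags arbitrarily far behind the global clock, while Assumption~\ref{cond-us}(ii) together with the stepsize regularity of Assumption~\ref{cond-ss}(ii)--(iii) guarantees that the cumulative stepsizes of distinct components over any common continuous-time window become asymptotically proportional (this is the content of the preliminary Lemmas~\ref{lem4} and~\ref{lem-cvg-2}). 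Using these, I would build a common continuous timescale on which the componentwise updates are aligned and show that the interpolated trajectory $\bar{x}(\cdot)$ is, on each bounded sample path, an asymptotic pseudotrajectory of the flow of $\dot{x} = h(x)$, with the residual clock misalignment and the noise terms entering as perturbations that vanish over each fixed window.

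Finally, having established that $\bar{x}(\cdot)$ is a bounded asymptotic pseudotrajectory of the flow of $\dot{x} = h(x)$, I would invoke Benaïm's theorem (as presented in Borkar~\cite{Bor09}): the limit set of such a trajectory is a compact, connected, internally chain transitive, invariant set of the ODE. Since the interpolation and $\{x_n\}$ share the same limit set, this yields the claim. The main obstacle I anticipate is the asynchrony step, namely verifying rigorously that the misalignment of the local clocks and stepsize indices produces only an asymptotically vanishing perturbation to the common-timescale ODE, which is exactly where Assumption~\ref{cond-us} and the stepsize-regularity Lemmas~\ref{lem4} and~\ref{lem-cvg-2} must be deployed; by contrast, the noise estimates and the concluding Benaïm invocation are comparatively routine once Theorem~\ref{thm-1} is in hand.
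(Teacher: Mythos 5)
Your route is in substance the paper's route: invoke Theorem~\ref{thm-1} for boundedness, control the two noise terms, use Assumption~\ref{cond-us} with Lemmas~\ref{lem4} and~\ref{lem-cvg-2} to balance the componentwise clocks on a common continuous timescale (the paper does this by interpolating with the random stepsizes $\tl \alpha_n = \sum_{i \in Y_n} \alpha_{\nu(n,i)}$, precisely so that $\tl \lambda(t+\cdot)$ has the \emph{unique} limit $\tfrac{1}{d}I$ and only one limiting ODE arises), establish the tracking property (Lemma~\ref{lem-cvg-4}), and conclude by \cite[Chap.~2, Thm.~2]{Bor09} --- which is exactly the Bena\"{i}m-type result you invoke, so that final step is not a different approach, just a different citation. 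However, there is one genuine gap in your noise step as written. From a.s.\ boundedness of $\{x_n\}$ you cannot conclude that $\E[\|M_{n+1}\|^2 \mid \F_n] \leq K'$ for a deterministic $K'$: ``restricting to a sample path'' is not an operation one can perform inside a conditional expectation, and the bound $K(1+\|x_n\|^2)$ is a random quantity with no uniform deterministic bound (the bounding constant is path-dependent). Hence your claim that the noise martingale is $L^2$-bounded does not follow, and the a.s.\ convergence of $\sum_k \tl\alpha_k \tl\Lambda_k M_{k+1}$ needs a localization argument. The paper's Lemma~\ref{lem-cvg-3} supplies exactly this: define stopping times $\tau_N = \min\{k \geq 0 : \|x_k\| > N\}$, truncate $M^{(N)}_{k+1} = \ind\{k < \tau_N\} M_{k+1}$, verify that each truncated sum is a square-integrable martingale with a.s.\ summable conditional variances (using $\sum_n \tl\alpha_n^2 < \infty$ from \eqref{eq-stepsize}), apply the martingale convergence theorem \cite[Prop.\ VII-2-3(c)]{Nev75}, and then observe that on almost every path $\zeta_n$ coincides eventually-in-$N$ with $\zeta^{(N)}_n$ for some path-dependent $N$. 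This repair is standard but indispensable in this paper, whose very point is that the noise admits none of the stronger structure assumed in \cite{Bor98,BoM00}.

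A second, minor imprecision: the clock-balancing of Lemma~\ref{lem-cvg-2} yields the limiting ODE $\dot{x}(t) = \tfrac{1}{d} h(x(t))$, not $\dot{x}(t) = h(x(t))$ as you assert. The missing (one-line) final step, which the paper makes explicitly, is that a constant time-rescaling maps solutions of one ODE to solutions of the other, so the two ODEs have identical compact, connected, internally chain transitive, invariant sets; with that observation and the localization fix above, your proposal matches the paper's proof.
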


For the definitions of internally chain transitive sets and invariant sets for ODEs, see Borkar \cite[Chap.\ 2.1]{Bor09}.

These two theorems parallel the results \cite[Chap.\ 3, Theorem 7 and Chap.\ 2, Theorem 2]{Bor09} for synchronous SA algorithms. 
We will prove them in subsequent sections. Additionally, our convergence analysis will yield a refined version of Theorem~\ref{thm-2}, characterizing the algorithm's asymptotic behavior in terms of segments of consecutive iterates rather than single iterates (see Theorem~\ref{thm-3} and Cor.~\ref{cor-ql}).

\subsection{Preliminary Analysis} \label{sec-prel-ana}

In our stability and convergence analyses, we adopt an ODE-based approach. We work with continuous trajectories formed by linear interpolations of the iterates $\{x_n\}$ and establish connections between these trajectories and solutions of non-autonomous ODEs of the form $\dot{x}(t) = \lambda(t) g(x)$, where the function $g$ can take various forms, such as $h$ and $h_c$ ($c \geq 1$), 
depending on the context. The characteristics of these ODE solutions will be examined in our main analysis. This subsection's primary focus is on the time-dependent components $\lambda$. We provide definitions and derive their asymptotic properties, which will be needed in our subsequent analysis.

The continuous trajectory associated with $\{x_n\}$ will be constructed differently for stability and convergence analyses. In our stability proof, we use the deterministic stepsize $\alpha_n$ as the elapsed time between the $n$th and $(n+1)$th iterates, while in our convergence proof, we opt for a random stepsize for technical convenience. (Using random stepsizes in stability analysis seems non-viable under our noise conditions.) The corresponding functions $\lambda$ thus also differ in the two cases. Let us start by discussing the approach used in stability analysis.

Define a continuous trajectory $\bar x(t)$ by linearly interpolating $\{x_n\}$ as follows: 
Let $t(0) \= 0$ and $t(n) \= \sum_{k=0}^{n-1} \alpha_k$, $n \geq 1$.
Define, for $n \geq 0$,
$\bar x(t(n)) \= x_n$ and 
$$  \bar x(t) \=  x_n +  \tfrac{t - t(n)}{t(n+1) - t(n)} \, ( x_{n+1} - x_n), \ \  t \in (t(n), t(n+1)).$$
To define $\lambda(\cdot)$, we will first rewrite algorithm (\ref{eq-alg0}) explicitly in terms of the stepseizes $\{\alpha_n\}$ used to define the temporal coordinates of the trajectory $\bar x(t)$: for $i \in \I$, 
\begin{equation} \label{eq-alg}
    x_{n+1}(i)  = x_n(i) + \alpha_n \, \q(n, i) \left( h_i (x_n) + M_{n+1}(i) + \epsilon_{n+1}(i) \right),
\end{equation}
where $\q(n,i) \= \frac{\alpha_{\nu(n,i)}}{\alpha_n} \ind \{i \in Y_n\}$.
 
\begin{lemma}  \label{lem1}
For some deterministic constant $C > 0$, 
it holds almost surely that
$\max_{i \in \I} \q(n,i) \leq C$ and $\sum_{ i \in \I} \q(n,i) \geq 1$ for all $n \geq \bar n$,
where $\bar n \geq 0$ is a sufficiently large integer depending on each sample path.
\end{lemma}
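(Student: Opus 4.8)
The plan is to reduce both inequalities to comparisons between the stepsize $\alpha_{\nu(n,i)}$ actually applied to component $i$ and the reference stepsize $\alpha_n$, exploiting the eventual monotonicity of $\{\alpha_n\}$ from Assumption~\ref{cond-ss}(i). The only randomness will enter through the sample-path-dependent threshold $\bar n$; the constant $C$ will be deterministic, supplied by Assumption~\ref{cond-ss}(ii). Throughout, fix an integer $n_0$ past which $\alpha_{m+1}\le\alpha_m$, and work on the probability-one event on which the $\liminf$ bounds of Assumption~\ref{cond-us}(i) hold simultaneously for every $i\in\I$ (a finite intersection, since $\I$ is finite).

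For the upper bound, fix any $x\in(0,1)$ with $x<\Delta$. By Assumption~\ref{cond-us}(i), on this event each $\nu(\cdot,i)$ eventually satisfies $\nu(n,i)\ge x n$, and $\nu(n,i)\to\infty$. Hence I may choose $\bar n_1$ so large that for all $n\ge\bar n_1$ and all $i\in\I$ one has both $\nu(n,i)\ge xn$ and $\min\{[xn],\,\nu(n,i)\}\ge n_0$. Monotonicity then gives $\alpha_{\nu(n,i)}\le\alpha_{[xn]}$, so
\[
\q(n,i)=\frac{\alpha_{\nu(n,i)}}{\alpha_n}\,\ind\{i\in Y_n\}\le\frac{\alpha_{[xn]}}{\alpha_n}\le\sup_{m}\frac{\alpha_{[xm]}}{\alpha_m}=:C .
\]
This $C$ is finite by Assumption~\ref{cond-ss}(ii) and manifestly deterministic, and the bound holds uniformly over $i\in\I$, yielding $\max_{i\in\I}\q(n,i)\le C$.

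For the lower bound, note that for $i\in Y_n$ the index $\nu(n,i)$ counts only updates to component $i$ occurring prior to iteration $n$, so $\nu(n,i)\le n$. Invoking Assumption~\ref{cond-us}(i) once more to guarantee $\nu(n,i)\ge n_0$ for every $i$ once $n\ge\bar n_2$, monotonicity gives $\alpha_{\nu(n,i)}\ge\alpha_n$, i.e. $\q(n,i)\ge 1$ for each $i\in Y_n$. Since $Y_n\ne\emptyset$, summing over $i\in\I$ yields $\sum_{i\in\I}\q(n,i)\ge 1$. Taking $\bar n=\max\{\bar n_1,\bar n_2\}$ then establishes both assertions for all $n\ge\bar n$.

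The one genuinely delicate point is the upper bound, where the partial-asynchrony hypotheses must interlock: the almost-sure linear-growth lower bound on $\nu(n,i)$ from Assumption~\ref{cond-us}(i) is exactly what prevents a seldom-updated component from carrying an anomalously large stepsize $\alpha_{\nu(n,i)}\gg\alpha_n$, while Assumption~\ref{cond-ss}(ii) is precisely the quantitative statement needed to convert $\nu(n,i)\ge xn$ into a uniform bound on the ratio. The care required is mainly bookkeeping—passing from the sample-path-dependent $\liminf$ statements to a single threshold $\bar n$, and confirming that all relevant indices have entered the monotone regime; the comparison inequalities themselves are then immediate.
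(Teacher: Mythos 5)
Your proof is correct and takes essentially the same route as the paper's: Assumption~\ref{cond-us}(i) combined with the eventual monotonicity of $\{\alpha_n\}$ from Assumption~\ref{cond-ss}(i) yields $\alpha_{\nu(n,i)} \leq \alpha_{[xn]}$ for an $x$ below $\Delta$, with the deterministic constant $C$ supplied by Assumption~\ref{cond-ss}(ii), while the lower bound follows from $\nu(n,i) \leq n$, monotonicity, and $Y_n \neq \emptyset$, exactly as in the paper. The only cosmetic difference is that you compare $\alpha_{\nu(n,i)}$ to $\alpha_{[xn]}$ directly via monotonicity with a single fixed $x < \Delta$, whereas the paper routes through the uniform bound $\sup_n \sup_{y \in [\Delta', 1]} \alpha_{[yn]}/\alpha_n < \infty$ quoted from Borkar---a fact that is itself derived from the same monotonicity argument, so the two proofs coincide in substance.
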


\begin{proof}
As discussed in \cite[p.\ 842]{Bor98}, Assumption~\ref{cond-ss}(ii), together with Assumption~\ref{cond-ss}(i) on $\{\alpha_n\}$ being eventually nonincreasing, implies that for $x \in (0,1)$,
$\sup_n \sup_{y \in [x, 1]} \frac{\alpha_{[ y n]}}{ \alpha_n} < \infty.$
By Assumption~\ref{cond-us}(i), almost surely, $\min_{i \in \I} \nu(n,i)/n \geq \Delta'$ for all $n$ sufficiently large, where $\Delta' \=\Delta/2 \in (0,1)$ is a deterministic constant. 
Thus, for the finite deterministic constant $C \=  \sup_n \sup_{y \in [\Delta', 1]} \frac{\alpha_{[ y n]}}{ \alpha_n}$,
it holds almost surely that $\max_{i \in \I} \q(n,i) \leq \max_{i \in \I} \frac{\alpha_{\nu(n,i)}}{\alpha_n} \leq C$ for all $n$ sufficiently large. 
Since $\{\alpha_n\}$ is eventually nonincreasing by Assumption~\ref{cond-ss}(i) and the sets $Y_n$ are nonempty, Assumption~\ref{cond-us}(i) also implies that almost surely,
$\sum_{ i \in I} \q(n,i) =  \sum_{ i \in Y_n} \frac{\alpha_{\nu(n,i)}}{\alpha_n}  \geq 1$ for all $n$ sufficiently large.
\end{proof}

Define a diagonal matrix-valued, piecewise constant function $\lambda(t)$ as follows: For $t \in [t(n), t(n+1))$, $n \geq 0$, 
\begin{equation} \label{eq-def-lambda}
    \lambda(t) \=\text{diag} \big( \, \q(n, 1) \!\wedge\! C, \, \q(n, 2) \!\wedge\! C, \, \ldots, \, \q(n, d) \!\wedge\! C \, \big),
\end{equation} 
where $a \wedge b : = \min \{a, b\}$.
For any fixed $\bar t \geq 0$, we view $\lambda(\bar t + \cdot)$ as an element in the space $\Upsilon$ which comprises all Borel-measurable functions that map $t \geq 0$ to a $d \times d$ diagonal matrix with nonnegative diagonal entries bounded by $C$. More precisely, two such functions $\lambda', \lambda''$ are regarded as the same element in $\Upsilon$, if with respect to (w.r.t.) the Lebesgue measure, $\lambda'(t) = \lambda''(t)$ almost everywhere (a.e.).
As in Borkar \cite{Bor98} and \cite[Chap.\ 7.2]{Bor09}, we equip the space $\Upsilon$ with the coarsest topology that makes the mappings $\psi_{t,f}: \lambda' \mapsto \int_0^t  \lambda'(s) f(s) \, ds$ continuous for all $t > 0$ and $f \in L_2([0, t]; \R^{d})$ (the space of all $\R^{d}$-valued functions on $[0,t]$ that are square-integrable w.r.t.\ to the Lebesgue measure). 
This means that the family of open sets in $\Upsilon$ consists of arbitrary unions of finite intersections of sets of the form
$\left\{   \lambda' \in \Upsilon :   \left\| \psi_{t, f}(\lambda') - \psi_{t, f}(\bar \lambda) \right\| < \epsilon \right\},$
where $\epsilon, t  > 0$, $\bar \lambda \in \Upsilon$, $f \in L_2([0, t]; \R^{d})$. (These sets will be useful in our stability analysis, particularly, in the proof of Lemma~\ref{lem6}.)
With this topology, $\Upsilon$ is a compact metrizable space (by the Banach-Alaoglu theorem and the separability of the Hilbert spaces $L_2([0, t]; \R^{d})$, $t > 0$;  cf.\ \cite[Chaps.\ 7.2 and 11.1.2]{Bor09}).
Thus any sequence in $\Upsilon$ contains a convergent subsequence.

\begin{lemma} \label{lem4}
Almost surely, for any sequence $t_n \geq 0$ with $t_n \uparrow \infty$, all limit points of the sequence $\{\lambda(t_n + \cdot)\}_{n \geq 0}$ in $\Upsilon$ have the form
$\lambda^*(t) = \rho(t) I$,
where $I$ is the identity matrix and $\rho(\cdot)$ is a real-valued Borel-measurable function satisfying $\tfrac{1}{d} \leq \rho(t) \leq C$ for all $t \geq 0$.
\end{lemma}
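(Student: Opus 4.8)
The plan is to read off any limit point $\lambda^*$ of $\{\lambda(t_n+\cdot)\}$ from the running integrals of its diagonal entries. Convergence in $\Upsilon$ means $\int_0^t \lambda(t_n+s) f(s)\,ds \to \int_0^t \lambda^*(s) f(s)\,ds$ for all $t>0$ and $f\in L_2$; taking $f$ supported on a single coordinate isolates the $i$th diagonal entry, so the statement reduces to two claims about these integrals: (a) the diagonal of $\lambda^*$ obeys the trace bounds, and (b) all its diagonal entries coincide a.e. Throughout I work off the a.s.\ event on which Lemma~\ref{lem1}, Assumption~\ref{cond-us}(i),(ii) and Assumption~\ref{cond-ss}(iii) hold; writing $S_m \= \sum_{k=0}^m \alpha_k$, the basic tool is the telescoping identity obtained from Lemma~\ref{lem1} (so that $\q(k,i)\wedge C = \q(k,i) = \tfrac{\alpha_{\nu(k,i)}}{\alpha_k}\ind\{i\in Y_k\}$ for $k\geq \bar n$), namely $\int_0^{t(m)} \lambda_i(s)\,ds = \sum_{k=0}^{m-1}\alpha_k(\q(k,i)\wedge C) = S_{\nu(m-1,i)} + O(1)$, and its windowed version $\int_{t(n)}^{t(N(n,x))}\lambda_i(s)\,ds = A_i(n,x) + o(1)$, where $A_i(n,x) \= \sum_{k=\nu(n,i)}^{\nu(N(n,x),i)}\alpha_k$ is exactly the quantity in Assumption~\ref{cond-us}(ii) and $o(1)$ absorbs vanishing boundary terms.

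For the bounds (a): each diagonal entry of $\lambda$ lies in $[0,C]$ by construction, and by Lemma~\ref{lem1} the trace satisfies $\sum_{i\in\I}\lambda_i(t)\geq 1$ for $t$ large; integrating against indicators and passing to the limit gives $1 \leq \sum_{i\in\I}\lambda_i^*(t) \leq dC$ a.e., which yields $\tfrac1d \leq \rho \leq C$ once equality of the entries is established.

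The crux is (b), and it requires combining two assumptions whose strengths are complementary. Assumption~\ref{cond-ss}(iii), together with $\liminf_n \nu(n,i)/n \geq \Delta > 0$ from Assumption~\ref{cond-us}(i) and the uniformity noted in the proof of Lemma~\ref{lem1}, gives the \emph{multiplicative} relation $S_{\nu(m,i)}/S_m \to 1$ for every $i$, hence $\int_0^{t(m)}\lambda_i \big/ \int_0^{t(m)}\lambda_j \to 1$. Assumption~\ref{cond-us}(ii) gives that the \emph{windowed} ratios $A_i(n,x)/A_j(n,x)$ converge to some $\theta_{ij}(x)$. To force $\theta_{ij}(x)=1$, I would fix $x>0$ and a large $n_0$, form consecutive windows $n_{l+1}\=N(n_l,x)$, and apply the generalized Ces\`aro/Stolz lemma: since $A_i(n_l,x)/A_j(n_l,x)\to\theta_{ij}(x)$ as $l\to\infty$ and $\sum_l A_j(n_l,x)=\infty$, the partial-sum ratios converge to $\theta_{ij}(x)$ as well; but the partial sums telescope to $\int_{t(n_0)}^{t(n_L)}\lambda_i = S_{\nu(n_L,i)}+O_{n_0}(1)$ and its $j$-analogue, whose ratio tends to $\lim_L S_{\nu(n_L,i)}/S_{\nu(n_L,j)} = 1$ by the multiplicative relation. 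Hence $\theta_{ij}(x)=1$ for all $x$. Returning to the limit point along $t_n$, the windowed integrals $\int_0^x \lambda_i(t_n+s)\,ds = A_i(m(n),x)+o(1)$ (with $m(n)$ the interval index containing $t_n$) converge to $\int_0^x \lambda_i^*(s)\,ds$, and since their pairwise ratios tend to $\theta_{ij}(x)=1$, I obtain $\int_0^x \lambda_i^* = \int_0^x \lambda_j^*$ for all $x>0$, whence $\lambda_i^*=\lambda_j^*$ a.e.; with the trace bounds this gives $\lambda^*=\rho(t)I$, $\tfrac1d\leq\rho\leq C$. One runs the countably many rational $x$ off a single null set and extends to all $x$ by continuity of $x\mapsto\int_0^x\lambda_i^*$.

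I expect the identification $\theta_{ij}(x)=1$ to be the main obstacle. Assumption~\ref{cond-us}(ii) only supplies convergence of the windowed ratios to \emph{some} constant, while Assumption~\ref{cond-ss}(iii) only supplies \emph{multiplicative} control of $S_{\nu(m,i)}$, which does not transfer to the bounded window-differences $A_i(n,x)$ directly; the Ces\`aro averaging is precisely the bridge that converts the windowed-ratio limit into a cumulative-ratio limit where (iii) pins it to $1$. The fussy part of making this rigorous is checking that the off-by-one $\alpha$-terms in the telescoping and the window-boundary terms are genuinely negligible, using $\alpha_n\to 0$ together with the fact that $A_i(n,x)$ cannot collapse (since $\sum_{i\in\I} A_i(n,x) \geq x$).
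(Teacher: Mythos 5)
Your proposal is correct and takes essentially the same route as the paper's proof: reduce the claim to showing that the windowed-ratio limits in Assumption~\ref{cond-us}(ii) must equal $1$, conclude $\int_0^x \lambda^*_{ii}(s)\,ds = \int_0^x \lambda^*_{jj}(s)\,ds$ for a dense set of $x$ and hence $\lambda^*_{ii} = \lambda^*_{jj}$ a.e., and get $\rho(t) \geq 1/d$ from the trace lower bound of Lemma~\ref{lem1} via differentiation. The only difference is presentational: where the paper delegates the identification of the limits as $1$ to Borkar's L'H\^{o}pital-based reasoning (cf.\ Remark~\ref{rmk-2}a), you reconstruct that step explicitly---combining $S_{\nu(m,i)}/S_m \to 1$ (from Assumptions~\ref{cond-ss}(iii) and~\ref{cond-us}(i)) with a Stolz--Ces\`{a}ro argument over consecutive windows, which is precisely the discrete counterpart of the cited L'H\^{o}pital argument, and your handling of the capping, boundary-overlap, and null-set details is sound.
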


\begin{proof}
Let $\{t^1, t^2, \ldots\}$ be a dense set in $\R_+$. 
Consider a sample path for which Assumption~\ref{cond-us}(i) holds and Assumption~\ref{cond-us}(ii) holds for all $x \in \{t^1, t^2, \ldots\}$. (Note that such sample paths form a set of probability $1$.) By its proof, Lemma~\ref{lem1} holds for such a sample path. 

Given $\{t_n\}$ with $t_n \uparrow \infty$, consider any subsequence $\{\lambda(t_{n_k} + \cdot)\}_{k \geq 0}$ converging to some 
$\lambda^* \in \Upsilon$. 
Let $i, j \in \I$.
With Assumptions~\ref{cond-ss} and~\ref{cond-us} holding, it follows from Lemma~\ref{lem1} and the reasoning given in the proofs of \cite[Thm.\ 3.2]{Bor98} and \cite{Bor00} (this is where Assumption~\ref{cond-ss}(iii) is used) that
\begin{equation} \label{eq-prf-lambda}
  \int_0^{t^\ell} \lambda^*_{ii}(s) ds =   \int_0^{t^\ell} \lambda^*_{jj}(s) ds,  \quad \forall \, \ell \geq 1, \ \forall \, i, j \in \I.
\end{equation}   
Since $\{t^\ell\}$ is dense in $\R_+$ and $\lambda^*_{ii}(s) \in [0, C]$, it follows that $f(t) \= \int_{0}^{t} \lambda^*_{ii}(s) ds$ defines the same function $f$ for any $i \in \I$ and hence $\lambda^*_{ii}(s) = \lambda^*_{jj}(s)$ a.e.\ by the Lebesgue differentiation theorem \cite[Thm.\ 7.2.1]{Dud02}.
Since functions in $\Upsilon$ that are identical a.e.\ are treated as the same function, we have $\lambda^*(t) = \rho(t) I$ for some Borel-measurable function $\rho$ with $\rho(t) \in [0,C]$. It remains to show $\rho(t) \geq 1/d$ a.e. By the convergence $\lambda(t_{n_k} + \cdot) \to \lambda^*$ in $\Upsilon$, for all $t, s > 0$,
$$  \int_t^{t+s} \!\rho(y) \, \text{trace}( I) dy = \lim_{k \to \infty} \int_t^{t+s} \!\text{trace}(\lambda(t_{n_k} + y)) dy \geq s,$$
where the inequality follows from Lemma~\ref{lem1} and the definition of $\lambda(\cdot)$. Thus $\int_t^{t+s} \!\rho(y) dy \geq  \tfrac{s}{d}$ for all $t, s > 0$, implying $\rho(t) \geq \tfrac{1}{d}$ a.e.\ by the Lebesgue differentiation theorem \cite[Thm.\ 7.2.1]{Dud02}. 
\end{proof}

\begin{remark} \rm \label{rmk-2}
We make two comments on the preceding proof:\\*[1pt]
(a) The proofs of Borkar \cite[Thm.\ 3.2]{Bor98} and \cite{Bor00} ingeniously employ L'H\^{o}pital's rule. While these proofs deal with a function $\lambda(\cdot)$ different from ours, the same reasoning is applicable in our case. It shows that under Assumptions~\ref{cond-ss} and~\ref{cond-us}, for each $x > 0$, all these limits in Assumption~\ref{cond-us}(ii), $\lim_{n \to \infty} \frac{ \sum_{k = \nu(n,i)}^{\nu(N(n,x), i)} \alpha_k}{ \sum_{k = \nu(n,j)}^{\nu(N(n,x), j)} \alpha_k}$, $i, j \in \I$, must equal to $1$ a.s. This leads to (\ref{eq-prf-lambda}).\\*[2pt]
(b) In the application of the Lebesgue differentiation theorem \cite[Thm.\ 7.2.1]{Dud02}, alternative measure-theoretical arguments can be employed. Given that $\int_{t}^{t'} \lambda^*_{ii}(s) ds =   \int_{t}^{t'} \lambda^*_{jj}(s) ds$ for all $0 \leq t < t'$, both $\lambda^*_{ii}(s) ds$ and $\lambda^*_{jj}(s) ds$ define the same $\sigma$-finite measure on $\R_+$ according to \cite[Thm.\ 3.2.6]{Dud02}. Consequently, $\lambda^*_{ii}(s) = \lambda^*_{jj}(s)$ a.e.\ by the Radon-Nikodym theorem \cite[Thm.\ 5.5.4]{Dud02}. Given that $\int_t^{t+s} \!\rho(y) dy \geq  \tfrac{s}{d}$ for all $t, s > 0$, by a differentiation theorem for measures \cite[Chap.\ VII, \secmark8]{Doo53}, $\rho(t) \geq \tfrac{1}{d}$ a.e.
\qed
\end{remark}

We now describe the setup that will be used for convergence analysis. In this case, we write algorithm (\ref{eq-alg0}) equivalently as: for all $i \in \I$,
\begin{equation} \label{eq-alg1}
    x_{n+1}(i)  = x_n(i) + \tl \alpha_n \,  \tl \q(n, i) \left( h_i (x_n) + M_{n+1}(i) + \epsilon_{n+1}(i) \right),
\end{equation}
where $\tl \alpha_n \= \sum_{i \in Y_n}  \alpha_{\nu(n, i)}$, $\tl \q(n, i) \= \frac{\alpha_{\nu(n, i)}}{\tl \alpha_n} \ind\{i \in Y_n\}$, and thus $\sum_{i \in \I} \tl \q(n,i) = 1$. Correspondingly, let 
\begin{equation} \label{def-conv-ana-ode-time}
 \tl t(0) \= 0, \quad \textstyle{\tl t(n) \= \sum_{k=0}^{n -1} \tl \alpha_k,}  \ \ \ n \geq 1.
\end{equation}  
We define $\bar x(t)$ to be linear interpolations of $\{x_n\}$ as before, but with $\tl t(n)$ replacing $t(n)$; in other words, we use the random stespsizes $\{\tl \alpha_n\}$ as elapsed time between consecutive iterates $\{x_n\}$. This choice is motivated by the simpler limiting behavior of the resulting function $\lambda(t)$, which we denote by $\tl \lambda(t)$ in this case. 
Specifically, $\tilde \lambda(\cdot)$, a diagonal matrix-valued, piecewise constant trajectory, is given by: for  $t \in [\tl t(n), \tl t(n+1))$, $n \geq 0$,
\begin{equation} \label{eq-tlambda}
 \tl{\lambda}(t) \=\text{diag} \big( \, \tl \q(n, 1), \, \tl \q(n, 2), \, \ldots, \, \tl \q(n, d)  \, \big).
\end{equation}   

We view $\tilde \lambda(\cdot)$ as an element in the space $\tilde \Upsilon$ which comprises all Borel-measurable functions that map $t \geq 0$ to a $d \times d$ diagonal matrix with nonnegative diagonal entries summing to $1$. Regarded as a subset of $\Upsilon$ with the relative topology, $\tilde \Upsilon$ is a compact metrizable space. Thus any sequence $\{\tl \lambda(t_n + \cdot)\}_{n \geq 0}$ contains a convergent subsequence in $\tilde \Upsilon$. Furthermore, when $t_n \to \infty$, the limit point is unique and corresponds to a constant function. To establish this fact, we prove a technical result first.

Define $\tilde N(n, x) \= \min \left\{ m > n : \sum_{k=n}^{m} \sum_{i \in Y_k} \alpha_{\nu(k, i)} \geq x \right\}$ for $x > 0$.
Given Assumptions~\ref{cond-ss} and~\ref{cond-us}(i), the following lemma is equivalent to an assumption introduced and employed in Borkar \cite{Bor00} in lieu of Assumption~\ref{cond-us}(ii) [cf.\ Remark~\ref{rmk-3}]. Here we obtain this lemma as a consequence of Assumption~\ref{cond-us} and Lemma~\ref{lem4}.

\begin{lemma} \label{lem-cvg-1}
For each $x > 0$, 
$\lim_{n \to \infty} \frac{ \sum_{k = \nu(n,i)}^{\nu(\tilde N(n,x), i)} \alpha_k}{ \sum_{k = \nu(n,j)}^{\nu(\tilde N(n,x), j)} \alpha_k} = 1$ a.s., for all $i, j \in \I$.
\end{lemma}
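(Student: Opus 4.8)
The plan is to express each of the two sums as an integral of a diagonal entry of the piecewise-constant trajectory $\lambda(\cdot)$ from~(\ref{eq-def-lambda}) and then invoke Lemma~\ref{lem4}, which forces all diagonal entries of every limit point to coincide. Fix a sample path on which Lemma~\ref{lem1}, Lemma~\ref{lem4}, and Assumption~\ref{cond-us} all hold (a probability-one event), and let $\bar n$ be as in Lemma~\ref{lem1}. For $m > n \geq \bar n$ and $i \in \I$, we have $\q(k,i) \leq C$ for every $k \geq \bar n$, so $\lambda_{ii}(s) = \q(k,i)$ on $[t(k), t(k+1))$ and the integral over $[t(n), t(m)]$ telescopes: $\int_{t(n)}^{t(m)} \lambda_{ii}(s)\,ds = \sum_{k=n}^{m-1} \alpha_{\nu(k,i)}\ind\{i \in Y_k\}$. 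This last sum equals $\sum_{k=\nu(n,i)}^{\nu(m,i)} \alpha_k$ up to at most two boundary terms $\alpha_l$ whose indices $l$ tend to infinity; since $\alpha_l \to 0$ and $\nu(n,i) \to \infty$ by Assumption~\ref{cond-us}(i), these discrepancies vanish as $n \to \infty$. Writing $m_n \= \tilde N(n,x)$, it therefore suffices to prove that $\int_{t(n)}^{t(m_n)} \lambda_{ii}(s)\,ds$ and $\int_{t(n)}^{t(m_n)} \lambda_{jj}(s)\,ds$ are asymptotically equal and bounded away from $0$.

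First I would control the window length $T_n \= t(m_n) - t(n) = \sum_{k=n}^{m_n - 1} \alpha_k$. By Lemma~\ref{lem1}, for all large $k$ we have $\alpha_k \leq \tl\alpha_k = \alpha_k \sum_{i \in \I} \q(k,i) \leq dC\,\alpha_k$, whence $\tfrac{1}{dC} \sum_{k=n}^{m_n-1} \tl\alpha_k \leq T_n \leq \sum_{k=n}^{m_n-1} \tl\alpha_k$. By the definition of $\tilde N(n,x)$, the sum $\sum_{k=n}^{m_n-1} \tl\alpha_k$ is $< x$ but at least $x - \tl\alpha_{m_n}$, and $\tl\alpha_{m_n} \to 0$; hence $\sum_{k=n}^{m_n-1}\tl\alpha_k \to x$ and $T_n$ lies in a fixed compact interval $[a,b] \subset (0,\infty)$ for all large $n$.

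Now I would run a subsequence argument: to show the target ratio tends to $1$ it is enough to show that every subsequence has a further subsequence along which it converges to $1$. Along any subsequence, compactness of $\Upsilon$ together with $T_n \in [a,b]$ lets me extract a further subsequence along which $T_n \to T \in [a,b]$ and $\lambda(t(n) + \cdot) \to \lambda^*$ in $\Upsilon$; by Lemma~\ref{lem4}, $\lambda^* = \rho(\cdot) I$ with $\rho \geq 1/d$. Since $\lambda_{ii} \leq C$ uniformly, $\big| \int_0^{T_n} [\lambda(t(n)+\cdot)]_{ii}\,ds - \int_0^{T} [\lambda(t(n)+\cdot)]_{ii}\,ds \big| \leq C\,|T_n - T| \to 0$, while testing the $\Upsilon$-convergence against the constant function $f \equiv e_i$ on the fixed interval $[0,T]$ gives $\int_0^{T} [\lambda(t(n)+\cdot)]_{ii}\,ds \to \int_0^T \rho(s)\,ds$. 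Combining, $\int_{t(n)}^{t(m_n)} \lambda_{ii} \to \int_0^T \rho$, and identically $\int_{t(n)}^{t(m_n)} \lambda_{jj} \to \int_0^T \rho$, where $\int_0^T \rho \geq T/d \geq a/d > 0$. Hence the ratio converges to $\int_0^T \rho \,/\, \int_0^T \rho = 1$ along the sub-subsequence, which proves the claim.

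The main obstacle is this last step: convergence in $\Upsilon$ controls integrals only over \emph{fixed} time intervals, whereas the quantities of interest run up to the random, moving endpoint $t(m_n)$. The resolution is to decouple the two sources of variation — replacing the upper limit $T_n$ by its limit $T$ costs only $C\,|T_n - T|$ because the integrands are uniformly bounded by $C$, and on the frozen interval $[0,T]$ the $\Upsilon$-topology supplies the needed convergence. The uniform two-sided comparison $\alpha_k \asymp \tl\alpha_k$ from Lemma~\ref{lem1}, which confines $T_n$ to a compact subinterval of $(0,\infty)$, is precisely what legitimizes this decoupling and keeps the limiting denominator $\int_0^T \rho$ positive.
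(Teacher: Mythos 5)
Your proof is correct, and it rests on the same two pillars as the paper's own argument: compactness of $\Upsilon$ with a subsequence/sub-subsequence extraction, and Lemma~\ref{lem4}'s characterization of limit points as $\rho(\cdot)I$ with $\rho \in [1/d, C]$. Where you genuinely diverge is in the treatment of the random horizon $\tilde N(n,x)$. The paper never takes a limit of the window length: for arbitrary $\epsilon \in (0,x)$ it calibrates two deterministic times $\underline{\tau}, \bar\tau$ by $\int_0^{\underline{\tau}}\rho(s)\,ds = (x-\epsilon)/d$ and $\int_0^{\bar\tau}\rho(s)\,ds = (x+\epsilon)/d$, uses the $\Upsilon$-convergence to show the component sums up to the deterministic-time counters $N(n_\ell,\underline{\tau})$ and $N(n_\ell,\bar\tau)$ converge to $(x\mp\epsilon)/d$, deduces from the definition of $\tilde N$ the sandwich $N(n_\ell,\underline{\tau}) < \tilde N(n_\ell,x) < N(n_\ell,\bar\tau)$ for large $\ell$, and sends $\epsilon \downarrow 0$; this yields the sharper intermediate statement (the paper's \eqref{eq-lc1-prf1}) that each component sum tends exactly to $x/d$. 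You instead show the window length $T_n = t(\tilde N(n,x)) - t(n)$ is confined to a compact subinterval of $(0,\infty)$ --- via the two-sided comparison $\alpha_k \le \tl\alpha_k \le dC\,\alpha_k$, a clean consequence of Lemma~\ref{lem1} that the paper does not state --- extract $T_n \to T$ along the sub-subsequence, and decouple the moving endpoint from the $\Upsilon$-limit using the uniform bound $\lambda_{ii} \le C$ together with testing against the constant function $e_i$ on the frozen interval $[0,T]$. This buys you a proof with no $\epsilon$-sandwich and no auxiliary counters $N(n,\cdot)$, at the cost of one extra compactness extraction and of obtaining only positivity of the common limit $\int_0^T\rho \ge a/d$ rather than its exact value $x/d$ (which your argument would in fact also recover by summing over $i \in \I$, since the total sum tends to $x$). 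Your preliminary identification of the discrete sums with $\int_{t(n)}^{t(\tilde N(n,x))}\lambda_{ii}(s)\,ds$ up to two vanishing boundary terms is sound (it needs $\q(k,i)\le C$ eventually, which you correctly draw from Lemma~\ref{lem1}, and $\nu(n,i)\to\infty$ from Assumption~\ref{cond-us}(i)) and is the same device the paper uses implicitly when passing from integrals of $\lambda_{ii}$ to the quantities $\underline{c}_\ell(i), \bar c_\ell(i)$ in \eqref{eq-lc1-prf2}.
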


\begin{proof}
Consider a sample path for which Assumption~\ref{cond-us} and Lemma~\ref{lem4} hold. Fix $x > 0$. 
By the definition of $\tilde N(n,x)$ and the fact $\alpha_n \to 0$ as $n \to 0$, we have
$$\sum_{i \in \I} \sum_{k = \nu(n,i)}^{\nu(\tilde N(n,x), i)} \alpha_k =  \sum_{k=n}^{\tilde N(n,x)} \sum_{i \in Y_k} \alpha_{\nu(k, i)} \to x \  \ \text{as $n \to \infty$}, $$ 
so the assertion of this lemma is equivalent to that 
\begin{equation} \label{eq-lc1-prf1}
   \lim_{n \to \infty}  \sum_{k = \nu(n,i)}^{\nu(\tilde N(n,x), i)} \alpha_k  = x/d,  \quad \forall \, i \in \I.
\end{equation}   
To prove (\ref{eq-lc1-prf1}), it suffices to show that any increasing sequence $\{n_\ell \}_{\ell \geq 1}$ of natural numbers has a subsequence $\{n'_\ell \}_{\ell \geq 1}$ along which $\sum_{k = \nu(n'_\ell,i)}^{\nu(\tilde N(n'_\ell,x), i)} \alpha_k  \to x/d$ as $\ell \to \infty$, for all $i \in \I$.
To this end, let $t_{n_\ell} \= t(n_\ell)$ and let $\lambda^*(\cdot) = \rho(\cdot) I$ be a limit point of the sequence $\{\lambda(t_{n_\ell} + \cdot)\}_{\ell \geq 1}$ in $\Upsilon$ (cf.\ Lemma~\ref{lem4}). We denote an associated convergent subsequence again by $\{\lambda(t_{n_\ell} + \cdot)\}_{\ell \geq 1}$, to simplify notation. Thus $\lambda(t_{n_\ell} + \cdot) \to \lambda^*$ and we need to prove $\sum_{k = \nu(n_\ell,i)}^{\nu(\tilde N(n_\ell,x), i)} \alpha_k  \to x/d$ as $\ell \to \infty$, for all $i \in \I$.

Choose $\epsilon \in (0, x)$. Since $\rho(s) \in [1/d, C]$ for all $s \geq 0$ by Lemma~\ref{lem4}, the two equations below define uniquely two constants $\underline{\tau} > 0$ and $\bar \tau > 0$, respectively:
$$  \int_0^{\underline{\tau}} \rho(s) ds =  \frac{x - \epsilon}{d}, \quad \int_0^{\bar{\tau}} \rho(s) ds = \frac{x + \epsilon}{d}.$$
Then, by the convergence of $\lambda(t_{n_\ell} + \cdot) \to \lambda^*$, for all $i \in \I$, as $\ell \to \infty$,
$$ \int_0^{\underline{\tau}} \lambda_{ii}(t_{n_\ell} + s) ds  \to  \frac{x - \epsilon}{d}, \ \ \ \int_0^{\bar{\tau}} \lambda_{ii}(t_{n_\ell} + s) ds  \to  \frac{x + \epsilon}{d}.$$
In view of Lemma~\ref{lem1} and the definition of $\lambda$ [cf.\ \eqref{eq-def-lambda}], this implies that 
\begin{equation} 
 \underline{c}_\ell(i)   \=  \, \sum_{k = \nu(n_\ell, i)}^{\nu(N(n_\ell,  \underline{\tau}), i)} \alpha_k \, \to  \, \frac{x - \epsilon}{d}, \qquad
  \bar c_\ell(i)    \=  \, \sum_{k = \nu(n_\ell, i)}^{\nu(N(n_\ell,  \bar{\tau}),  i)} \alpha_k \, \to \, \frac{x + \epsilon}{d}, \label{eq-lc1-prf2}
\end{equation} 
and hence 
\begin{align} 
  \sum_{k=n_\ell}^{N(n_\ell,\underline{\tau})} \sum_{i \in Y_k} \alpha_{\nu(k, i)} & =  \sum_{i \in \I} \underline{c}_\ell(i) \, \to \, x - \epsilon, \label{eq-lc1-prf3a} \\    \sum_{k=n_\ell}^{N(n_\ell,\bar \tau)} \sum_{i \in Y_k} \alpha_{\nu(k, i)} & = \sum_{i \in \I} \bar{c}_\ell(i) \, \to \, x + \epsilon. \label{eq-lc1-prf3b}
\end{align}
From (\ref{eq-lc1-prf3a})-(\ref{eq-lc1-prf3b}) and the definition of $\tilde N(n, x)$, it follows that for all $\ell$ sufficiently large,
$N(n_\ell,  \underline{\tau}) <  \tilde N(n_\ell, x) <   N(n_\ell,  \bar{\tau}).$
Consequently, for all $\ell$ sufficiently large,
\begin{equation} \label{eq-lc1-prf4}
 \underline{c}_\ell(i)  \leq   \sum_{k = \nu(n_\ell,  i)}^{\nu(\tilde N(n_\ell,x), i)} \alpha_k \leq \bar c_\ell(i), \quad \forall \, i \in \I.
 \end{equation}
 Using (\ref{eq-lc1-prf4}), (\ref{eq-lc1-prf2}), and the arbitrariness of $\epsilon$, we obtain that
 $ \sum_{k = \nu(n_\ell, i)}^{\nu(\tilde N(n_\ell,x), i)} \alpha_k \to x/d$ for all $i \in \I$, as $\ell \to \infty$. This proves (\ref{eq-lc1-prf1}), establishing the lemma.
 \end{proof}
 
By combining Lemma~\ref{lem-cvg-1} with the reasoning presented in the proofs of \cite[Thm.\ 3.2]{Bor98} and \cite{Bor00} (see also Remark~\ref{rmk-2}a), we obtain the following lemma concerning the limit of $\tl \lambda(t + \cdot)$ as $t \to \infty$. The proof details are similar to those for Lemma~\ref{lem4} and are therefore omitted.
 
\begin{lemma}  \label{lem-cvg-2}
Almost surely,  as $t \to \infty$, $\tl \lambda(t + \cdot)$ converges in $\tl \Upsilon$ to the constant function $\bar \lambda(\cdot) \equiv \tfrac{1}{d} I$.
\end{lemma}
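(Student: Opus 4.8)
The plan is to mirror the proof of Lemma~\ref{lem4}, exploiting the compactness of $\tl\Upsilon$ and reducing the claim to the identification of subsequential limits. Fix a sample path on which Assumption~\ref{cond-us}, Lemma~\ref{lem1}, and Lemma~\ref{lem-cvg-1} (hence in particular \eqref{eq-lc1-prf1}) all hold; these form a set of probability $1$. Since $\tl\Upsilon$ is compact metrizable, it suffices to show that for every sequence $t_n \uparrow \infty$, each limit point of $\{\tl\lambda(t_n+\cdot)\}$ equals the constant function $\tfrac1d I$. So suppose $\tl\lambda(t_n+\cdot)\to\tl\lambda^*$ in $\tl\Upsilon$ along some subsequence (not relabeled). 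Because Assumption~\ref{cond-us}(i) gives $\nu(n,i)\to\infty$ and $\alpha_k\to 0$, we have $\alpha_{\nu(n,i)}\to 0$ and hence $\tl\alpha_n=\sum_{i\in Y_n}\alpha_{\nu(n,i)}\to 0$; I may therefore replace each $t_n$ by the nearest grid point $\tl t(m_n)$ (with $m_n\to\infty$) without affecting any of the limiting integrals, so assume $t_n=\tl t(m_n)$.

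The crux is an explicit evaluation of the integrals that define the topology of $\tl\Upsilon$. By \eqref{eq-tlambda}, on each interval $[\tl t(k),\tl t(k+1))$ the $i$th diagonal entry of $\tl\lambda$ is constant and equal to $\tl\q(k,i)$, while the interval has length $\tl t(k+1)-\tl t(k)=\tl\alpha_k$; hence
\begin{equation*}
\int_{\tl t(k)}^{\tl t(k+1)}\tl\lambda_{ii}(s)\,ds=\tl\q(k,i)\,\tl\alpha_k=\alpha_{\nu(k,i)}\ind\{i\in Y_k\}.
\end{equation*}
Fix $x>0$ and $i\in\I$. Recalling that $\tl N(m_n,x)$ is exactly the index at which the accumulated time $\sum_{k=m_n}^{m}\tl\alpha_k$ first reaches $x$, summing the displayed identity over the full intervals contained in $[\tl t(m_n),\tl t(m_n)+x)$ gives
\begin{equation*}
\int_0^{x}\tl\lambda_{ii}(\tl t(m_n)+s)\,ds=\sum_{k=m_n}^{\tl N(m_n,x)}\alpha_{\nu(k,i)}\ind\{i\in Y_k\}+o(1),
\end{equation*}
where the $o(1)$ absorbs both the single partial interval at the right end and the endpoint term of the sum, each of size $\leq\tl\alpha_{\tl N(m_n,x)}\to 0$. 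Since the summand $\alpha_{\nu(k,i)}\ind\{i\in Y_k\}$ runs through consecutive values $\alpha_\ell$ as $k$ ranges over the update times of component $i$, this sum differs from $\sum_{k=\nu(m_n,i)}^{\nu(\tl N(m_n,x),i)}\alpha_k$ by at most one boundary term $\alpha_{\nu(m_n,i)}\to 0$. Therefore, by \eqref{eq-lc1-prf1},
\begin{equation*}
\int_0^{x}\tl\lambda_{ii}(\tl t(m_n)+s)\,ds\ \longrightarrow\ \frac{x}{d},\qquad\forall\,x>0,\ \forall\,i\in\I.
\end{equation*}

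Finally I combine this with the convergence $\tl\lambda(t_n+\cdot)\to\tl\lambda^*$. Taking the test function $f\equiv e_i$ in the defining maps $\psi_{x,f}$ of the topology of $\Upsilon$ shows $\int_0^x\tl\lambda_{ii}(\tl t(m_n)+s)\,ds\to\int_0^x\tl\lambda^*_{ii}(s)\,ds$, so $\int_0^x\tl\lambda^*_{ii}(s)\,ds=x/d$ for all $x>0$ and all $i\in\I$. By the Lebesgue differentiation theorem \cite[Thm.\ 7.2.1]{Dud02} this forces $\tl\lambda^*_{ii}(s)=1/d$ a.e., i.e.\ $\tl\lambda^*=\tfrac1d I$; as this holds for every subsequential limit, Lemma~\ref{lem-cvg-2} follows. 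I expect the main obstacle to be the bookkeeping in the second step: passing between the $\tl t$-time grid (on which $\tl\lambda$ is piecewise constant) and the $\alpha$-indexed sums appearing in \eqref{eq-lc1-prf1}, and in particular verifying that the partial-interval and endpoint corrections are genuinely $o(1)$. Everything else is a routine adaptation of the argument for Lemma~\ref{lem4}, now made sharper by the fact that Lemma~\ref{lem-cvg-1} pins down the common limit value $x/d$ rather than merely the equality of the diagonal entries.
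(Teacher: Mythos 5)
Your proof is correct and is essentially the argument the paper gestures at but omits: the same skeleton as Lemma~\ref{lem4} (compactness of the function space, identification of every subsequential limit, Lebesgue differentiation), anchored on Lemma~\ref{lem-cvg-1}. Your execution is if anything slightly more direct than the paper's sketch --- by evaluating $\int_0^x \tl\lambda_{ii}(\tl t(m_n)+s)\,ds$ explicitly as an $\alpha$-indexed sum and invoking the quantitative form \eqref{eq-lc1-prf1} to pin each diagonal integral to the exact value $x/d$, you avoid re-running the Borkar L'H\^{o}pital reasoning and the trace-one normalization of $\tl\Upsilon$, and your boundary bookkeeping (the partial-interval and endpoint corrections of size at most $\tl\alpha_{\tl N(m_n,x)}$, plus the single term $\alpha_{\nu(m_n,i)}$) is sound, since $\tl\alpha_k \to 0$ follows from $\nu(k,i)\to\infty$ and $\alpha_k \to 0$, and the probability-one event in the proof of Lemma~\ref{lem-cvg-1} does not depend on $x$.
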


\begin{remark} \rm \label{rmk-3}
(a) The condition employed in Borkar \cite{Bor00} instead of Assumption~\ref{cond-us}(ii) is that for each $x > 0$, $\lim_{n \to \infty} \frac{ \sum_{k = \nu(n,i)}^{\nu(\tilde N(n,x), i)} \alpha_k}{ \sum_{k = \nu(n,j)}^{\nu(\tilde N(n,x), j)} \alpha_k}$ exists a.s., for all $i, j \in \I$. Under Assumptions~\ref{cond-ss} and~\ref{cond-us}(i), these limits, if they exist, have to equal $1$ (cf.\ the proof of \cite[Thm.\ 3.2]{Bor98}), so this condition is equivalent to Lemma~\ref{lem-cvg-1}, as mentioned earlier.\\*[2pt]
(b) Although we will not need this fact in our subsequent analysis, we can also show that under Assumptions~\ref{cond-ss} and~\ref{cond-us}(i), the preceding condition from \cite{Bor00} implies Assumption~\ref{cond-us}(ii) (hence, in view of Lemma~\ref{lem-cvg-1}, the two are equivalent). The proof is similar to that of Lemma~\ref{lem-cvg-1}, but with the roles of $\tilde N(n,\cdot)$ and $N(n,\cdot)$ reversed, using Lemma~\ref{lem-cvg-2} instead of Lemma~\ref{lem4}, and using also the compactness of the space $\Upsilon$.
\qed
\end{remark}

\section{Stability Analysis} \label{sec-stab}
In this section, we prove Theorem~\ref{thm-1} on the boundedness of the iterates $\{x_n\}$. Employing the method introduced by Borkar and Meyn \cite{BoM00} and recounted in the book by Borkar \cite[Chap.\ 3.2]{Bor09}, we study scaled iterates and relate their asymptotic behavior to solutions of specific limiting ODEs involving the function $h_\infty$. Our proof follows a structure similar to the stability analysis in \cite[Chap.\ 3.2]{Bor09} for synchronous algorithms and is divided into two sets of intermediate results. The first group, presented in Section~\ref{sec-stab-prf1}, shows how scaled iterates progressively `track' solutions of ODEs with corresponding scaled functions $h_c$. The second group, in Section~\ref{sec-stab-prf2}, establishes a stability-related solution property for these ODEs as the scale factor $c$ tends to infinity. With these results in place, our proof then concludes similarly to the approach in \cite[Chap.\ 3.2]{Bor09}.

Throughout our proof, we will heavily rely on results from \cite[Chap.\ 3.2]{Bor09} to keep our explanation succinct and avoid repetition. Our main emphasis will be on elements that differ from the derivations in \cite[Chap.\ 3.2]{Bor09}, yet are essential to our treatment of the asynchronous algorithm.

\subsection{Relating Scaled Iterates to ODE Solutions Involving\\ Scaled Functions $h_c$} \label{sec-stab-prf1}

Consider algorithm \eqref{eq-alg0} in its equivalent form \eqref{eq-alg} and the corresponding continuous trajectory $\bar x(t)$ defined before \eqref{eq-alg}. Following the approach of \cite{BoM00} and \cite[Chap.\ 3.2]{Bor09} for stability analysis, we work with a scaled version of $\bar x(\cdot)$. To define it, let $T > 0$. With $m(0) \=0$ and $T_0 \= 0$, define recursively, for $n \geq 0$,
\begin{equation} \label{eq-def-tm}
   m(n+1) \= \min \{ m : t(m) \geq T_n + T \}, \quad T_{n+1} \= t\big(m(n+1)\big).
\end{equation}
Here $t(m) = \sum_{k=0}^{m-1} \alpha_k$ with $t(0) = 0$ as defined previously.
This divides the time axis $[0, \infty)$ into intervals $[T_n, T_{n+1})$, $n \geq 0$, whose lengths are about $T$ and approach $T$ as $n \to \infty$. For simplicity in the expressions of certain bounds that involve $\sup_n \alpha_n$, we assume $\sup_n \alpha_n \leq 1$ in what follows, so that each interval is at most $T+1$ in length. The choice of the arbitrary value of $T$ will be determined later based on the properties of the ODEs involved in the proof. 

We proceed by defining a piecewise linear function $\hat x(\cdot)$ by scaling $\bar x(t)$ as follows. Let $a \vee b : = \max \{ a, b\}$.
For each $n \geq 0$, with $r(n) \= \| x_{m(n)} \| \vee 1$, define
\begin{equation} \label{eq-hx0}
\hat x(t) \= \bar x(t) / r(n)   \ \ \ \text{for } t \in [T_n, T_{n+1}).
\end{equation}
Note that $\hat x(\cdot)$ can have `jumps' at times $T_1, T_2, \ldots$. For notational convenience, when analyzing the behavior of $\hat x(t)$ on the semi-closed interval $[T_n, T_{n+1})$, we introduce a `copy' denoted by $\hat x^n(t)$ defined on the closed interval $[T_n, T_{n+1}]$ as follows:
$\hat x^n(t) \= \hat x(t)$ for $t \in [T_n, T_{n+1})$, and $\hat x^n(T_{n+1}) \= \hat x(T_{n+1}^-) \= \lim_{t \uparrow T_{n+1}} \hat x(t)$.

A key intermediate result necessary to relate the scaled trajectory $\hat x(t)$ to specific ODE solutions is to establish $\sup_t \| \hat x(t) \| < \infty$. For synchronous algorithms, this is done in \cite[Chap.\ 3.2]{Bor09} in several steps, starting with the proof of 
$ \sup_{t} \E [ \| \hat x(t) \|^2] < \infty$. It is achieved by deriving the bound \cite[Chap.\ 3, Lem.\ 4]{Bor09}
\begin{equation} \label{eq-bd-hatx}
 \E \left[ \| \hat x^n(t(k+1)) \|^2 \right]^{\frac{1}{2}} \leq e^{K_1(T+1)} ( 1 + K_2 (T+1) ), 
\end{equation} 
for all $m(n) \leq k < m(n+1)$, where $K_1, K_2$ are constants independent of $n$.
In the asynchronous case here, we will take a similar approach. However, by \eqref{eq-alg} and the definition of $\hat x(\cdot)$, we have that for $k$ with $m(n) \leq k < m(n+1)$,
\begin{align}  \label{eq-hx}
  \hat x^n(t(k+1)) & = \hat x^n(t(k)) + \alpha_k \Lambda_k h_{r(n)}(\hat x^n(t(k)))  
    + \alpha_k \Lambda_k \hat M_{k+1} + \alpha_k \Lambda_k \hat \epsilon_{k+1},
\end{align}  
where 
$\hat M_{k+1} \= M_{k+1} / r(n)$, $\hat \epsilon_{k+1} \= \epsilon_{k+1}/r(n)$, and 
$$  \Lambda_k \= \text{diag} \big( \q(k, 1), \q(k, 2), \ldots, \q(k, d) \big).$$
While $\{\q(k,i)\}_{k \geq 0,i \in \I}$ are bounded a.s.\ (Lemma~\ref{lem1}), they need not be bounded by a deterministic constant; while by Assumption~\ref{cond-ns}(ii) $\|\hat \epsilon_{k+1} \| \leq \delta_{k+1} ( 1 + \| \hat x^n(t(k)) \|)$ with $\delta_{k+1} \to 0$ a.s., there is no requirement on the conditional variance of $\delta_{k+1}$. These prevent us from applying directly the proof arguments of \cite[Chap.\ 3, Lem.\ 4]{Bor09}. 

To work around this issue, we now introduce `better-behaved' auxiliary processes $\tilde x^n(t)$ on $[T_n, T_{n+1}]$ for $n \geq 0,$ which have more favorable analytical properties and will, almost surely, eventually coincide with $\hat x^n(\cdot)$ as $n \to \infty$. Let $C$ be the constant given by Lemma~\ref{lem1}, and fix $\bar a > 0$. 
For $n \geq 0$, let
\begin{align} 
k_{n,1}  &  \= \min \left\{  k : \max_{i \in \I} \q(k, i) > C, \, m(n) \leq k < m(n+1) \right\}, \notag \\ 
 k_{n,2}  & \= \min \left\{  k :  \delta_k  > \bar a, \  m(n)+1 \leq k \leq m(n+1) \right\},  \notag 
\end{align}
with $k_{n,1} \= \infty$ and $k_{n,2} \= \infty$ if the sets in their respective defining equations are empty. Let $k_n \= k_{n,1} \wedge k_{n,2}$.
For $n \geq 0$, define $\tilde x^n(\cdot)$ on $[T_n, T_{n+1}]$ as follows. 
Let $\tilde x^n(T_n) \= \hat x(T_n)$. For $m(n) \leq k < m(n+1)$, let
\begin{align}  \label{eq-tx}
 \tilde x^n(t(k+1)) & = \tilde x^n(t(k)) + \alpha_k \tilde \Lambda_k h_{r(n)}(\tilde x^n(t(k)))  + \alpha_k \tilde M_{k+1} + \alpha_k \tilde \epsilon_{k+1},
\end{align}   
where $\tilde \Lambda_k \= \ind\{k < k_n\} \Lambda_k$, $\tilde M_{k+1}  \=  \tl \Lambda_k \hat M_{k+1}$, and 
\begin{equation} \label{eq-txa}
  \tilde \epsilon_{k+1} \= \tl \Lambda_k \cdot \ind\{k+1 < k_{n,2}\} \hat \epsilon_{k+1}.
\end{equation}
Finally, on the interval $(t(k), t(k+1))$, let $\tl x^n(t)$ be the linear interpolation between $\tl x^n(t(k))$ and $\tl x^n(t(k+1))$.
As can be seen, in each time interval $[T_n, T_{n+1}]$, 
\begin{align} 
\tilde x^n(t) = \tl x^n (t(k_n)),  \quad  \text{if} \  t \geq t(k_n); \qquad \ \ \, \notag \\
 \tilde x^n(t)   = \hat x^n (t) \ \  \text{for} \ t \leq t(k_n),  \ \ \ \text{if} \ k_n = k_{n,1} < k_{n,2};  \label{eq-tx-hx1} \\
   \tilde x^n(t)  = \hat x^n (t) \ \  \text{for} \ t \leq t(k_n-1),  \    \text{if} \ k_n = k_{n,2} \leq k_{n,1}.  \label{eq-tx-hx2}
\end{align}
Moreover, since $\max_{i \in \I} \q(n, i) \leq C$ and $\delta_n \leq \bar a$ for sufficiently large $n$ (a.s.) by Lemma~\ref{lem1} and Assumption~\ref{cond-ns}(ii), we have $\tl x^n(\cdot) = \hat x^n(\cdot)$ for sufficiently large $n$ (a.s.).

By definition $k_{n,1}, k_{n,2},$ and $k_n$ are stopping times w.r.t.\ $\{\F_k\}$, so $\ind\{k < k_n\}$ is $\F_k$-measurable and $\ind\{k+1 < k_{n,2}\}$ is $\F_{k+1}$-measurable. Hence $\tl \Lambda_k$ is $\F_k$-measurable, whereas $\tl M_{k+1}$ and $\tl \epsilon_{k+1}$ are $\F_{k+1}$-measurable. Also by the definition of $k_n$, for $m(n) \leq k < k_n$, the diagonal entries of the matrix $\Lambda_k$ are all bounded by $C$. Let $\bar C$ be an upper bound on $\| \Lambda \|$ for all such diagonal matrices $\Lambda$, where $\| \Lambda\|$ is the matrix norm w.r.t.\ the norm on $\R^{d}$. 
Then
\begin{equation} \label{eq-Lambda}
   \| \tilde \Lambda_k \| \leq \bar C,  \qquad \forall \,  m(n) \leq k < m(n+1).
\end{equation}   
Moreover, by the construction of $\tilde x^n$ in \eqref{eq-tx}-\eqref{eq-txa}, we have the following.

\begin{lemma} \label{lem2a}
For $n \geq 0$ and all $k$ with $m(n) \leq k < m(n+1)$, we have $\E [ \| \tilde M_{k+1} \| ] < \infty$, $\E [ \tilde M_{k+1} \nmid \F_k ] = 0$ a.s., and
\begin{align} 
   \E [ \| \tilde M_{k+1} \|^2 \mid \F_k ]  & \leq \bar C^2 K ( 1 + \| \tilde x^n(t(k)) \|^2) \ \ a.s., \label{eq-prf-noise1} \\
     \| \tilde \epsilon_{k+1} \|  & \leq \bar C (\bar a \wedge \delta_{k+1}) ( 1 + \| \tilde x^n(t(k)) \|) \ \ a.s., \label{eq-prf-noise2}
\end{align}
where $K$ is the constant in Assumption~\ref{cond-ns}(i).
\end{lemma}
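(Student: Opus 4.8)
The plan is to exploit the truncation built into $\tilde x^n$ by the stopping time $k_n = k_{n,1} \wedge k_{n,2}$, which is designed precisely to control the two quantities lacking deterministic bounds: the matrix norm $\|\Lambda_k\|$ (via $k_{n,1}$) and the noise coefficient $\delta_{k+1}$ (via $k_{n,2}$). I would first dispose of the indices $m(n) \leq k < m(n+1)$ with $k \geq k_n$: there $\tilde \Lambda_k = \ind\{k < k_n\}\Lambda_k$ vanishes, hence $\tilde M_{k+1} = 0$ and $\tilde \epsilon_{k+1} = 0$, and all four assertions hold trivially. It therefore suffices to treat $m(n) \leq k < k_n$. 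For these indices, \eqref{eq-tx-hx1}--\eqref{eq-tx-hx2} supply the crucial identity $\tilde x^n(t(k)) = \hat x^n(t(k)) = x_k/r(n)$, and \eqref{eq-Lambda} gives $\|\tilde \Lambda_k\| \leq \bar C$. Together with $r(n) = \|x_{m(n)}\| \vee 1 \geq 1$, this identity is exactly what converts the a priori bounds phrased in terms of $x_k$ and $r(n)$ into the claimed bounds in terms of $\tilde x^n(t(k))$.

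For the martingale properties, recall that $r(n)$ is $\F_{m(n)}$-measurable, hence $\F_k$-measurable for $k \geq m(n)$, and that $\tilde \Lambda_k$ is $\F_k$-measurable (as noted before the lemma). Since $\|\tilde \Lambda_k\| \leq \bar C$ and $1/r(n) \leq 1$, integrability follows from $\E[\|\tilde M_{k+1}\|] \leq \bar C\,\E[\|M_{k+1}\|]/r(n) \leq \bar C\,\E[\|M_{k+1}\|] < \infty$ by Assumption~\ref{cond-ns}(i). Pulling the $\F_k$-measurable factor $\tilde\Lambda_k/r(n)$ out of the conditional expectation yields $\E[\tilde M_{k+1}\mid\F_k] = (\tilde\Lambda_k/r(n))\,\E[M_{k+1}\mid\F_k] = 0$ a.s. For the conditional second moment, the same factoring together with $\|\tilde\Lambda_k\| \leq \bar C$ gives $\E[\|\tilde M_{k+1}\|^2\mid\F_k] \leq (\bar C^2/r(n)^2)\,\E[\|M_{k+1}\|^2\mid\F_k] \leq (\bar C^2 K/r(n)^2)(1+\|x_k\|^2)$ by Assumption~\ref{cond-ns}(i). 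On the range $k < k_n$, substituting $\|x_k\|^2 = r(n)^2\|\tilde x^n(t(k))\|^2$ and using $1/r(n)^2 \leq 1$ produces exactly \eqref{eq-prf-noise1}.

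For the noise bound \eqref{eq-prf-noise2}, I would begin from $\|\tilde\epsilon_{k+1}\| \leq \bar C\,\ind\{k+1<k_{n,2}\}\,\|\hat\epsilon_{k+1}\|$ and observe that the indicator is the whole point: if $k+1 < k_{n,2}$, then by the definition of $k_{n,2}$ as the first index with $\delta_k > \bar a$ we have $\delta_{k+1} \leq \bar a$, so $\delta_{k+1} = \bar a \wedge \delta_{k+1}$. Combining this with $\|\hat\epsilon_{k+1}\| \leq \delta_{k+1}(1+\|x_k\|)/r(n)$ from Assumption~\ref{cond-ns}(ii), the identity $\|x_k\|/r(n) = \|\tilde x^n(t(k))\|$ (valid since $k < k_n$), and $1/r(n) \leq 1$ then gives \eqref{eq-prf-noise2}; when the indicator is $0$ the bound is trivial. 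The only genuine care required is bookkeeping: one must check that whenever $\tilde M_{k+1}$ or $\tilde\epsilon_{k+1}$ is nonzero we are indeed in the regime $k < k_n$ where $\tilde x^n = \hat x^n$, so that the scaling relation $x_k = r(n)\,\tilde x^n(t(k))$ is legitimate. I expect this coincidence to be the main (though still routine) step, since it is what ties the truncation thresholds $C$ and $\bar a$ to the clean constants $\bar C^2 K$ and $\bar C$ appearing in the asserted inequalities.
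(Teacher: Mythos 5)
Your proof is correct and follows essentially the same route as the paper's: both rest on the $\F_k$-measurability of $\tilde\Lambda_k$ and $r(n)$, the bound $\|\tilde\Lambda_k\|\leq \bar C$ from \eqref{eq-Lambda}, Assumption~\ref{cond-ns}, and the coincidence $\tilde x^n(t(k)) = \hat x^n(t(k))$ on $\{k < k_n\}$ from \eqref{eq-tx-hx1}--\eqref{eq-tx-hx2}, with your explicit case split on $\{k \geq k_n\}$ merely making verbose what the paper absorbs into the factor $\ind\{k<k_n\}$. One cosmetic slip: writing $\E[\|\tilde M_{k+1}\|] \leq \bar C\,\E[\|M_{k+1}\|]/r(n)$ divides an expectation by a random variable; you should instead apply the pointwise bound $\|\tilde M_{k+1}\| \leq \bar C\,\|M_{k+1}\|/r(n) \leq \bar C\,\|M_{k+1}\|$ before taking expectations, which is clearly what you intended.
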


\begin{proof}
For $m(n) \leq k < m(n+1)$, since $\hat M_{k+1} = M_{k+1}/r(n)$ and $\hat x^n(t(k)) = x_k/r(n)$, where $r(n) \geq 1$, we have, by Assumption~\ref{cond-ns}(i) on $M_{k+1}$, that 
$$\E [ \| \hat M_{k+1} \| ] < \infty, \ \ \  \E [ \hat M_{k+1} \nmid \F_k ] = 0 \ \ \text{and} \ \ \E [ \| \hat M_{k+1} \|^2 \nmid \F_k ] \leq K ( 1 + \| \hat x^n(t(k)) \|^2) \ \  a.s.$$
Since $\tilde M_{k+1} = \tl \Lambda_k \hat M_{k+1}$, by \eqref{eq-Lambda}, we have
$\E [ \| \tilde M_{k+1} \| ] < \infty$, $\E [ \tilde M_{k+1} \mid \F_k ] = 0$ a.s., and moreover, by the definition of $\tl \Lambda_k$,
\begin{equation} \label{eq-l2a-prf1}
\E [ \| \tilde M_{k+1} \|^2 \mid \F_k ] \leq \bar C^2 \ind\{k < k_n\} \, \E [ \| \hat M_{k+1} \|^2 \mid \F_k ].
\end{equation}
If $k < k_n$, $\tilde x^n(t(k)) = \hat x^n(t(k))$ by the definition of $\tilde x^n(\cdot)$ [cf.\ (\ref{eq-tx-hx1})-(\ref{eq-tx-hx2})]. Therefore, on $\{k < k_n\}$, a.s.,
\begin{equation} 
 \E [ \| \hat M_{k+1} \|^2 \!\mid \F_k ] \leq K ( 1 + \| \hat x^n(t(k)) \|^2) = K ( 1 + \| \tl x^n(t(k)) \|^2), \notag
 \end{equation} 
which together with (\ref{eq-l2a-prf1}) proves (\ref{eq-prf-noise1}).
Finally, (\ref{eq-prf-noise2}) is a direct consequence of (\ref{eq-Lambda}), Assumption~\ref{cond-ns}(ii), and the definitions of $k_{n,2}$ and $\tl \epsilon_{k+1}$ [cf.\ (\ref{eq-txa})]. 
\end{proof}

 By (\ref{eq-tx}) and (\ref{eq-Lambda}), for all $m(n) \leq k < m(n+1)$, 
 \begin{align} \label{eq-lem1-prf1}
  \| \tilde x^n(t(k+1)) \| & \leq \| \tilde x(t(k))\|  + \alpha_k \bar C \, \| h_{r(n)}(\tilde x^n(t(k))) \|  + \alpha_k \| \tilde M_{k+1}\|  + \alpha_k \| \tilde \epsilon_{k+1} \|.
 \end{align}
 Using (\ref{eq-lem1-prf1}) and Lemma~\ref{lem2a}, we can now follow, step by step, the proof arguments for \cite[Chap.\ 3, Lem.\ 4]{Bor09} to derive the following bound, analogous to the bound \eqref{eq-bd-hatx}: For all $k$ with $m(n) \leq k < m(n+1)$,
$$  \E \left[ \| \tilde x^n(t(k+1)) \|^2 \right]^{\frac{1}{2}} \leq e^{K_1(T+1)} ( 1 + K_2 (T+1) ),$$
for some suitable constants $K_1, K_2$ independent of $n$. 
With this bound, we obtain Lemma~\ref{lem2}(i) below. It then immediately leads to Lemma~\ref{lem2}(ii), by applying Lemma~\ref{lem2a} to $\{\tl M_{k}\}_{k \geq 1}$ and a convergence theorem for square-integrable martingales \cite[Prop.\ VII-2-3(c)]{Nev75} to the martingale $\sum_{k=0}^{m-1} \alpha_k \tilde M_{k+1}, m \geq 1$, similarly to the proof of \cite[Chap.\ 3, Lem.\ 5]{Bor09}. 
 
\begin{lemma} \label{lem2} The following hold:
\begin{enumerate}
\item[\rm (i)] $\sup_{n \geq 0} \sup_{t \in [T_n, T_{n+1}]} \E [ \| \tilde x^n(t) \|^2 ] < \infty$.
\item[\rm (ii)] Almost surely, $\tilde \zeta_m \= \sum_{k=0}^{m-1} \alpha_k \tilde M_{k+1}$ converges in $\R^{d}$ as $m \to \infty$.
\end{enumerate}
\end{lemma}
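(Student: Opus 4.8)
The plan is to prove (i) by iterating the pathwise inequality \eqref{eq-lem1-prf1} across a single block $[T_n,T_{n+1}]$ after passing to $L^2(\Omega)$ norms, and then to obtain (ii) as a routine consequence of (i) via square-integrable martingale convergence. The construction preceding the lemma has arranged that every coefficient in \eqref{eq-lem1-prf1} is controlled by a deterministic constant: $\|\tilde\Lambda_k\|\le\bar C$ by \eqref{eq-Lambda}, the conditional second moment of $\tilde M_{k+1}$ obeys \eqref{eq-prf-noise1}, and $\|\tilde\epsilon_{k+1}\|\le\bar C\bar a\,(1+\|\tilde x^n(t(k))\|)$ by \eqref{eq-prf-noise2} (using $\bar a\wedge\delta_{k+1}\le\bar a$). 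I would also record that $h_{r(n)}$ is Lipschitz with modulus $L$ and $\|h_{r(n)}(0)\|=\|h(0)\|/r(n)\le\|h(0)\|$ because $r(n)\ge1$, whence $\|h_{r(n)}(x)\|\le\|h(0)\|+L\|x\|$ uniformly in $n$.

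For (i), set $\phi_k\=\E[\|\tilde x^n(t(k))\|^2]^{1/2}$; an easy induction (starting from the deterministically bounded $\tilde x^n(T_n)$ and adding terms of finite conditional second moment) shows each $\phi_k$ is finite, so the following recursion is meaningful. Taking the $\R^d$-norm in \eqref{eq-lem1-prf1}, substituting the three bounds above, and applying Minkowski's inequality in $L^2(\Omega)$ term by term yields $\phi_{k+1}\le(1+\alpha_k A)\phi_k+\alpha_k B$ with $A\=\bar C(L+\sqrt K+\bar a)$ and $B\=\bar C(\|h(0)\|+\sqrt K+\bar a)$, constants depending only on $\bar C,L,K,\bar a,\|h(0)\|$ and hence independent of $n$ and $k$; here bounding $\E[\|\tilde M_{k+1}\|^2]\le\bar C^2K(1+\phi_k^2)$ and $(1+\phi_k^2)^{1/2}\le1+\phi_k$ is used. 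Taking norms before squaring also avoids any cross term, which matters because $\tilde\epsilon_{k+1}$ is only $\F_{k+1}$-measurable. Since $\phi_{m(n)}\le\|\tilde x^n(T_n)\|=\|x_{m(n)}\|/r(n)\le1$ uniformly in $n$, a discrete Gronwall step gives $\phi_{k+1}+B/A\le\exp\!\big(A\sum_{j=m(n)}^{k}\alpha_j\big)(\phi_{m(n)}+B/A)$, and $\sum_{j=m(n)}^{k}\alpha_j\le T_{n+1}-T_n\le T+1$ by construction, producing a bound of the form $e^{K_1(T+1)}(1+K_2(T+1))$ uniformly in $n$ and in $m(n)\le k<m(n+1)$. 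For $t\in(t(k),t(k+1))$, $\tilde x^n(t)$ is a convex combination of its endpoints, so $\E[\|\tilde x^n(t)\|^2]^{1/2}\le\phi_k+\phi_{k+1}$ falls under the same bound, giving (i).

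For (ii), I would read $\{\tilde M_{k+1}\}_{k\ge0}$ as a single $k$-indexed sequence, each $\tilde M_{k+1}$ being defined with $n=n(k)$, the unique block index satisfying $m(n)\le k<m(n+1)$. By Lemma~\ref{lem2a} this is a square-integrable $\{\F_k\}$-martingale difference sequence, so $\tilde\zeta_m=\sum_{k=0}^{m-1}\alpha_k\tilde M_{k+1}$ is an $L^2$-martingale, and orthogonality of the differences gives $\E[\|\tilde\zeta_m\|^2]=\sum_{k=0}^{m-1}\alpha_k^2\,\E[\|\tilde M_{k+1}\|^2]$. Bounding each summand by \eqref{eq-prf-noise1} and then part (i), $\E[\|\tilde M_{k+1}\|^2]\le\bar C^2K(1+\E[\|\tilde x^{n(k)}(t(k))\|^2])$ is uniformly bounded in $k$, so $\sup_m\E[\|\tilde\zeta_m\|^2]\le K_3\sum_k\alpha_k^2<\infty$ by Assumption~\ref{cond-ss}(i). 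The convergence theorem for square-integrable martingales \cite[Prop.\ VII-2-3(c)]{Nev75} then yields a.s.\ convergence of $\tilde\zeta_m$ in $\R^d$.

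The genuinely load-bearing point---already handled by the truncation construction rather than by the present lemma---is the uniformity of $A$, $B$, and the initial bound across all blocks $n$. This is exactly what the stopping times $k_{n,1}$ (which force $\|\tilde\Lambda_k\|\le\bar C$) and $k_{n,2}$ (which force $\delta\le\bar a$ in \eqref{eq-prf-noise2}) provide, and it is precisely what fails for the untruncated $\hat x^n$, whose coefficients $\q(k,i)$ and noise magnitudes $\delta_{k+1}$ are bounded only almost surely and not by deterministic constants. With Lemma~\ref{lem2a} in hand the remaining work is the standard Borkar--Meyn Gronwall-plus-martingale scheme, so I anticipate no serious further obstacle; the only points requiring care are the a priori finiteness of the $\phi_k$ and the correct reading of $\{\tilde M_{k+1}\}$ as one martingale difference sequence spanning the blocks.
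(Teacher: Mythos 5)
Your proposal is correct and follows essentially the same route as the paper, which simply invokes \eqref{eq-lem1-prf1} and Lemma~\ref{lem2a} and then defers to the arguments of \cite[Chap.\ 3, Lems.\ 4 and 5]{Bor09} (the $L^2$ recursion with discrete Gronwall for part (i), and the square-integrable martingale convergence theorem \cite[Prop.\ VII-2-3(c)]{Nev75} for part (ii)); you have merely written out the details the paper leaves to the reference. Your observations about taking norms before squaring to sidestep the $\F_{k+1}$-measurability of $\tilde\epsilon_{k+1}$, and about the deterministic truncation constants $\bar C,\bar a$ being the load-bearing ingredient, accurately reflect why the auxiliary process $\tilde x^n$ was introduced in place of $\hat x^n$.
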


Next, we use Lemma~\ref{lem2} to relate the trajectories $\tilde x^n(\cdot)$ and $\hat x^n(\cdot)$, $n \geq 0$, to solutions of non-autonomous ODEs that involve the scaled functions $h_{r(n)}$ and the trajectory $\lambda(\cdot)$ defined by \eqref{eq-def-lambda}.
In particular, let $x^n: [T_n, T_{n+1}] \to \R^{d}$ be the unique solution of the ODE
\begin{equation} \label{eq-ode0}
\dot{x}(t) = \lambda(t) h_{r(n)} (x(t)), \ \ \ t \in [T_n, T_{n+1}],
\end{equation}
with $x^n(T_n) = \hat x(T_n) = x_{m(n)}/r(n)$.

\begin{lemma} \label{lem3} The following hold for $\tilde x^n(\cdot), n \geq 0$:
\begin{enumerate}
\item[\rm (i)] $\sup_{n \geq 0} \sup_{t \in [T_n, T_{n+1}]} \| \tilde x^n(t) \|  < \infty$ a.s.; 
\item[\rm (ii)] $\lim_{n \to \infty} \sup_{t \in [T_n, T_{n+1}]} \left\| \tilde x^n(t) - x^n(t) \right\| = 0$ a.s. 
\end{enumerate}
\noindent Hence the same conclusions hold also for $\hat x^n(\cdot), n \geq 0$.
\end{lemma}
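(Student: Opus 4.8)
The plan is to run a discrete Gronwall argument on the grid $\{t(k)\}$ inside each window $[T_n,T_{n+1}]$, exploiting the three deterministic features engineered into $\tilde x^n$: the initial value obeys $\|\tilde x^n(T_n)\| = \|x_{m(n)}\|/r(n) \le 1$ (since $r(n)=\|x_{m(n)}\|\vee 1$); the effective step matrices satisfy $\|\tilde\Lambda_k\|\le\bar C$ by \eqref{eq-Lambda}; and the $C$- and $\bar a$-truncations turn the noise estimates of Lemma~\ref{lem2a} into bounds with deterministic coefficients. For part (i) I would telescope \eqref{eq-tx} to
$$ \tilde x^n(t(k)) = \tilde x^n(T_n) + \sum_{j=m(n)}^{k-1} \alpha_j \tilde\Lambda_j h_{r(n)}(\tilde x^n(t(j))) + \big(\tilde\zeta_k - \tilde\zeta_{m(n)}\big) + \sum_{j=m(n)}^{k-1} \alpha_j \tilde\epsilon_{j+1}, $$
and bound the pieces. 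Since $r(n)\ge1$ and $h_{r(n)}$ is $L$-Lipschitz with $h_{r(n)}(0)=h(0)/r(n)$, one has $\|h_{r(n)}(x)\|\le\|h(0)\|+L\|x\|$; combined with \eqref{eq-prf-noise2} (using $\bar a\wedge\delta_{j+1}\le\bar a$) and $\sum_{j=m(n)}^{k-1}\alpha_j\le T+1$, the drift and $\epsilon$ terms contribute $\sum_j\alpha_j(A+B\|\tilde x^n(t(j))\|)$ with deterministic $A,B$. Because $\tilde\zeta_m$ converges a.s.\ (Lemma~\ref{lem2}(ii)), the quantity $\Xi_n\=\sup_{m(n)\le k\le m(n+1)}\|\tilde\zeta_k-\tilde\zeta_{m(n)}\|$ satisfies $\sup_n\Xi_n<\infty$ a.s.; discrete Gronwall over total step mass $\le T+1$ then yields $\sup_t\|\tilde x^n(t)\|\le(1+\Xi_n+A(T+1))e^{B(T+1)}$, uniformly bounded in $n$ a.s.

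For part (ii) I would compare the same telescoped identity with the integral form $x^n(t)=x^n(T_n)+\int_{T_n}^t\lambda(s)h_{r(n)}(x^n(s))\,ds$ of \eqref{eq-ode0}, noting $x^n(T_n)=\tilde x^n(T_n)$. Restricting to the (a.s.) large $n$ for which $k_n=\infty$ throughout the window — valid since $\max_i\q(n,i)\le C$ and $\delta_n\le\bar a$ eventually (Lemma~\ref{lem1}, Assumption~\ref{cond-ns}(ii)) — gives $\tilde\Lambda_k=\lambda(t(k))$ for every $k$ in the window, so the two drift expressions agree termwise up to a discretization error. The grid error then splits into (a) the martingale increment $\tilde\zeta_k-\tilde\zeta_{m(n)}$, which is uniformly $\to0$ on the window by the Cauchy property of the a.s.\ convergent series; (b) the $\epsilon$ contribution, bounded via \eqref{eq-prf-noise2} and the part-(i) bound, with $B_0\=\sup_n\sup_t\|\tilde x^n(t)\|<\infty$ a.s., by $\bar C(1+B_0)(T+1)\sup_{j\ge m(n)}\delta_{j+1}\to0$ since $\delta_j\to0$ a.s.; and (c) the standard $O(\alpha_j)$ error from replacing $\int_{t(j)}^{t(j+1)}h_{r(n)}(x^n(s))\,ds$ by $\alpha_j h_{r(n)}(x^n(t(j)))$, using that $\lambda$ is constant on $[t(j),t(j+1))$ and $x^n$ has bounded derivative. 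Writing the total of (a)--(c) as $e_n\to0$, the $L$-Lipschitzness of $h_{r(n)}$ and $\|\lambda(s)\|\le\bar C$ give $\|\tilde x^n(t(k))-x^n(t(k))\|\le e_n + \bar CL\sum_j\alpha_j\|\tilde x^n(t(j))-x^n(t(j))\|$, and Gronwall bounds the grid error by $e_n e^{\bar CL(T+1)}\to0$; extending to all $t$ via the $O(\alpha_k)$ interpolation error of $\tilde x^n$ and continuity of $x^n$ proves (ii).

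Finally, the conclusions transfer to $\hat x^n$ because, a.s., $\tilde x^n(\cdot)=\hat x^n(\cdot)$ for all $n$ past some sample-path-dependent $N_0$ (as noted after \eqref{eq-txa}): this tail coincidence carries the limit in (ii) verbatim, while the finitely many early $\hat x^n$ are bounded piecewise-linear functions on compact intervals and so do not disturb the uniform bound in (i). The step I expect to be the crux is controlling item (b) — the $\epsilon$ noise — since, unlike Borkar's i.i.d.\ setting, neither $\q(k,i)$ nor $\delta_{k+1}$ is bounded by a deterministic constant a priori and $\delta_{k+1}$ carries no conditional-variance control; it is precisely the $C$- and $\bar a$-truncations defining $\tilde x^n$ that render every Gronwall coefficient deterministic, and the delicate point is to verify both that the truncated $\epsilon$ contribution still beats the window length $T+1$ (through $\delta_j\to0$) and that the truncations are inactive for large $n$, so that the limit in (ii) is unaffected.
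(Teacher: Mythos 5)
Your proposal is correct and takes essentially the same approach as the paper, which proves Lemma~\ref{lem3} via the Gronwall/discretization arguments of \cite[Chap.\ 3, Lem.\ 6 and Chap.\ 2, Lem.\ 1]{Bor09} using exactly the ingredients you invoke --- Lemma~\ref{lem2}(ii), \eqref{eq-tx}, \eqref{eq-Lambda}, Assumption~\ref{cond-h}, \eqref{eq-prf-noise2}, plus the a.s.\ eventual $k_n = \infty$ and $\delta_k \to 0$ for part (ii) --- and transfers the conclusions to $\hat x^n(\cdot)$ by the same a.s.\ eventual-coincidence argument. The details you supply (deterministic Gronwall coefficients arising from the $C$- and $\bar a$-truncations, uniform-on-window smallness of the increments of the convergent martingale $\tilde \zeta_m$, and the vanishing discretization error) are precisely the steps the paper omits.
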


For $\tilde x^n(\cdot)$, the proofs of both parts of Lemma~\ref{lem3} use Lemma~\ref{lem2}(ii), (\ref{eq-tx}) and (\ref{eq-Lambda}), as well as Assumption~\ref{cond-h} on $h$ and (\ref{eq-prf-noise2}) given in Lemma~\ref{lem2a}.  
For Lemma~\ref{lem3}(ii), which is a consequence of Lemma~\ref{lem3}(i), we also use the fact that almost surely, $k_n = \infty$ for all $n$ sufficiently large and in (\ref{eq-prf-noise2}) $\delta_{k} \to 0$ as $k \to \infty$ by Lemma~\ref{lem1} and Assumption~\ref{cond-ns}(ii).
The proof details are very similar to those for \cite[Chap.\ 3, Lem.\ 6 and Chap.\ 2, Lem.\ 1]{Bor09} and are therefore omitted.
We then obtain the last assertion of Lemma~\ref{lem3} for $\hat x^n(\cdot)$ because almost surely, $\hat x^n(\cdot)$ coincides with $\tilde x^n(\cdot)$ for all $n$ sufficiently large, as discussed earlier.

Lemma~\ref{lem3} has established that as $n \to \infty$,
the trajectories $\hat x^n(\cdot)$ `track' the solutions of the ODEs \eqref{eq-ode0}.
We will now further investigate the solution properties of these ODEs if $r(n)$ were to become exceedingly large, and use these properties to complete the stability proof.

\subsection{Stability in Scaling Limits of Corresponding ODEs and\\ Proof Completion} \label{sec-stab-prf2}

Let $\unitS$ denote the unit sphere in $\R^{d}$. For each $n$, the solution $x^n(\cdot)$ of the ODE \eqref{eq-ode0} is determined by the functions $h_{r(n)}$ and $\lambda(T_n + \cdot)$, and an initial condition within the unit ball of $\R^{d}$. If $r(n)$ becomes sufficiently large, the initial condition lies on $\unitS$, and $h_{r(n)}$ approaches the function $h_\infty$ due to Assumption~\ref{cond-h}(ii). According to Lemma~\ref{lem4}, almost surely, any limit point of the sequence $\{ \lambda(T_n + \cdot)\}_{n \geq 0}$ in $\Upsilon$ has the form $\lambda^*(t) = \rho(t) I$ with $\tfrac{1}{d} \leq \rho(t) \leq C$ for all $t \geq 0$.
This leads us to consider \emph{an arbitrary function $\lambda^* \in \Upsilon$ of the above form} and the associated limiting ODE
\begin{equation} \label{eq-lim-ode}
    \dot{x}(t) = \lambda^*(t) \, h_\infty (x(t)) \quad \text{with} \ \ x(0) \in \unitS.
\end{equation}    
We now proceed to derive stability-related solution properties for such ODEs and their `nearby' ODEs.

For $x \in \unitS$, denote by $\phi_\infty(t ; x)$ the unique solution of (\ref{eq-lim-ode}) with initial condition $\phi_\infty(0; x) = x$.
This solution is related by time scaling to $\phi^o_\infty(t; x)$, the unique solution of the ODE $\dot{x}(t) = h_\infty (x(t))$ with initial condition $\phi^o_\infty(0; x) = x$; in particular,
\begin{equation} \label{eq-rel-phi}
   \phi_\infty(t; x) = \phi^o_\infty(\tau(t); x), \ \ \text{where} \ \ \tau(t) = \int_0^t \rho(s) ds.
\end{equation}
Under Assumption~\ref{cond-h} on $h_\infty$, there exists $T^o > 0$ such that for all initial conditions $x \in \unitS$, $\| \phi^o_\infty(t; x) \| < 1/8$ for all $t \geq T^o$ \cite[Chap.\ 3, Lem.\ 1]{Bor09}.
Since $\rho(t) \geq \tfrac{1}{d}$, this means that for all $t \geq T^o d$, $\|\phi_\infty(t; x) \| = \| \phi^o_\infty(\tau(t); x) \| < 1/8$. So we have the lemma below, establishing the stability of the limiting ODEs of the form (\ref{eq-lim-ode}).

\begin{lemma} \label{lem5}
There exists $T > 0$, independent of the choice of $\lambda^*$, such that for all $x \in \unitS$, $\| \phi_\infty(t; x) \| < 1/8$ for all $t \geq T$.
\end{lemma}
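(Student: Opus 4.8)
The plan is to reduce the statement about the time-rescaled limiting ODE (\ref{eq-lim-ode}) to the corresponding autonomous flow $\dot{x} = h_\infty(x)$ by means of the time-change identity (\ref{eq-rel-phi}), and then to transfer a \emph{uniform} decay estimate for the autonomous flow through the inequality $\tau(t) \geq t/d$. First I would invoke the decay property of the autonomous ODE: because the origin is its unique globally asymptotically stable equilibrium (Assumption~\ref{cond-h}(ii)) and $\unitS$ is compact, there is a single $T^o > 0$, depending only on $h_\infty$, for which $\| \phi^o_\infty(t; x) \| < 1/8$ holds simultaneously for all $x \in \unitS$ and all $t \geq T^o$; this is exactly \cite[Chap.\ 3, Lem.\ 1]{Bor09}. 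The point to stress is that $T^o$ is attached to $h_\infty$ alone and in particular does not depend on $\lambda^*$.

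Next I would carry this estimate over to $\phi_\infty$ via the relation $\phi_\infty(t; x) = \phi^o_\infty(\tau(t); x)$ with $\tau(t) = \int_0^t \rho(s)\,ds$. Every admissible $\lambda^* = \rho(\cdot) I$ arising as a limit point in Lemma~\ref{lem4} satisfies $\rho(s) \geq 1/d$ for all $s$, so $\tau(t) \geq t/d$. Setting $T \= T^o d$, for any $t \geq T$ we obtain $\tau(t) \geq T^o$, whence $\| \phi_\infty(t; x) \| = \| \phi^o_\infty(\tau(t); x) \| < 1/8$ for every $x \in \unitS$. Since $T = T^o d$ involves only the autonomous constant $T^o$ and the fixed dimension $d$, and since the sole property of $\lambda^*$ used is the uniform lower bound $\rho \geq 1/d$ valid for all such $\lambda^*$, the threshold $T$ is independent of the choice of $\lambda^*$, as required.

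The only substantive point—not a computation but an appeal to theory—is the uniformity over $x \in \unitS$ of the autonomous decay time $T^o$; this follows from global asymptotic stability together with compactness of the sphere, and is the content of \cite[Chap.\ 3, Lem.\ 1]{Bor09}. Everything else reduces to the elementary monotone time-change bound $\tau(t) \geq t/d$, so I do not anticipate any real obstacle beyond correctly tracking the dependence on $\lambda^*$ and confirming that the lower bound on $\rho$ is the one property of the limiting ODEs that makes $T$ uniform.
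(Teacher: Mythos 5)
Your proposal is correct and is essentially identical to the paper's argument: both invoke \cite[Chap.\ 3, Lem.\ 1]{Bor09} for the uniform decay time $T^o$ of the autonomous flow $\dot{x} = h_\infty(x)$, then use the time-change relation (\ref{eq-rel-phi}) together with $\rho(s) \geq 1/d$ to conclude that $T = T^o d$ works uniformly in $\lambda^*$. No gaps.
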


The next three results (Lemma~\ref{lem6}, Cor.~\ref{cor1}, and Lemma~\ref{lem7}) aim to extend this stability property to `nearby' ODEs within a certain time horizon. 
These results generalize \cite[Chap.\ 3, Lem.\ 2 and Cor.\ 3]{Bor09} from the synchronous context, where a single limiting ODE is involved, to the asynchronous context, which involves potentially multiple limiting ODEs. 

For $c \geq 1$ and $\lambda' \in \Upsilon$, let $\phi_{c, \lambda'}(t ; x)$ denote the unique solution of the ODE 
\begin{equation}
 \dot{x}(t) = \lambda'(t) \, h_c (x(t))  \quad \text{with} \ \ \phi_{c, \lambda'}(0; x) = x.
\end{equation} 
Recall the constant $\bar C$ from \eqref{eq-Lambda} and the constant $L$, which bounds the Lipschitz constant of the function $h$ (Assumption~\ref{cond-h}(i)).

\begin{lemma} \label{lem6}
For any given time interval $[0, T]$ and $\bar \epsilon > 0$, there exists an open neighborhood $D_{\bar \epsilon}(\lambda^*)$ of $\lambda^*$ such that for all $\lambda' \in D_{\bar \epsilon}(\lambda^*)$ and initial conditions $x \in \unitS$,
\begin{equation} \label{eq-l6}
  \| \phi_{c,\lambda'}(t; x) - \phi_\infty(t; x) \| \leq (\bar \epsilon + \epsilon(c) T) \, e^{\bar C LT}, \ \  \forall \, t \in [0, T],
\end{equation}  
where $\epsilon(c) > 0$ satisfies $\epsilon(c) \to 0$ as $c \to \infty$ and is independent of $x$, $\bar \epsilon$ and the choice of $\lambda^*$. 
\end{lemma}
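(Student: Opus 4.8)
The plan is to compare the two trajectories through their integral equations and close a Gr\"onwall loop, isolating the discrepancy between $\lambda'$ and $\lambda^*$ into a single term that the weak topology on $\Upsilon$ can control uniformly in the initial condition. Writing $y(t) \= \phi_{c,\lambda'}(t;x)$ and $z(t) \= \phi_\infty(t;x)$, both starting at $x \in \unitS$, I would subtract their integral forms and split the integrand as
\begin{align*}
 \lambda'(s) h_c(y(s)) - \lambda^*(s) h_\infty(z(s))
 &= \lambda'(s)\big( h_c(y(s)) - h_\infty(y(s)) \big) \\
 &\quad + \lambda'(s)\big( h_\infty(y(s)) - h_\infty(z(s)) \big) \\
 &\quad + \big( \lambda'(s) - \lambda^*(s) \big) h_\infty(z(s)).
\end{align*}

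First I would record an a priori bound: since $h_c$ and $h_\infty$ are Lipschitz with modulus $L$, $\|h_c(0)\| = \|h(0)\|/c \le \|h(0)\|$, $h_\infty(0)=0$, and $\|\lambda'\|, \|\lambda^*\| \le \bar C$ on $[0,T]$, a direct Gr\"onwall estimate confines both $y(\cdot)$ and $z(\cdot)$ to a fixed compact ball $B \subset \R^d$ whose radius depends only on $T, \bar C, L$ and $\|h(0)\|$ — in particular independently of $x, c, \lambda'$ and $\lambda^*$. On $B$, Assumption~\ref{cond-h}(ii) gives $\sup_{v \in B}\|h_c(v) - h_\infty(v)\| \le \epsilon_0(c)$ with $\epsilon_0(c)\to 0$, so the first term contributes at most $\bar C \epsilon_0(c)$ in norm; setting $\epsilon(c) \= \bar C \epsilon_0(c)$ absorbs the constant. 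The second term is bounded by $\bar C L\|y(s)-z(s)\|$ and will feed the Gr\"onwall inequality.

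The third term is the crux, and it is where the topology of $\Upsilon$ enters. For fixed $x$ and $t$, this contributes $\psi_{T,g}(\lambda') - \psi_{T,g}(\lambda^*)$ with $g(s) \= \ind\{s \le t\}\, h_\infty(\phi_\infty(s;x))$, so I must make this small simultaneously for all $t \in [0,T]$ and all $x \in \unitS$ by a single choice of neighborhood. The main obstacle is that the defining subbasic neighborhoods of $\lambda^*$ only control finitely many test functions, whereas here the test functions range over a continuum. I would overcome this by compactness: since $h_\infty$ is Lipschitz and $\lambda^* = \rho I$ with $\rho$ bounded, standard continuous dependence makes $x \mapsto \phi_\infty(\cdot; x)$ continuous from the compact sphere $\unitS$ into $C([0,T];\R^d)$, uniformly on $[0,T]$; composing with $h_\infty$ and with truncation at $t$, the map $(t,x) \mapsto \ind\{\cdot \le t\}\, h_\infty(\phi_\infty(\cdot;x))$ is continuous from $[0,T]\times\unitS$ into $L_2([0,T];\R^d)$, so its image $\mathcal K$ is compact, hence totally bounded. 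Covering $\mathcal K$ by finitely many $L_2$-balls of small radius $\eta$ about centers $g_1,\dots,g_m$, I take $D_{\bar\epsilon}(\lambda^*) \= \bigcap_{j} \{\lambda' : \|\psi_{T,g_j}(\lambda') - \psi_{T,g_j}(\lambda^*)\| < \bar\epsilon/2\}$, an open neighborhood of $\lambda^*$ by definition of the topology. For a general $g \in \mathcal K$, approximating by the nearest $g_j$ and using $\|\psi_{T,g-g_j}(\mu)\| \le \bar C\sqrt{T}\,\|g-g_j\|_{L_2}$ for $\mu \in \{\lambda',\lambda^*\}$, a choice of $\eta$ with $2\bar C\sqrt{T}\,\eta < \bar\epsilon/2$ yields $\|\psi_{T,g}(\lambda') - \psi_{T,g}(\lambda^*)\| < \bar\epsilon$ for all $\lambda' \in D_{\bar\epsilon}(\lambda^*)$. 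This bounds the third term by $\bar\epsilon$ uniformly in $t$ and $x$.

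Finally I would assemble the pieces: for $\lambda' \in D_{\bar\epsilon}(\lambda^*)$ and any $x \in \unitS$,
\begin{equation*}
 \| y(t) - z(t) \| \le \bar\epsilon + \epsilon(c) T + \bar C L \int_0^t \| y(s) - z(s) \|\, ds, \qquad t \in [0,T],
\end{equation*}
and Gr\"onwall's inequality gives $\|y(t) - z(t)\| \le (\bar\epsilon + \epsilon(c)T)\, e^{\bar C L t} \le (\bar\epsilon + \epsilon(c)T)\, e^{\bar C L T}$, which is exactly \eqref{eq-l6}. Since $\epsilon(c) = \bar C \epsilon_0(c)$ depends only on $c$ (through the uniform convergence $h_c \to h_\infty$ on the fixed ball $B$) and not on $x$, $\bar\epsilon$, or $\lambda^*$, all the stated uniformity requirements are met.
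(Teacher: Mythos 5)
Your proof is correct and follows the same overall strategy as the paper's: the same splitting of the integrated difference (after regrouping), isolation of the $(\lambda'-\lambda^*)$ term as the one controlled by the weak topology on $\Upsilon$, a finite intersection of sub-basic open sets as the neighborhood, and a Gr\"onwall closure yielding exactly $(\bar\epsilon + \epsilon(c)T)e^{\bar C L T}$. There are two genuine points of divergence in implementation, both sound. First, the covering step: the paper builds an explicit net, choosing a finite $\delta$-cover $x^1,\dots,x^\ell$ of $\unitS$ and a time grid $t_k = kT/m$ with $m \ge \bar C\bar B/\delta$, and bounds by hand the three remainder integrals incurred when passing from $(t,x)$ to the nearest grid pair $(t_k,x^j)$, with the explicit budget $\delta(3+2\bar C L T e^{\bar C L T})\le\bar\epsilon$. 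You instead prove that $(t,x)\mapsto \ind\{\cdot\le t\}\,h_\infty(\phi_\infty(\cdot;x))$ is continuous from the compact set $[0,T]\times\unitS$ into $L_2([0,T];\R^d)$, so its image $\mathcal{K}$ is totally bounded, and you pass from an arbitrary test function to a finite $\eta$-net via the uniform Cauchy--Schwarz bound $\|\psi_{T,g-g_j}(\mu)\|\le \bar C\sqrt{T}\,\|g-g_j\|_{L_2}$, valid for every $\mu\in\Upsilon$. This is the abstract form of the paper's construction --- the paper's grid functions are precisely an $\eta$-net in your $\mathcal{K}$ --- and it buys a cleaner, more reusable argument at the cost of hiding the explicit constants. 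Second, you evaluate $h_c-h_\infty$ along the perturbed trajectory $\phi_{c,\lambda'}$ rather than along $\phi_\infty$; this forces you to establish an a priori compact ball containing $\phi_{c,\lambda'}(t;x)$ uniformly in $x\in\unitS$, $c\ge 1$, and $\lambda'\in\Upsilon$, which you correctly supply using $\|h_c(0)\| = \|h(0)\|/c \le \|h(0)\|$ and Gr\"onwall. The paper avoids needing any bound on the perturbed flow by splitting $h_c(\phi_{c,\lambda'})-h_c(\phi_\infty)$ (Lipschitz, absorbed into the Gr\"onwall term) plus $h_c(\phi_\infty)-h_\infty(\phi_\infty)$, so that only the bound $B=e^{\bar C L T}$ on $\phi_\infty$ is used in defining $\epsilon(c)$; both routes produce an $\epsilon(c)$ independent of $x$, $\bar\epsilon$, and $\lambda^*$, as the lemma requires.
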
  

The proof of this lemma has many details, so we give its corollary first.

\begin{cor} \label{cor1}
There exist $\bar T > 0$, $\bar c \geq 1$,  both independent of the choice of $\lambda^*$, and an open neighborhood $D(\lambda^*)$ of $\lambda^*$ such that
for all $\lambda' \in D(\lambda^*)$ and initial conditions $x \in \unitS$, $\| \phi_{c, \lambda'}(t; x) \| < 1/4$ for all $t \in [\bar T, \bar T+1]$ and $c \geq \bar c$.
\end{cor}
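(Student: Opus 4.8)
The plan is to obtain the desired uniform bound on $[\bar T, \bar T + 1]$ by combining the stability of the limiting ODE (Lemma~\ref{lem5}) with the perturbation estimate of Lemma~\ref{lem6} via a triangle inequality. The guiding observation is that, while the neighborhood $D(\lambda^*)$ is naturally permitted to depend on $\lambda^*$, the quantities that control the deviation in \eqref{eq-l6} --- namely $\epsilon(c)$, $\bar C$, and $L$ --- are all independent of the choice of $\lambda^*$, and the horizon $T$ furnished by Lemma~\ref{lem5} is likewise $\lambda^*$-independent. This is precisely what will let $\bar T$ and $\bar c$ be chosen uniformly in $\lambda^*$.

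Concretely, I would first take $\bar T \= T$, where $T$ is the constant from Lemma~\ref{lem5}, so that $\| \phi_\infty(t; x) \| < 1/8$ for all $t \geq \bar T$ and all $x \in \unitS$. Next I would apply Lemma~\ref{lem6} on the time interval $[0, \bar T + 1]$ (i.e.\ with $\bar T + 1$ playing the role of $T$ in that lemma). Since $\epsilon(c) \to 0$ as $c \to \infty$ while $\bar C$, $L$, $\bar T$ are fixed, I can first pick $\bar \epsilon > 0$ small enough that $\bar \epsilon \, e^{\bar C L (\bar T + 1)} < 1/16$, and then pick $\bar c \geq 1$ large enough that $\epsilon(c)(\bar T + 1)\, e^{\bar C L (\bar T + 1)} < 1/16$ for all $c \geq \bar c$. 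With these choices the right-hand side of \eqref{eq-l6} is strictly below $1/8$ for every $\lambda' \in D_{\bar \epsilon}(\lambda^*)$, $x \in \unitS$, $c \geq \bar c$, and $t \in [0, \bar T + 1]$.

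Setting $D(\lambda^*) \= D_{\bar \epsilon}(\lambda^*)$, the triangle inequality then yields, for all $t \in [\bar T, \bar T + 1]$,
$$ \| \phi_{c, \lambda'}(t; x) \| \leq \| \phi_\infty(t; x) \| + \| \phi_{c, \lambda'}(t; x) - \phi_\infty(t; x) \| < \tfrac{1}{8} + \tfrac{1}{8} = \tfrac{1}{4}, $$
where the first term is controlled by Lemma~\ref{lem5} (since $t \geq \bar T$) and the second by Lemma~\ref{lem6}. This is exactly the asserted bound.

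The main obstacle is not the estimate itself, which is essentially bookkeeping once Lemma~\ref{lem6} is in hand, but rather verifying the uniformity claim: that $\bar T$ and $\bar c$ can be taken independent of $\lambda^*$. I would track this carefully, relying on the explicit $\lambda^*$-independence of $\epsilon(c)$, $\bar C$, and $L$ in \eqref{eq-l6} and of $T$ in Lemma~\ref{lem5}, so that only the neighborhood $D(\lambda^*)$ varies with $\lambda^*$. I would also emphasize the role of the slack between the $1/8$ threshold of Lemma~\ref{lem5} and the target $1/4$: it provides exactly the room to absorb the perturbation term, and working over the full window $[0, \bar T + 1]$ (rather than at a single time) is what guarantees the bound throughout $[\bar T, \bar T + 1]$.
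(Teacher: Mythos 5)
Your proposal is correct and follows essentially the same route as the paper's own proof: take $\bar T$ to be the horizon $T$ from Lemma~\ref{lem5}, apply Lemma~\ref{lem6} on the extended window $[0, \bar T + 1]$, and choose $\bar \epsilon$ and $\bar c$ so that the right-hand side of \eqref{eq-l6} falls below $1/8$, concluding by the triangle inequality. Your explicit tracking of the $\lambda^*$-independence of $\epsilon(c)$, $\bar C$, $L$, and $T$ is exactly the point the paper relies on (and leaves implicit), so there is nothing to correct.
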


\begin{proof}
 Let $\bar T$ be the time $T$ given by Lemma~\ref{lem5}, and let $T' = T + 1$. Choose $\bar \epsilon > 0$ small enough and $\bar c$ large enough so that for all $c \geq \bar c$, $(\bar \epsilon + \epsilon(c) T') \, e^{\bar C L T'} < 1/8$, where the function $\epsilon(c)$ is as in Lemma~\ref{lem6} for the interval $[0, T']$. Then let $D(\lambda^*)$ be given by Lemma~\ref{lem6} for the interval $[0, T']$ and the chosen $\bar \epsilon$.
\end{proof}

\begin{proof}[Proof of Lemma~\ref{lem6}]
First, we derive a few basic properties of $\phi_\infty$ that will be needed in the proof. 
Observe that $h_\infty$ is Lipschitz continuous with modulus $L$ and $h_\infty(0) = 0$ under Assumption~\ref{cond-h}.
For any $x, x' \in \unitS$, since
$\phi_\infty(t; x) - \phi_\infty(t; x')  = (x - x') + \int_0^t \lambda^*(s) [ h_\infty( \phi_\infty(s; x))  - h_\infty( \phi_\infty(s; x') ) ] ds,$
we have 
\begin{align*}
 & \left\| \phi_\infty(t; x) - \phi_\infty(t; x')  \right\|  \leq \| x - x' \| + \bar C L \textstyle{ \int_0^t  \left\| \phi_\infty(s; x)  -  \phi_\infty(s; x')  \right\| ds},
\end{align*}  
so by the Gronwall inequality \cite[Chap.\ 11, Lem.\ 6]{Bor09},
\begin{equation} \label{eq-l6-prf1}
  \! \left\| \phi_\infty(t; x) - \phi_\infty(t; x')  \right\| \leq \| x - x'\| e^{ \bar C L T}, \ \  \forall \, 
   t \in [0, T].
\end{equation}
A similar argument applied to $\phi_\infty(t; x) = x + \int_0^t \lambda^*(s) h_\infty( \phi_\infty(s; x)) ds$ shows that 
\begin{equation} \label{eq-l6-prf2}
  \sup_{ t \in [0, T], \, x: \| x\| = 1} \|\phi_\infty(t; x) \| \leq B \= e^{\bar C L T}  < \infty.
\end{equation} 
Then 
$\sup_{ t \in [0, T]} \sup_{x: \| x\| = 1} \| h_\infty( \phi_\infty(t; x)) \| \leq \bar B \= L B$ by the Lipschitz continuity of $h_\infty$. Thus for any $t \leq T$ and $x \in \unitS$, the function $h_\infty( \phi_\infty(\cdot; x))$ on $[0, t]$ belongs to $L_2([0, t]; \R^{d})$. Note also that the preceding bounds hold regardless of the choice of $\lambda^*$.

To construct the desired neighborhood $D_{\bar \epsilon}(\lambda^*)$, we choose a sufficiently small $\delta > 0$ such that $\delta ( 3 + 2 \bar C L T  e^{\bar C L T}) \leq \bar \epsilon$, and pick a finite number of points, $x^1, x^2, \ldots, x^\ell$, on the unit sphere $\unitS$ to form a $\delta$-cover of $\unitS$ (i.e., for any $x$ with $\| x \|=1$, $\| x - x^j\| \leq \delta$ for some $j \leq \ell$). We then choose an integer $m \geq \bar C \bar B/\delta$ and let $t_k = \tfrac{k T}{m}$, $0 \leq k \leq m$.
For each $1 \leq j \leq \ell$ and $1 \leq k \leq m$, define an open neighborhood of $\lambda^*$ in $\Upsilon$ by
$$ \textstyle{D_{jk} \= \left\{ \lambda' \in \Upsilon \Big| \left\| \int_0^{t_k} \big( \lambda'(s) - \lambda^*(s) \big) \, h_\infty( \phi_\infty(s; x^j)) \, ds   \right\| < \delta \right\}}.$$ 
Let us show that $D_{\bar \epsilon}(\lambda^*) \= \cap_{j=1}^\ell \cap_{k=1}^m D_{jk}$ (which is also an open neighborhood of $\lambda^*$ by definition) satisfies the desired relation (\ref{eq-l6}).

Consider an arbitrary $\lambda' \in D_{\bar \epsilon}(\lambda^*)$ and $c \geq 1$. For $t \leq T$ and $x \in \unitS$,
\begin{align}
    \phi_{c,\lambda'}(t; x) - \phi_\infty(t; x)  
  & = \textstyle{ \int_0^t \lambda'(s) h_c( \phi_{c,\lambda'}(s; x)) ds - \int_0^t \lambda^*(s) h_\infty( \phi_\infty(s; x)) ds}  \notag \\
   & = \textstyle{ \int_0^t \big( \lambda'(s) -  \lambda^*(s) \big) \, h_\infty( \phi_\infty(s; x)) \, ds}   \notag \\
    & \quad \, + \textstyle{ \int_0^t \lambda'(s) \, \big[ h_c( \phi_{c,\lambda'}(s; x)) - h_\infty( \phi_\infty(s; x)) \big] \, ds.} \label{eq-l6-prf3}
\end{align}
To bound $\| \phi_{c,\lambda'}(t; x) - \phi_\infty(t; x) \|$, we bound separately the two terms on the right-hand side (r.h.s.) of (\ref{eq-l6-prf3}).

To bound the first term, let $j \leq \ell$ be such that $\| x - x^j \| \leq \delta$ and let $k \leq m$ be the largest integer with $t_k \leq t$. 
Then 
$$  \textstyle{ \left \| \int_0^{t_k} \big( \lambda'(s) - \lambda^*(s) \big) \, h_\infty( \phi_\infty(s; x^j))  \, ds \right\| } \leq \delta,$$
since in the case $t_k > 0$, $\lambda' \in D_{jk}$ by the definition of $D_{\bar \epsilon}(\lambda^*)$, and in the case $t_k = 0$ the above inequality trivially holds.
Moreover, since $t - t_k \leq \tfrac{1}{m}$, for any $y \in \unitS$,
$$\textstyle{ \left \| \int_{t_k}^t \lambda'(s) h_\infty( \phi_\infty(s; y)) ds \right \|} \leq \bar C \Bar B/m \leq \delta, \qquad 
\textstyle{ \left \| \int_{t_k}^t \lambda^*(s) h_\infty( \phi_\infty(s; y)) ds \right \|} \leq \bar C \Bar B/m \leq \delta.$$
Using these relations with (\ref{eq-l6-prf1}) and the Lipschitz continuity of $h_\infty$, we obtain
\begin{align*}
 & \textstyle{ \left\| \int_0^t \big( \lambda'(s) - \lambda^*(s) \big) \, h_\infty( \phi_\infty(s; x)) \, ds  \right\|} \\
   &  \leq \textstyle{ \left\| \int_0^{t_k}  \big( \lambda'(s) -  \lambda^*(s) \big) \, h_\infty( \phi_\infty(s; x^j)) \, ds \right\|}   
   + \textstyle{ \left \| \int_{t_k}^t \lambda'(s) h_\infty( \phi_\infty(s; x^j)) ds \right \| + \left \| \int_{t_k}^t \lambda^*(s) h_\infty( \phi_\infty(s; x^j)) ds \right \|} \\
   & \quad \, +  \textstyle{ \left\| \int_0^t \lambda'(s) \big[ h_\infty( \phi_\infty(s; x)) - h_\infty( \phi_\infty(s; x^j)) \big] ds \right\|}  
   + \textstyle{ \left\| \int_0^t \lambda^*(s) \big[ h_\infty( \phi_\infty(s; x)) - h_\infty( \phi_\infty(s; x^j)) \big] ds \right\|} \\
   & \leq 3 \delta + 2 \bar C L T \cdot \delta e^{\bar C L T}.
\end{align*}
Hence by our choice of $\delta$, 
\begin{equation} \label{eq-l6-prf4}
  \textstyle{ \left\| \int_0^t \big( \lambda'(s) - \lambda^*(s) \big) \, h_\infty( \phi_\infty(s; x)) \, ds  \right\|} \leq  \bar \epsilon.
\end{equation}

To bound the second term on the r.h.s.\ of (\ref{eq-l6-prf3}), recall the constant $B$ defined in (\ref{eq-l6-prf2}), and let
$$ \epsilon(c) \= \bar C \sup_{ y : \| y\| \leq B} \| h_c (y) - h_\infty (y) \|.$$
Since $h_c(y)$ converges to $h_\infty(y)$ uniformly on compacts as $c \to \infty$ (Assumption~\ref{cond-h}), $\epsilon(c)$ satisfies that $\epsilon(c) \to 0$ as $c \to \infty$.
Now for $s \leq t$, by the Lipschitz continuity of $h_c$ (Assumption~\ref{cond-h}),
\begin{align*}
  \left\| \lambda'(s) \big[ h_c( \phi_{c,\lambda'}(s; x)) -  h_\infty( \phi_\infty(s; x)) \big] \right\|  
  & \leq \bar C \| h_c( \phi_{c,\lambda'}(s; x)) -  h_c( \phi_\infty(s; x)) \|  \\ & \quad \, 
  + \bar C \| h_c( \phi_\infty(s; x)) -  h_\infty( \phi_\infty(s; x)) \| \\
  & \leq \bar C L \|  \phi_{c,\lambda'}(s; x) -  \phi_\infty(s; x) \| + \epsilon(c).
\end{align*}
Therefore, 
\begin{align} 
& \textstyle{ \left\| \int_0^t \lambda'(s) \big[ h_c( \phi_{c,\lambda'}(s; x)) -  h_\infty( \phi_\infty(s; x)) \big] ds \right\|}  
\leq  \epsilon(c)T  + \bar C L \textstyle{ \int_0^t  \|  \phi_{c,\lambda'}(s; x) -  \phi_\infty(s; x) \| ds.} \label{eq-l6-prf5}
 \end{align} 
 
It follows from (\ref{eq-l6-prf3}), (\ref{eq-l6-prf4}), and (\ref{eq-l6-prf5}) that for $t \leq T$,
\begin{align*}
& \left\| \phi_{c,\lambda'}(t; x) - \phi_\infty(t; x) \right\|   \leq  \bar \epsilon + \epsilon(c) T  +  \bar C L  \textstyle{\int_0^t  \|  \phi_{c,\lambda'}(s; x) -  \phi_\infty(s; x) \| ds }.
\end{align*}
Then by the Gronwall inequality \cite[Chap.\ 11, Lem.\ 6]{Bor09},
$$ \left\| \phi_{c,\lambda'}(t; x) - \phi_\infty(t; x) \right\| \leq  (\bar \epsilon + \epsilon(c)T ) e^{\bar C L T}.$$
This proves that $D_{\bar \epsilon}(\lambda^*)$ satisfies (\ref{eq-l6}).
\end{proof}

We are almost ready to prove the boundedness of $\{x_n\}$ from algorithm (\ref{eq-alg0}). \emph{In what follows, let $\bar T > 0$ and $\bar c \geq 1$ be as given in Cor.~\ref{cor1}, and use $T \= \bar T + 1/2$ in defining the sequence of times, $T_n, n \geq 0$, for the processes $\hat x(\cdot)$ and the solutions $x^n(\cdot)$} introduced earlier in Section~\ref{sec-stab-prf1} [cf.\ \eqref{eq-def-tm}, \eqref{eq-hx0}, and \eqref{eq-ode0}]. 

\begin{lemma} \label{lem7}
Almost surely, there exists a sample path-dependent integer $\bar n \geq 0$ such that for all $n \geq \bar n$, $\lambda'_n \= \lambda(T_n + \cdot) \in \Upsilon$ satisfies that $\| \phi_{c, \lambda'_n}(t; x) \| < 1/4$ for all $t \in [\bar T, \bar T+1]$, $c \geq \bar c$, and initial conditions $x \in \unitS$.
\end{lemma}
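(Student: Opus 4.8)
The plan is to argue by contradiction, combining the compactness of $\Upsilon$ with Lemma~\ref{lem4} and Cor.~\ref{cor1}. I would first record that $T_n \uparrow \infty$: by construction [cf.\ \eqref{eq-def-tm}] each interval $[T_n, T_{n+1})$ has length at least $T$, so $T_{n+1} \geq T_n + T$ and hence $T_n \to \infty$. I would then work on the probability-one event on which Lemma~\ref{lem4} holds, and suppose the conclusion fails on some sample path in this event. Failure means there are infinitely many indices $n$ for which $\lambda'_n = \lambda(T_n + \cdot)$ violates the asserted property; that is, for each such $n$ there exist $t_n \in [\bar T, \bar T+1]$, $c_n \geq \bar c$, and $x_n \in \unitS$ with $\| \phi_{c_n, \lambda'_n}(t_n; x_n) \| \geq 1/4$. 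Collect these ``bad'' indices into a subsequence $\{n_k\}$ with $n_k \to \infty$.

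Next I would invoke compactness. Since $\Upsilon$ is compact metrizable, the sequence $\{\lambda'_{n_k}\} = \{\lambda(T_{n_k} + \cdot)\}$ has a subsequence, again denoted $\{\lambda'_{n_k}\}$, converging in $\Upsilon$ to some $\lambda^*$. Because $T_{n_k} \uparrow \infty$, Lemma~\ref{lem4} applies and guarantees that this limit has the required form $\lambda^*(t) = \rho(t) I$ with $\tfrac{1}{d} \leq \rho(t) \leq C$ for all $t$. This is exactly the class of functions to which Cor.~\ref{cor1} applies.

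I would then close the argument using Cor.~\ref{cor1}. Applied to this particular $\lambda^*$, it yields an open neighborhood $D(\lambda^*)$ of $\lambda^*$ in $\Upsilon$ such that every $\lambda' \in D(\lambda^*)$ satisfies $\| \phi_{c, \lambda'}(t; x) \| < 1/4$ for all $t \in [\bar T, \bar T+1]$, $c \geq \bar c$, and $x \in \unitS$. Here it is crucial that $\bar T$ and $\bar c$ furnished by Cor.~\ref{cor1} are independent of $\lambda^*$, so they coincide with the fixed constants in the present statement. Since $\lambda'_{n_k} \to \lambda^*$ in $\Upsilon$ and $D(\lambda^*)$ is open, we have $\lambda'_{n_k} \in D(\lambda^*)$ for all $k$ large, whence the desired bound holds for those indices, contradicting their selection as bad indices. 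This establishes the existence of the sample-path-dependent $\bar n$.

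The only real subtlety—and where I would take care—is the interface between the failure of the property for an individual $n$ and convergence in the topology of $\Upsilon$: the negated property is an existential statement over the triple $(t, c, x)$, so I must ensure that membership $\lambda'_{n_k} \in D(\lambda^*)$ directly rules out any violating triple, which it does precisely because Cor.~\ref{cor1} asserts the bound \emph{for all} such $t, c, x$ simultaneously. Everything else is a routine compactness-plus-contradiction packaging of the deterministic ODE estimates already in hand.
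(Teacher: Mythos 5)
Your proof is correct and takes essentially the same approach as the paper's: both rest on the compactness of $\Upsilon$, Lemma~\ref{lem4}'s characterization of the limit points of $\{\lambda(T_n+\cdot)\}$, and the neighborhoods $D(\lambda^*)$ from Cor.~\ref{cor1}, whose constants $\bar T$ and $\bar c$ are independent of $\lambda^*$. The only difference is packaging—the paper argues directly, covering the compact set $G$ of limit points by the open set $\cup_{\lambda^* \in G} D(\lambda^*)$ and noting the sequence eventually enters it, while you reach the same conclusion by contradiction via a convergent subsequence of bad indices; the two formulations are interchangeable.
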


\begin{proof}
Consider a sample path for which Lemma~\ref{lem4} holds. Let $G$ be the set of all limit points of $\{\lambda(T_n + \cdot)\}_{n \geq 0}$ in $\Upsilon$. Since $G$ is a closed subset of the compact metrizable space $\Upsilon$, $G$ is compact. By Lemma~\ref{lem4}, every $\lambda^* \in G$ is of the form $\lambda^*(t) = \rho(t) I$ with $\tfrac{1}{d} \leq \rho(t) \leq C$ for all $t \geq 0$.
Then $G$ is contained in the open set $D' \= \cup_{\lambda^* \in G} D(\lambda^*)$, where $D(\lambda^*)$ is the open neighborhood of $\lambda^*$ given by Cor.~\ref{cor1}. Since the sequence $\{\lambda(T_n + \cdot)\}_{n \geq 0}$ converges to $G$, it follows from the compactness of $G$ that for some finite integer $\bar n$, $\lambda'_n = \lambda(T_n + \cdot) \in D'$ for all $n \geq \bar n$. This means that if $n \geq \bar n$, $\lambda'_n \in D(\lambda^*)$ for some $\lambda^* \in G$, so we obtain the desired conclusion by Cor.~\ref{cor1}.
\end{proof}

Using the preceding results and the same reasoning as in the proof of \cite[Chap.\ 3, Thm.~7]{Bor09} for synchronous algorithms, we can now complete the proof of Theorem~\ref{thm-1}. We give the details below for clarity and completeness.

\begin{proof}[Proof of Theorem~\ref{thm-1}] 
The set of sample paths for which Lemmas~\ref{lem3} and~\ref{lem7} hold has probability $1$.
Consider any sample path from this set. 
We first prove that $\{\bar x (T_n) \}_{n \geq 0}$ must be bounded. 
Suppose this is not true. Then we can find a subsequence $T_{n_k}, k \geq 0$, with $\bar c \leq r(n_k) = \| \bar x (T_{n_k}) \| \uparrow \infty$. Let us derive a contradiction.

By Lemma~\ref{lem3}, for all $n$ sufficiently large, $\sup_{t \in [T_n, T_{n+1})} \| \hat x(t) - x^n(t)\| \leq 1/4$. 
This together with Lemma~\ref{lem7} implies that there exists some $\bar n'$ such that if $n \geq \bar n'$ and $r(n) \geq \bar c$, then $\|\hat x (T^-_{n + 1})\| < 1/2$ where, as defined earlier, $\hat x (T^-_{n + 1}) \= \lim_{t \uparrow T_{n + 1}} \hat x(t) = \bar x(T_{n+1})/r(n)$.
Since $r(n) = \| \bar x(T_n)\|$ in this case, we have
\begin{equation} \label{eq-t1-prf1}
 \frac{\| \bar x (T_{n + 1})\|}{ \| \bar x (T_{n})\|} = \| \hat x (T^-_{n + 1})\| < \frac{1}{2} \quad \text{if  $n \geq \bar n'$ and $r(n) \geq \bar c$}.
 \end{equation}
Now corresponding to each $n_k$, let $n'_k \= \max \{ n :   \bar n' \leq n < n_k, \,  r(n) < \bar c \}$ with $n'_k \= \bar n'$ if the set in this definition is empty.  
Then according to (\ref{eq-t1-prf1}), only two cases are possible: either 
\begin{enumerate}
\item[(i)] $n'_k = \bar n'$ and $r(\bar n') > 2 \, r(\bar n' + 1) > \cdots > 2^{n_k - \bar n'}  r(n_k) \geq 2^{n_k - \bar n'} \bar c$; or
\item[(ii)] $r(n'_k) < \bar c$ and $r(n'_k + 1) > 0.9 \, r(n_k)$. 
\end{enumerate}
Since $r(n_k) \uparrow \infty$, case (i) cannot happen for infinitely many $k$, and we must have case (ii) for all $k$ sufficiently large and with $n'_k \to \infty$ as $k \to \infty$.
Thus we have found infinitely many time intervals $[T_{n'_k}, T_{n'_k+1}]$ during each of which the process $\bar x$ starts from inside the ball of radius $\bar c$ and ends up outside a ball with an increasing radius $0.9 \, r(n_k) \uparrow \infty$. 

But this is impossible:
By Lemma~\ref{lem3}(i) for $\{\hat x(\cdot)\}$, $K^* \= \sup_{n \geq 0} \sup_{t \in [T_n, T_{n+1})} \| \hat x(t) \| < \infty$, 
and since $\hat x(t) = \bar x(t)/ r(n)$ on $[T_n, T_{n+1})$, 
this implies that if $r(n) < \bar c$, then $\|\bar x(t) \|< \bar c K^*$ for all $t \in [T_n, T_{n+1}]$.
This contradiction shows that $\{\bar x (T_n) \}_{n \geq 0}$ must be bounded.

Then $\{\bar x(t)\}_{t \geq 0}$ must also be bounded, because
\begin{align*}
 \sup_{n \geq 0} \sup_{t \in [T_n, T_{n+1})} \| \bar x(t) \|  & = \sup_{n \geq 0} \sup_{t \in [T_n, T_{n+1})}  r(n) \| \hat x(t) \|  \leq K^* \sup_{n \geq 0} r(n) < \infty.
\end{align*} 
This proves that $\{x_n\}$ is bounded.
\end{proof}

\section{Convergence Analysis} \label{sec-cvg}

We now move forward to prove Theorem~\ref{thm-2} regarding the convergence of the iterates $\{x_n\}$. 
While we could work with the continuous trajectory $\bar x(t)$ in our stability proof, it has the inconvenience of resulting in potentially multiple corresponding limiting ODEs. This situation could complicate our presentation by introducing unnecessary details when proving convergence.
Therefore, as described in Section~\ref{sec-prel-ana}, we opt to redefine $\bar x(t)$ with the random stepsizes $\tl \alpha_n = \sum_{i \in Y_n}  \alpha_{\nu(n, i)}$, $n \geq 0$, determining the elapsed time between consecutive iterates. In particular, for $n \geq 0$, let $\bar x(\tl t(n)) \= x_n$ and 
\begin{equation} 
 \bar x(t) \=  x_n +  \tfrac{t - \tl t(n)}{\tl t(n+1) - \tl t(n)} \, ( x_{n+1} - x_n), \ \ \,  t \in [\tl t(n), \tl t(n+1)]. \notag
\end{equation} 
where $\tl t(n)= \sum_{k=0}^{n -1} \tl \alpha_k$ as defined earlier in \eqref{def-conv-ana-ode-time}. We will refer to the temporal coordinate of $\bar x(t)$ as the `ODE-time.'
By the virtue of  Lemma~\ref{lem-cvg-2}, this redefinition of $\bar x(t)$ renders the corresponding limiting ODE unique and allows us to directly apply the available convergence results from \cite[Chap.~2]{Bor09}.

We will now give the main proof arguments for Theorem~\ref{thm-2}. Consider algorithm \eqref{eq-alg0} in its equivalent form \eqref{eq-alg1}; that is, in vector notation, 
\begin{equation} \label{eq-alg1a}
    x_{n+1}  = x_n + \tl \alpha_n  \tl \Lambda_n \left( h (x_n) + M_{n+1} + \epsilon_{n+1} \right),
\end{equation}
where $\tl \Lambda_n \=  \text{diag} \big( \tl \q(n, 1), \tl \q(n, 2), \ldots, \tl \q(n, d) \big)$, with diagonal entries $\tl \q(n, i) = \frac{\alpha_{\nu(n, i)}}{\tl \alpha_n} \ind\{i \in Y_n\}$ as defined previously. 
Note that by Assumptions~\ref{cond-ss}(i) and~\ref{cond-us}(i), $\{\tl \alpha_n \}$ satisfies 
\begin{equation} \label{eq-stepsize}
 \sum_n \tl \alpha_n = \infty, \quad \ \ \sum_n {\tl \alpha}^2_n < \infty, \ \ \text{a.s.}
\end{equation} 

\begin{lemma} \label{lem-cvg-3}
The sequence $\zeta_n \= \sum_{k=0}^{n-1} \tl \alpha_k \tl \Lambda_k M_{k+1}$, $n \geq 1$, converges a.s.\ in $\R^{d}$.
\end{lemma}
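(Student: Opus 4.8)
The plan is to recognize $\{\zeta_n\}$ as an $\{\F_n\}$-martingale and to control its predictable quadratic variation using the already-established stability (Theorem~\ref{thm-1}). Writing out the $i$th coordinate of the increment $\Delta_k \= \zeta_{k+1} - \zeta_k = \tl\alpha_k \tl\Lambda_k M_{k+1}$, we have $\Delta_k(i) = \tl\alpha_k\, \tl\q(k,i)\, M_{k+1}(i) = \alpha_k\, \q(k,i)\, M_{k+1}(i)$, whose coefficient $\alpha_k\,\q(k,i)$ is $\F_k$-measurable, since it depends only on $Y_0,\dots,Y_k$. As $\E[\,\|M_{k+1}\|\,] < \infty$ and $\E[M_{k+1}\mid\F_k]=0$ by Assumption~\ref{cond-ns}(i), each $\Delta_k$ is integrable with $\E[\Delta_k\mid\F_k]=0$, so $\{\zeta_n\}$ is an $\{\F_n\}$-martingale.

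Next I would bound the conditional second moments. Using the diagonal structure and $\max_{i\in\I}\q(k,i)\le C$ for all $k \geq \bar n$ a.s.\ (Lemma~\ref{lem1}),
\[
 \E\big[\,\|\Delta_k\|^2 \mid \F_k\,\big] \;\le\; \big(\max_{i\in\I}\alpha_k\,\q(k,i)\big)^2\, \E\big[\,\|M_{k+1}\|^2\mid\F_k\,\big] \;\le\; C^2\alpha_k^2\, K\,(1+\|x_k\|^2)
\]
for all $k\ge\bar n$, where the last step uses the conditional-variance bound in Assumption~\ref{cond-ns}(i). This is where stability enters: by Theorem~\ref{thm-1} the sequence $\{x_n\}$ is bounded a.s., so $R\=\sup_k\|x_k\|<\infty$ a.s., and therefore $\sum_k \E[\,\|\Delta_k\|^2\mid\F_k\,] \le C^2K(1+R^2)\sum_k\alpha_k^2 < \infty$ a.s.\ by Assumption~\ref{cond-ss}(i). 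This parallels the bound used for Lemma~\ref{lem2}(ii), except that the uniform second-moment control there (obtained from the truncation via the stopping times $k_n$) is now furnished directly by the stability theorem.

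Finally I would deduce a.s.\ convergence from the a.s.\ finiteness of this predictable quadratic variation. The one subtlety is that the bound $R$ on $\|x_k\|$ is random and path-dependent, so $\{\zeta_n\}$ need not be globally $L^2$-bounded and \cite[Prop.\ VII-2-3(c)]{Nev75} cannot be applied to $\{\zeta_n\}$ itself. I would remove this obstacle by localization: the partial sums $\sum_{k=0}^{n}\E[\,\|\Delta_k\|^2\mid\F_k\,]$ are $\F_n$-measurable and nondecreasing in $n$, so $\sigma_R \= \min\{n : \sum_{k=0}^{n}\E[\,\|\Delta_k\|^2\mid\F_k\,] > R\}$ is a stopping time for each $R>0$. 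Each stopped martingale $\{\zeta_{n\wedge\sigma_R}\}$ has total predictable quadratic variation at most $R$, hence is bounded in $L^2$ and converges a.s.\ by \cite[Prop.\ VII-2-3(c)]{Nev75}. On the a.s.\ event $\{\sum_k \E[\,\|\Delta_k\|^2\mid\F_k\,] < \infty\}$ established above, $\sigma_R = \infty$ for all sufficiently large $R$, so $\zeta_n = \zeta_{n\wedge\sigma_R}$ eventually and thus $\{\zeta_n\}$ converges a.s. The main work—and the only genuine obstacle—is precisely this passage from an a.s.-finite (but not uniformly bounded) quadratic variation to a.s.\ convergence; everything else is a direct computation combining Lemma~\ref{lem1}, Assumption~\ref{cond-ns}(i), Assumption~\ref{cond-ss}(i), and Theorem~\ref{thm-1}.
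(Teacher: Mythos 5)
Your proposal is correct and takes essentially the same route as the paper's proof: both view $\zeta_n$ as a martingale whose predictable quadratic variation is controlled through Assumption~\ref{cond-ns}(i) together with the a.s.\ boundedness of $\{x_n\}$ from Theorem~\ref{thm-1}, and both handle the fact that $\zeta_n$ need not be square-integrable (so \cite[Prop.\ VII-2-3(c)]{Nev75} cannot be invoked directly) by a stopping-time localization. The only difference is the localizing device: you stop when the predictable quadratic variation first exceeds the level $R$, whereas the paper truncates $M_{k+1}$ at the first time $\|x_k\| > N$, which bounds the conditional variance deterministically by $K(1+N^2)$ and is then undone via Theorem~\ref{thm-1} — the two devices are interchangeable here (your quadratic-variation stopping is in fact the standard proof of the convergence criterion that the cited proposition encapsulates), and your appeal to Lemma~\ref{lem1} is even dispensable, since $\tl \alpha_k \tl \q(k,i) = \alpha_{\nu(k,i)} \ind\{i \in Y_k\}$ gives summability of the squared coefficients directly from $\sum_m \alpha_m^2 < \infty$.
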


\begin{proof}
For integers $N \geq 1$, define stopping times $\tau_N$ and auxiliary variables $M^{(N)}_{k}$ as follows:
$$\tau_N \= \min \{ k \geq 0: \| x_k \| > N \}, \qquad M^{(N)}_{k+1} \= \ind \{ k < \tau_N \} M_{k+1},  \ \ \ k \geq 0.$$
By Assumption~\ref{cond-ns}(i), for each $N$, $\{M^{(N)}_k \}_{k \geq 1}$ is a martingale difference sequence with $\E[ \| M^{(N)}_{k+1} \|^2 \mid \F_k ] \leq K (1 + N^2 )$. Then the sequence $\{ \zeta^{(N)}_n\}_{n \geq 0}$ given by $\zeta^{(N)}_n \= \sum_{k=0}^{n-1} \tl \alpha_k \tl \Lambda_k M^{(N)}_{k+1}$ with $\zeta^{(N)}_0 \= 0$ is a square-integrable martingale (since the diagonal matrix $\tl \alpha_k \tl \Lambda_k$ has diagonal entries $\alpha_{\nu(k,i)} \ind \{ i \in Y_k\}, i \in \I$, all bounded by the finite constant $\sup_n \alpha_n$). Furthermore, since almost surely, 
$$\sum_{n=0}^\infty \E \left[ \| \zeta^{(N)}_{n+1} - \zeta^{(N)}_n \|^2 \!\mid\! \F_n \right] \leq \sum_{n=0}^\infty \tl \alpha^2_n \| \tl \Lambda_n \|^2 \, \E \left[ \| M^{(N)}_{n+1} \|^2 \!\mid \!\F_n \right] \leq \sum_{n=0}^\infty \tl \alpha_n^2 \| \tl \Lambda_n \|^2 K (1 + N^2) < \infty$$
(where the last inequality follows from (\ref{eq-stepsize}) and the fact that the entries of $\tl \Lambda_n$ lie in $[0,1]$), we have that $\{ \zeta^{(N)}_n\}_{n \geq 0}$ converges a.s.\ in $\R^{d}$ by \cite[Prop.\ VII-2-3(c)]{Nev75}.
As $\{x_n\}$ is bounded a.s.\ by Theorem~\ref{thm-1}, the definitions of $\tau_N$ and $\{M^{(N)}_k\}$ imply that almost surely, $\{\zeta_n\}_{n \geq 1}$ coincides with $\{\zeta^{(N)}_n\}_{n \geq 1}$ for some sample path-dependent value of $N$, leading to the a.s.\ convergence of $\{\zeta_n\}_{n \geq 1}$ in $\R^{d}$.
 \end{proof}

The next step in the proof involves using Lemma~\ref{lem-cvg-3} and Theorem~\ref{thm-1} to show that the trajectory $\bar x(\cdot)$ asymptotically `tracks' the solutions of two ODEs. The first ODE is defined by the random trajectory $\tl \lambda(\cdot)$, while the second one is the limiting ODE obtained using Lemma~\ref{lem-cvg-2}:
\begin{align}
  \dot{x}(t) & = \tl \lambda(t) h (x(t)),  \label{eq-ode1} \\ 
   \dot{x}(t) & = \tfrac{1}{d} h(x(t)).  \label{eq-ode2}
\end{align}   
Let $T > 0$. For $s \geq 0$, let ${\tl x}^s(\cdot)$ and $x^s(\cdot)$ be the unique solutions of (\ref{eq-ode1}) and (\ref{eq-ode2}), respectively, on the time interval $[s, s+T]$ with initial conditions ${\tl x}^s(s) = x^s(s) = \bar x(s)$. For $s \geq T$, let ${\tl x}_s(\cdot)$ and $x_s(\cdot)$ be the unique solutions of (\ref{eq-ode1}) and (\ref{eq-ode2}), respectively, on the time interval $[s-T, s]$ with terminal conditions ${\tl x}_s(s) = x_s(s) = \bar x(s)$.

\begin{lemma} \label{lem-cvg-4}
For any $T > 0$, almost surely,
\begin{align} 
 \lim_{s \to \infty} \sup_{t \in [s, s+T]} \| \bar x(t) - \tl x^s(t)\|  & = 0,  \label{eq-lc4-1a} \\
   \lim_{s \to \infty} \sup_{t \in [s-T, s]} \| \bar x(t) - \tl x_s(t)\|  & = 0, \label{eq-lc4-1b} \\ 
   \lim_{s \to \infty} \sup_{t \in [s, s+T]} \| \bar x(t) - x^s(t)\| & = 0, \qquad \label{eq-lc4-2a} \\
   \lim_{s \to \infty} \sup_{t \in [s-T, s]} \| \bar x(t) - x_s(t)\|  & = 0. \label{eq-lc4-2b}
\end{align}
\end{lemma}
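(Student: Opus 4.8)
The plan is to treat the four limits as two pairs: the random-coefficient assertions \eqref{eq-lc4-1a}--\eqref{eq-lc4-1b}, comparing $\bar x(\cdot)$ with the solutions of the ODE \eqref{eq-ode1} driven by $\tl\lambda(\cdot)$, and the limiting-ODE assertions \eqref{eq-lc4-2a}--\eqref{eq-lc4-2b}, comparing $\bar x(\cdot)$ with the solutions of \eqref{eq-ode2}. I would first prove \eqref{eq-lc4-1a}--\eqref{eq-lc4-1b} by the standard stochastic-approximation comparison argument of \cite[Chap.~2, Lem.~1]{Bor09}, and then obtain \eqref{eq-lc4-2a}--\eqref{eq-lc4-2b} by showing that, as $s \to \infty$, the solutions of \eqref{eq-ode1} and \eqref{eq-ode2} issuing from the common (bounded) initial value $\bar x(s)$ come together, which is where Lemma~\ref{lem-cvg-2} enters. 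Throughout I would fix a sample path on which Theorem~\ref{thm-1}, Lemma~\ref{lem-cvg-3}, Lemma~\ref{lem-cvg-2}, and \eqref{eq-stepsize} all hold; these form a set of probability one. Let $B < \infty$ be a (path-dependent) bound with $\sup_n \|x_n\| \le B$ (Theorem~\ref{thm-1}); note $\|\tl\lambda(t)\| \le 1$ for all $t$ since the diagonal entries of each $\tl\Lambda_n$ are nonnegative and sum to $1$.

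For \eqref{eq-lc4-1a}, take $s = \tl t(n)$ a grid point (general $s$ is handled by linear interpolation, whose error over one step is at most $\|x_{k+1}-x_k\|$). Summing \eqref{eq-alg1a} gives, for $\tl t(m)\in[s,s+T]$,
\[
  \bar x(\tl t(m)) - \bar x(s) = \int_s^{\tl t(m)} \tl\lambda(r)\, h(\bar x(\underline r))\, dr + \big(\zeta_m - \zeta_n\big) + \sum_{k=n}^{m-1} \tl\alpha_k \tl\Lambda_k \epsilon_{k+1},
\]
where $\underline r$ denotes the left grid-point of $r$ (so $\bar x(\underline r) = x_k$ on $[\tl t(k),\tl t(k+1))$) and $\zeta_m$ is the martingale of Lemma~\ref{lem-cvg-3}. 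I would then control the three deviations from the integral form $\tl x^s(t) = \bar x(s) + \int_s^t \tl\lambda(r) h(\tl x^s(r))\,dr$: the martingale increment $\zeta_m - \zeta_n \to 0$ by the a.s.\ convergence of $\{\zeta_n\}$ (Lemma~\ref{lem-cvg-3}); the $\epsilon$-sum is bounded by $(1+B)\,(\sup_{k\ge n}\delta_{k+1})\sum_{k=n}^{m-1}\tl\alpha_k$, which vanishes as $s\to\infty$ since $\sum_{k=n}^{m-1}\tl\alpha_k = \tl t(m)-\tl t(n)$ stays bounded over the window while $\sup_{k\ge n}\delta_{k+1}\to 0$ by Assumption~\ref{cond-ns}(ii); and the discretization error $\int_s^{\tl t(m)} \tl\lambda(r)[h(\bar x(\underline r)) - h(\bar x(r))]\,dr$ is bounded via Lipschitz $h$ and $\|\bar x(r)-\bar x(\underline r)\| \le \sup_{k\ge n}\|x_{k+1}-x_k\| \to 0$ (the last because each increment equals $\tl\alpha_k\tl\Lambda_k h(x_k)$, which is bounded and has $\tl\alpha_k\to 0$, plus $\zeta_{k+1}-\zeta_k \to 0$, plus a vanishing $\epsilon$-term). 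Collecting these into one uniformly vanishing error and applying the Gronwall inequality \cite[Chap.~11, Lem.~6]{Bor09} yields $\sup_{t\in[s,s+T]}\|\bar x(t)-\tl x^s(t)\| \to 0$. The backward statement \eqref{eq-lc4-1b} follows by the same computation run from the terminal condition.

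For \eqref{eq-lc4-2a}, given \eqref{eq-lc4-1a} it suffices to show $\sup_{t\in[s,s+T]}\|\tl x^s(t) - x^s(t)\| \to 0$, where both solutions start at $\bar x(s)$. Writing the difference in integral form and adding and subtracting $\tfrac1d h(x^s(r))$,
\[
  \tl x^s(t) - x^s(t) = \int_s^t \big(\tl\lambda(r) - \tfrac1d I\big) h(x^s(r))\,dr + \int_s^t \tl\lambda(r)\big[h(\tl x^s(r)) - h(x^s(r))\big]dr,
\]
the second integral is absorbed by Gronwall, so the crux is the first. Since $\{x_n\}$ is bounded, every initial value $\bar x(s)$ lies in a fixed compact set $B_0$, and by continuous dependence on initial conditions the solutions of \eqref{eq-ode2} over a window of length $T$ with initial value in $B_0$ form a compact family in $C([0,T];\R^{d})$, on which $h$ is bounded. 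I would exploit this exactly as in the proof of Lemma~\ref{lem6}: after shifting to $[0,T]$, fix $\delta>0$, choose finitely many initial points whose limiting-ODE trajectories $\delta$-cover this family, bound the first integral by $\sup_j \big\| \int_0^{t-s}(\tl\lambda(s+r)-\tfrac1d I) h^j(r)\, dr \big\| + O(\delta)$ for the corresponding fixed test functions $h^j(\cdot)$, and let $s\to\infty$: each of the finitely many terms vanishes because $\tl\lambda(s+\cdot)\to\tfrac1d I$ in $\tl\Upsilon$ (Lemma~\ref{lem-cvg-2}) and the $\Upsilon$-topology makes $\lambda'\mapsto\int_0^t\lambda'(r)f(r)\,dr$ continuous. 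Taking $\delta\to 0$ gives the claim, and \eqref{eq-lc4-2b} follows symmetrically.

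The routine part is the first pair: it is the classical tracking argument, with the only adaptations being the diagonal coefficient $\tl\lambda$ (norm $\le 1$) and the separation of the noise into the martingale part (Lemma~\ref{lem-cvg-3}) and the asymptotically negligible $\epsilon$-part (Assumption~\ref{cond-ns}(ii)). The main obstacle is the second pair: because $\tl\lambda$ approaches $\tfrac1d I$ only in the weak, time-averaged sense of the $\Upsilon$-topology and not pointwise, one cannot simply replace $\tl\Lambda_k$ by $\tfrac1d I$ in the sums; the weak convergence must be paired with a compactness/finite-$\delta$-cover argument to accommodate the initial values $\bar x(s)$, which themselves drift as $s\to\infty$. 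This is the step that genuinely requires Lemma~\ref{lem-cvg-2} and the continuity structure of $\Upsilon$, and it parallels the treatment already carried out for the stability ODEs in Lemma~\ref{lem6}.
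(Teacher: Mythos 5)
Your proposal is correct, and it splits the work the same way the paper does: the first pair \eqref{eq-lc4-1a}--\eqref{eq-lc4-1b} is the classical tracking argument (the paper simply lists the five ingredients --- boundedness from Theorem~\ref{thm-1}, the stepsize property \eqref{eq-stepsize}, bounded $\tl \Lambda_n$, Lipschitz $h$, and the noise control from Lemma~\ref{lem-cvg-3} and Assumption~\ref{cond-ns}(ii) --- and then replicates \cite[Chap.~2, Lem.~1]{Bor09}, exactly as you spell out). Where you genuinely diverge is the second pair \eqref{eq-lc4-2a}--\eqref{eq-lc4-2b}. The paper reduces it to showing $\sup_{t \in [s,s+T]} \| \tl x^s(t) - x^s(t)\| \to 0$ (and the terminal-condition analogue) by citing Borkar's joint-continuity result \cite[Lem.~3.1(ii)]{Bor98}: the solution maps $\Psi_1, \Psi_2 : \tl \Upsilon \times \R^d \to \C([0,T];\R^d)$ are continuous, hence uniformly continuous on the compact set $\tl \Upsilon \times \overline{\{\bar x(t) : t \geq 0\}}$ (compact by Theorem~\ref{thm-1}), and the conclusion follows from $\tl \lambda(s + \cdot) \to \tfrac{1}{d} I$ (Lemma~\ref{lem-cvg-2}). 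You instead prove the same comparison by hand: the decomposition into $\int_s^t (\tl\lambda(r) - \tfrac{1}{d}I) h(x^s(r))\,dr$ plus a Gronwall-absorbable Lipschitz term, with a finite $\delta$-cover of the compact family of limiting-ODE trajectories, mirroring the proof of Lemma~\ref{lem6}. Both routes rest on the same two facts (Lemma~\ref{lem-cvg-2} and compactness of the iterate closure); the paper's is shorter because the cited lemma packages the continuity structure abstractly, while yours is self-contained and quantitative, reusing machinery already built for the stability analysis. One detail you should make explicit: convergence in the $\tl \Upsilon$ topology gives $\int_0^u (\tl\lambda(s+r) - \tfrac{1}{d}I)\, h(\phi^j(r))\,dr \to 0$ only for each \emph{fixed} upper limit $u$, so to control the supremum over the variable endpoint $t - s \in [0,T]$ you need the grid device from Lemma~\ref{lem6} (these integrals are uniformly Lipschitz in $u$, so convergence at finitely many grid times $t_k = kT/m$ upgrades to uniform convergence); since you explicitly pattern the step on Lemma~\ref{lem6}, whose proof contains exactly this device, this is a presentational point rather than a gap.
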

  
\begin{proof}
Consider a sample path for which Theorem~\ref{thm-1}, Lemmas~\ref{lem-cvg-3} and \ref{lem-cvg-2}, and all the assumptions hold. 
To prove (\ref{eq-lc4-1a})-(\ref{eq-lc4-1b}), we work with (\ref{eq-alg1a}): 
$$x_{n+1}  = x_n + \tl \alpha_n  \tl \Lambda_n \! \left( h (x_n) + M_{n+1} + \epsilon_{n+1} \right),$$
and observe the following: 
\begin{enumerate}
\item[(i)] $\{x_n\}$ is bounded by Theorem~\ref{thm-1}; 
\item[(ii)] $\sum_n \tl \alpha_n = \infty$, $\sum_n {\tl \alpha}^2_n < \infty$ by (\ref{eq-stepsize});
\item[(iii)] $\|\tl \Lambda_n \|, n \geq 0$, and $\tl \lambda(t), t \geq 0$ are bounded by deterministic constants by definition; 
\item[(iv)] $h$ is Lipschitz continuous by Assumption~\ref{cond-h}(i); 
\item[(v)] as $n \to \infty$,  $\sup_{m \geq 0} \left\| \sum_{k=n}^{n+m} \tl \alpha_k \tl \Lambda_k M_{k+1} \right\| \to 0$ by Lemma~\ref{lem-cvg-3}; and $\epsilon_n \to 0$ by Assumption~\ref{cond-ns}(ii) and Theorem~\ref{thm-1}.
\end{enumerate}
Using the above observations, we can essentially replicate the proof of \cite[Chap.\ 2, Lem.~1]{Bor09} step by step, with some minor variations, to obtain (\ref{eq-lc4-1a})-(\ref{eq-lc4-1b}).

To prove \eqref{eq-lc4-2a}-\eqref{eq-lc4-2b}, we first establish their validity when we substitute $\bar x$ in these relations with $\tl x^s$ and $\tl x_s$, respectively. This proof involves Theorem~\ref{thm-1}, Lemma~\ref{lem-cvg-2}, and an application of Borkar \cite[Lem.\ 3.1(ii)]{Bor98}, which deals with solutions of ODEs of the form \eqref{eq-ode1} and their simultaneous continuity in both the $\tl \lambda$ function and the initial condition. Combining this result with the relations \eqref{eq-lc4-1a}-\eqref{eq-lc4-1b} then leads to \eqref{eq-lc4-2a}-\eqref{eq-lc4-2b}. We now give the details.

Let $\C([0, T]; \R^{d})$ denote the space of all $\R^{d}$-valued continuous functions $f$ on $[0, T]$ with the sup-norm
$\| f\| \= \sup_{t \in [0, T]} \| f(t)\|$. 
Let $\Psi_1$ (respectively, $\Psi_2$) denote the mapping that maps each $(\lambda', x^o) \in \tl \Upsilon \times \R^{d}$ to the unique solution of the ODE $\dot{x}(t) = \lambda'(t) h (x(t)), t \in [0,T]$, with the initial condition $x(0) = x^o$ (respectively, the terminal condition $x(T) = x^o$). Since $h$ is Lipschitz continuous, by Borkar \cite[Lem.\ 3.1(ii)]{Bor98}, $\Psi_1$ and $\Psi_2$ are continuous mappings from $\tl \Upsilon \times \R^{d}$ into the space $\C([0, T]; \R^{d})$. Therefore, if we equip the space $\tl \Upsilon$ with a metric consistent with its topology, then $\Psi_1$ and $\Psi_2$ are uniformly continuous on any compact subset of $\tl \Upsilon \times \R^{d}$, in particular, on the compact set $\tl \Upsilon \times \overline{\{\bar x(t) : t \geq 0\}}$, where $\overline{\{\bar x(t) : t \geq 0\}}$ denotes the closure of the set $\{\bar x(t) : t \geq 0\}$ and is compact by Theorem~\ref{thm-1}.
Consequently, since $\tl \lambda (t + \cdot) \to \bar \lambda(\cdot) \equiv \tfrac{1}{d} I$ as $t \to \infty$ (Lemma~\ref{lem-cvg-2}) and the initial (respectively, terminal) conditions $\tl x^s(s) = x^s(s)$ (respectively, $\tl x_s(s)=x_s(s)$) all lie in $\{\bar x(t) : t \geq 0\}$, we obtain that
 $\lim_{s \to \infty} \sup_{t \in [s, s+T]} \| \tl x^s(t) - x^s(t) \|  = 0$ and 
 $\lim_{s \to \infty} \sup_{t \in [s-T, s]} \| \tl x_s(t) - x_s(t)\|  = 0.$
Together with (\ref{eq-lc4-1a})-(\ref{eq-lc4-1b}) proved earlier, this implies (\ref{eq-lc4-2a})-(\ref{eq-lc4-2b}).
\end{proof}

We now prove the convergence results in Theorem~\ref{thm-2}. 

\begin{proof}[Proof of Theorem~\ref{thm-2}]
Using the a.s.\ boundedness of $\{x_n\}$ given by Theorem~\ref{thm-1} and the relations (\ref{eq-lc4-2a}) and (\ref{eq-lc4-2b}) given by Lemma~\ref{lem-cvg-4}, the same proof of \cite[Chap.\ 2, Thm.~2]{Bor09} goes through here and establishes that $\{x_n\}$ converges a.s.\ to a, possibly sample path-dependent, compact connected internally chain transitive invariant set of the ODE $\dot{x}(t) = \tfrac{1}{d} h (x(t))$. The solutions of this ODE are simply the solutions of the ODE $\dot{x}(t) = h (x(t))$ by a constant time scaling, so the two ODEs have identical compact connected internally chain transitive invariant sets. The desired conclusion then follows. 
\end{proof}

Finally, we discuss another important consequence of Lemma~\ref{lem-cvg-4} and Theorems~\ref{thm-1} and \ref{thm-2}. It concerns the asymptotic behavior of $\bar x(\cdot)$ and has implications for RL applications. The proof is standard but will be provided for the sake of completeness.

Let us extend $\bar x(\cdot)$ from $[0, \infty)$ to $(-\infty, \infty)$ by setting $\bar x(\cdot) \equiv x_0$ on $(-\infty, 0)$, so that we can view $\bar x(\cdot)$ as a function in $\C ((-\infty, \infty); \R^{d})$. Here $\C ((-\infty, \infty); \R^{d})$ is the space of all $\R^{d}$-valued continuous functions on $(-\infty, \infty)$ equipped with a metric that renders the convergence of $f_n \to f$ in this space to mean uniform convergence of $f_n$ to $f$ on compact intervals (e.g., $d(f, g) \= \sum_{n=1}^\infty 2^{-n}  ( 1 \wedge \sup_{t \in [-n, n]} \| f(t) - g(t) \|)$ is such a metric). The space $\C ((-\infty, \infty); \R^{d})$ is complete and by the Arzel\'{a}-Ascoli theorem, a set $B \subset \C ((-\infty, \infty); \R^{d})$ is relatively compact (i.e., has compact closure) if and only if the collection of functions in $B$ is equicontinuous and pointwise bounded (cf.\ \cite[Chap.\ 11.1.1]{Bor09} or \cite[Chap.\ 4.2.1]{KuY03}).

\begin{theorem} \label{thm-3}
Consider the continuous trajectory $\bar x(\cdot)$ defined above. Under Assumptions~\ref{cond-h}--\ref{cond-us}, almost surely, the set $\{\bar x(t + \cdot)\}_{t \in \R}$ is relatively compact in $\C((-\infty, \infty); \R^{d})$, and any limit point of $\bar x(t + \cdot)$ as $t \to \infty$ is a solution of the ODE 
$\dot{x}(t) = \tfrac{1}{d} h(x(t))$ that lies entirely in some compact invariant set of this ODE.
\end{theorem}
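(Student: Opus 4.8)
The plan is to apply the Arzel\'a--Ascoli characterization recalled just before the theorem for relative compactness, and then to identify the limit points using the forward/backward ODE tracking established in Lemma~\ref{lem-cvg-4}. Throughout I work on a sample path for which Theorem~\ref{thm-1}, Lemma~\ref{lem-cvg-4}, and \eqref{eq-stepsize} all hold.

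For relative compactness I would verify pointwise boundedness and equicontinuity of the family $\{\bar x(t+\cdot)\}_{t\in\R}$. Pointwise boundedness is immediate: by Theorem~\ref{thm-1} the iterates $\{x_n\}$ are a.s.\ bounded, say $\sup_n\|x_n\| \le B$, and since $\bar x$ is their linear interpolation (and constant on $(-\infty,0)$) we get $\sup_t\|\bar x(t)\|\le B$, so all shifts take values in a fixed ball. For equicontinuity on each $[-N,N]$ it suffices to show $\bar x$ is \emph{uniformly} continuous on $\R$, since $\{t+s:t\in\R,\,s\in[-N,N]\}=\R$. This is where I expect the main obstacle: $\bar x$ is piecewise linear with slope $\tl\Lambda_n\big(h(x_n)+M_{n+1}+\epsilon_{n+1}\big)$ on its $n$th segment, and because the martingale noise $M_{n+1}$ need not be bounded, these slopes can be arbitrarily large, so a naive Lipschitz bound is unavailable. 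The remedy is to exploit \eqref{eq-lc4-2a}: fixing $T=1$ and writing $\eta(s)\=\sup_{t\in[s,s+1]}\|\bar x(t)-x^s(t)\|\to 0$, the solutions $x^s$ of \eqref{eq-ode2} take values in the ball of radius $B+1$ for all large $s$, on which $h$ is bounded by some $K'\=\sup_{\|x\|\le B+1}\|h(x)\|$; hence each such $x^s$ is $(K'/d)$-Lipschitz with a common constant. Using $x^{t_1}(t_1)=\bar x(t_1)$ then gives $\|\bar x(t_2)-\bar x(t_1)\|\le \eta(t_1)+(K'/d)|t_2-t_1|$ for all large $t_1$ and $t_2\in[t_1,t_1+1]$, yielding a uniform modulus of continuity on $[S_0,\infty)$ for a suitable $S_0$. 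Since $\sum_n\tl\alpha_n=\infty$ a.s.\ by \eqref{eq-stepsize}, each bounded window of ODE-time contains only finitely many interpolation nodes $\tl t(n)$, so $\bar x$ is Lipschitz on $(-\infty,S_0]$ as well; combining the two ranges gives uniform continuity on all of $\R$, and Arzel\'a--Ascoli delivers relative compactness.

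To identify the limits, let $\bar x(t_k+\cdot)\to y$ uniformly on compacts with $t_k\to\infty$. Fixing $T$ and setting $s=t_k$ in \eqref{eq-lc4-2a}, the shifted functions $\tau\mapsto\bar x(t_k+\tau)$ on $[0,T]$ stay within $\eta(t_k)\to 0$ of the solution $z_k$ of $\dot z=\tfrac1d h(z)$ with $z_k(0)=\bar x(t_k)$. Since $z_k(0)\to y(0)$, continuous dependence on the initial condition for this Lipschitz ODE (as in Borkar \cite[Lem.\ 3.1(ii)]{Bor98}, or directly by Gronwall) gives $z_k\to z$, the solution started at $y(0)$; as $\|\bar x(t_k+\cdot)-z_k\|\to 0$ and $\bar x(t_k+\cdot)\to y$ on $[0,T]$, we conclude $y=z$ there, i.e.\ $y$ solves \eqref{eq-ode2} on $[0,T]$. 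The identical argument with \eqref{eq-lc4-2b} on $[t_k-T,t_k]$ handles $[-T,0]$, and letting $T\to\infty$ shows $y$ is an entire solution of \eqref{eq-ode2}.

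Finally, $y$ is bounded by $B$ as a locally uniform limit of the uniformly bounded $\bar x(t_k+\cdot)$, so its orbit $O\=\{y(t):t\in\R\}$ is bounded. Being the orbit of an entire solution, $O$ is invariant under the flow of \eqref{eq-ode2}; because $h$ is globally Lipschitz the flow is complete and continuous, so $\overline{O}$ is a \emph{compact} invariant set (apply the flow and its inverse to $\overline{O}$ and use continuity together with invariance of $O$). Thus $y$ lies entirely in the compact invariant set $\overline{O}$, which is what is claimed; and since \eqref{eq-ode2} and $\dot x=h(x)$ differ only by the constant time scaling $1/d$, they share the same invariant sets, consistent with Theorem~\ref{thm-2}.
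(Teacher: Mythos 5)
Your proof is correct, and its skeleton (Arzel\'a--Ascoli via boundedness plus equicontinuity, then identification of limits through the forward/backward tracking in Lemma~\ref{lem-cvg-4} and continuous dependence on initial/terminal conditions) matches the paper's. There are two points of divergence worth noting. First, your equicontinuity argument is an explicit, careful version of what the paper compresses into one sentence: the paper asserts that solutions of \eqref{eq-ode2} are Lipschitz ``with modulus $L/d$'' and that equicontinuity follows from this together with \eqref{eq-lc4-2a}--\eqref{eq-lc4-2b} and the definition of $\bar x(\cdot)$; your bound $\|\bar x(t_2)-\bar x(t_1)\|\leq \eta(t_1)+(K'/d)\,|t_2-t_1|$ with $K'=\sup_{\|x\|\leq B+1}\|h(x)\|$, plus the observation that $\bar x$ is piecewise linear with finitely many nodes on any bounded window (so Lipschitz on $(-\infty,S_0]$), is the honest way to fill that in, correctly circumventing the unbounded interpolation slopes caused by the martingale noise. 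Second, and this is the genuinely different step: for the containment in a compact invariant set, the paper invokes Theorem~\ref{thm-2} --- the equicontinuity together with the a.s.\ convergence of $\{x_n\}$ to a compact invariant set $D$ forces $x^*(0)\in D$, and two-sided invariance of $D$ (with uniqueness of solutions) gives $x^*(t)\in D$ for all $t$ --- whereas you construct the compact invariant set directly as the closure $\overline{O}$ of the limit solution's orbit, using boundedness of the locally uniform limit and invariance of orbit closures under the complete, continuous flow of the globally Lipschitz vector field. Your route is self-contained and does not rely on Theorem~\ref{thm-2} at all, which is a mild logical economy; the paper's route is shorter given Theorem~\ref{thm-2} is already in hand and has the side benefit of locating the limit trajectory inside the \emph{same} set to which $\{x_n\}$ converges, which is the picture exploited in Corollary~\ref{cor-ql}. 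Either argument fully establishes the statement as claimed.
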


\begin{proof}
Consider a sample path for which \eqref{eq-stepsize} and Theorems~\ref{thm-1} and~\ref{thm-2} hold and Lemma~\ref{lem-cvg-4} holds for all $T = 1 , 2, \ldots$.
By Theorem~\ref{thm-1}, $\{\bar x(t + \cdot)\}_{t \in \R}$ is uniformly bounded. Since $h$ is Lipschitz continuous, applying Gronwall's inequality \cite[Chap.\ 11, Lem.\ 6]{Bor09} shows that given a bounded set of initial conditions $x(0)$, the solutions of the ODE $\dot{x}(t) = \tfrac{1}{d} h(x(t))$ are equicontinuous on $(-\infty, \infty)$. Combining these two facts with the fact that (\ref{eq-lc4-2a}) and (\ref{eq-lc4-2b}) hold for all $T = 1 , 2, \ldots$, it follows that $\{\bar x(t + \cdot)\}_{t \in \R}$ is equicontinuous. Therefore, given its uniform boundedness, it is relatively compact in $\C \big((-\infty, \infty); \R^{d} \big)$.

Now let $x^*(\cdot) \in  \C \left((-\infty, \infty); \R^{d} \right)$ be the limit of any convergent sequence $\{\bar x(t_k+ \cdot)\}_{k \geq 1}$ with $t_k \to \infty$. 
Then $\bar x(t_k) \to x^*(0)$ and $\bar x(t_k + \cdot) \to x^*(\cdot)$ uniformly on each interval $[-T, T]$, $T = 1, 2, \ldots$, as $k \to \infty$. 
With (\ref{eq-lc4-2a})-(\ref{eq-lc4-2b}) holding for all these $T$, this implies $x^k(\cdot) \to x^*(\cdot)$ in $\C \left((-\infty, \infty); \R^{d} \right)$, where $x^k(\cdot)$ is the solution of the ODE $\dot{x}(t) = \tfrac{1}{d} h(x(t))$ on $(-\infty, \infty)$ with $x^k(0) = \bar x(t_k)$. On the other hand, since $x^k(0) \to x^*(0)$, by the Lipschitz continuity of $h$, $x^k(\cdot)$ also converges, uniformly on each compact interval, to the solution of the ODE $\dot{x}(t) = \tfrac{1}{d} h(x(t))$ with condition $x(0) = x^*(0)$. Therefore, $x^*(\cdot)$ must coincide with this solution.
From Theorem~\ref{thm-2} on the convergence of $\{x_n\}$ and the equicontinuity of $\{ \bar x(t_k + \cdot)\}_{k \geq 1}$ proved earlier, it follows that $\bar x(t_k)$ converges to some compact invariant set $D$ of the ODE $\dot{x}(t) = \tfrac{1}{d} h(x(t))$. Hence $x^*(0) \in D$. Since $D$ is invariant, this implies $x^*(t) \in D$ for all $t \in \R$.
\end{proof}

The following corollary specializes the preceding convergence results in Theorems~\ref{thm-2} and~\ref{thm-3} to a scenario relevant to average-reward RL applications. 

Let $E_h \= \{ x \in \R^d \mid h(x) = 0 \}$. It is worth noting that under Assumption~\ref{cond-h}, $E_h$ must be compact. In the context of our recent work on average-reward Q-learning for weakly communicating MDPs/SMDPs \cite{WYS24}, $E_h$ corresponds to a nonempty compact subset of solutions to the average-reward optimality equation and, in general, is not a singleton.  

\begin{cor} \label{cor-ql}
Suppose that Assumptions~\ref{cond-h}-\ref{cond-us} hold and that $E_h$ contains all compact invariant sets of the ODE $\dot{x}(t) = h(x(t))$. Then the following hold almost surely for the iterates $\{x_n\}$ generated by algorithm \eqref{eq-alg0}:\\ 
{\rm (i)} $\{x_n\}$ converges to the compact set $E_h$. \\
{\rm (ii)} For any $\delta > 0$ and any convergent subsequence $\{x_{n_k}\}$, as $k \to \infty$,
$$\tau_{\delta,k} \= \min \left\{ |s| :  \| \bar x(t_{n_k} + s) - x^* \| > \delta, \ s \in \R \right\} \to \infty,$$ 
where $\bar x(\cdot)$ is the continuous trajectory defined above, 
$t_{n_k} = \tl t(n_k)$ is the `ODE-time' when $x_{n_k}$ is generated, and $x^* \in E_h$ is the point to which $\{x_{n_k}\}$ converges.
\end{cor}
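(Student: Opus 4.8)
The plan is to obtain both parts directly from the convergence results already proved (Theorems~\ref{thm-2} and~\ref{thm-3}), the hypothesis that $E_h$ contains all compact invariant sets, and the single observation that $h \equiv 0$ on $E_h$, so that any ODE solution confined to $E_h$ is constant. I would fix a sample path on which Theorems~\ref{thm-1}--\ref{thm-3} hold and argue pathwise.

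For part~(i): By Theorem~\ref{thm-2}, $\{x_n\}$ converges to a compact, connected, internally chain transitive, invariant set $S$ of $\dot x = h(x)$. In particular $S$ is a compact invariant set, so the hypothesis forces $S \subseteq E_h$. (There is no ambiguity between $\dot x = h(x)$ and $\dot x = \tfrac{1}{d} h(x)$, since the two ODEs differ only by a constant time scaling and hence share the same invariant sets.) Therefore $\mathrm{dist}(x_n, E_h) \leq \mathrm{dist}(x_n, S) \to 0$, which is the claim.

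For part~(ii): Fix $\delta > 0$ and a convergent subsequence $x_{n_k} \to x^* \in E_h$, and recall $\bar x(t_{n_k}) = x_{n_k} \to x^*$. I would argue by contradiction: suppose $\tau_{\delta,k} \not\to \infty$, so along a subsequence (not relabeled) $\tau_{\delta,k} \leq B$ for some finite $B$. Choosing $s_k = \pm \tau_{\delta,k}$ achieving the defining minimum, continuity of $\bar x$ gives $|s_k| \leq B$ with $\|\bar x(t_{n_k} + s_k) - x^*\| \geq \delta$. By the relative compactness in Theorem~\ref{thm-3}, after passing to a further subsequence $\bar x(t_{n_k} + \cdot) \to y(\cdot)$ in $\C((-\infty,\infty);\R^{d})$, where $y$ is a solution of $\dot x = \tfrac{1}{d} h(x)$ lying entirely in some compact invariant set; by the hypothesis this set lies in $E_h$, so $\dot y \equiv \tfrac{1}{d} h(y) \equiv 0$ and $y$ is constant, with $y(0) = \lim_k \bar x(t_{n_k}) = x^*$, hence $y \equiv x^*$. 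Passing to a final subsequence with $s_k \to s^* \in [-B,B]$ and using uniform convergence of $\bar x(t_{n_k} + \cdot)$ to the constant $x^*$ on $[-B,B]$, I obtain $\|\bar x(t_{n_k} + s_k) - x^*\| \to \|y(s^*) - x^*\| = 0$, contradicting the bound $\geq \delta$. Thus $\tau_{\delta,k} \to \infty$.

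The conceptual crux is recognizing that the hypothesis collapses every limiting ODE trajectory to a single point on $E_h$; once that is in hand, both parts are short. The only mildly delicate bookkeeping is the correct handling of the minimum defining $\tau_{\delta,k}$ (extracting the extremal $s_k$ and reconciling the strict inequality $> \delta$ with the closed bound $\geq \delta$ obtained in the limit via continuity of $\bar x$), together with keeping track of the time-scaling equivalence between $\dot x = h(x)$ and $\dot x = \tfrac{1}{d}h(x)$; neither of these is a genuine obstacle.
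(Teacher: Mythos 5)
Your proposal is correct and follows essentially the same route as the paper: part~(i) from Theorem~\ref{thm-2} together with the hypothesis (and the time-scaling equivalence of the invariant sets), and part~(ii) from Theorem~\ref{thm-3}, using that any limiting trajectory lies in a compact invariant set inside $E_h$ and hence is the constant $x^*$, so that $\bar x(t_{n_k}+\cdot) \to x^*$ uniformly on compacts. Your contradiction/subsequence bookkeeping (including the closed bound $\geq \delta$ at the extremal $s_k$) is just an explicit spelling-out of what the paper compresses into ``by Theorem~\ref{thm-3} and its proof.''
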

\begin{proof}
Under our assumptions, part (i) is implied by Theorem~\ref{thm-2}. For part (ii), by the definition of $E_h$, if $x(\cdot)$ is a solution of the ODE $\dot{x}(t) = \tfrac{1}{d} h(x(t))$ that lies entirely in $E_h$, then $x(\cdot) \equiv x^*$ for some $x^* \in E_h$.
Therefore, by Theorem~\ref{thm-3} and its proof, if $x_{n_k} \to x^* \in E_h$, then $\bar x(t_{n_k} + \cdot)$ converges to the constant function $x(\cdot) \equiv x^*$ in $\C \left((-\infty, \infty); \R^{d} \right)$ as $k \to \infty$. This means that $\bar x(t_{n_k} + s)$ converges to $x^*$ uniformly in $s$ on compact intervals. Consequently, we must have $\tau_{\delta,k} \to \infty$ as $k \to \infty$.  
\end{proof}

Corollary~\ref{cor-ql}(ii) shows that over time, algorithm \eqref{eq-alg0} will spend increasingly more `ODE-time' in arbitrarily small neighborhoods around its iterates' limit points, and the duration spent around each limit point tends to infinity. This suggests that while the sequence $\{x_n\}$ may not converge to a single point, its behavior can give the appearance of convergence. 

\section{Discussion}  \label{sec-conc-rmks}

In this paper, we have established the stability and convergence of a family of asynchronous SA algorithms that have important average-reward RL applications. Our stability analysis extends Borkar and Meyn's method to address more general noise conditions than previously considered in that framework. While we have focused on partially asynchronous schemes needed for average-reward RL, the ideas in our stability analysis, especially constructing an auxiliary scaled process with stopping techniques, could potentially apply to a broader range of asynchronous schemes, including those discussed in \cite{Bor98}, given suitable functions $h$.

Additionally, an alternative stability proof is available when the martingale-difference noises $\{M_n\}_{n \geq 1}$ adhere to the specific form assumed in the prior works \cite{Bor98,BoM00,ABB01}: $M_n = F(x_{n-1}, \zeta_n)$, involving i.i.d.\ exogenous variables $\{\zeta_n\}_{n \geq 1}$ and a function $F$ that is uniformly Lipschitz in its first argument, as discussed in Remark~\ref{rmk-cond}b. In this case, a slightly simpler stability proof can be derived by working with the continuous trajectory $\bar x(\cdot)$ and the $\tl \lambda$ function defined in the second part of Section~\ref{sec-prel-ana}. As the referenced works \cite{Bor98,BoM00,ABB01} did not explicitly provide a proof of this stability result, we include our alternative proof in the \hyperref[app-alt-stab]{Appendix}.

As a final remark, our analysis has focused on algorithms without communication delays, where each iteration uses the current iterate $x_n$ to evaluate the values of $h_i, i \in \I$. This, however, precludes distributed implementation scenarios where 
communication delays between processors may necessitate using past iterates for updating each component. Bhatnagar's work \cite{Bha11} has provided a stability proof for such distributed algorithms but under a much stronger noise condition compared to ours. A future work is to extend our stability analysis to a distributed computation framework that accounts for communication delays.

\appendix
\counterwithin{equation}{section}
\renewcommand{\theequation}{A.\arabic{equation}}
\counterwithin{mylemma}{section}
\renewcommand{\themylemma}{A.\arabic{mylemma}}
\counterwithin{myassumption}{section}
\renewcommand{\themyassumption}{A.\arabic{myassumption}}

\titleformat{\section}{\normalfont\Large\bfseries}{\appendixname:}{1em}{}
\renewcommand{\thesection}{}

\section{Alternative Stability Proof under a Stronger Noise Condition} \label{app-alt-stab}

In this appendix, we consider a stronger condition from Borkar \cite{Bor98} on the martingale difference noise sequence $\{M_{n}\}$, and give an alternative, simpler proof of the stability theorem for this case.

\begin{myassumption}[Alternative condition on $\{M_n\}$] \label{cond-alt-ns} \hfill \\
For all $n \geq 0$, $M_{n+1}$ is given by $M_{n+1} = F (x_{n}, \zeta_{n+1})$, where:
\begin{enumerate}
\item[\rm (i)] $\zeta_1, \zeta_2, \ldots$ are exogenous, i.i.d.\ random variables taking values in a measurable space $\Z$, with a common distribution $p$. 
\item[\rm (ii)] The function $F : \R^d \times \Z \to \R^d$ has these properties: It is uniformly Lipschitz continuous in its first argument; i.e., for some constant $L_F > 0$,
\begin{equation}   \notag 
\| F( x, z) - F(y, z) \| \leq L_F \| x - y \|, \quad  \forall \, x, y \in \R^d, \   z \in \Z.
\end{equation} 
It is measurable in its second argument and moreover,
$$  \int_\Z \| F(0, z) \|^2 \, p(dz) < \infty, \qquad \int_\Z  F(x, z) \, p(dz) = 0,  \ \ \ \forall \, x \in \R^d.$$ 
\end{enumerate}
\end{myassumption}

Assumption~\ref{cond-alt-ns} implies Assumption~\ref{cond-ns}(i). Indeed, using the properties of the function $F$, a direct calculation shows that for some constant $K_F > 0$,
\begin{equation} \label{eq-alt-prf0}
       \int _\Z \| F(x, z) \|^2 \, p(dz) \leq K_F \!\left( 1 + \| x\|^2 \right), \quad \forall \, x \in  \R^d.
\end{equation} 
Thus, with $\F_n \= \sigma(x_m, Y_m, \zeta_m, \epsilon_m; m \leq n)$, $\{M_{n+1}\}$ satisfies Assumption~\ref{cond-ns}(i) with
\begin{equation} \label{eq-alt-prf0b}
       \E [ \| M_{n+1} \|^2 \mid \F_n ] \leq K_F \!\left( 1 + \| x_n\|^2 \right), \quad n \geq 0.
\end{equation}

By leveraging the specific form of $\{M_{n+1}\}$, we simplify the proof of the stability theorem. In this case, unlike the previous analysis in Section~\ref{sec-stab}, we work with the linearly interpolated trajectory $\bar x(t)$ that was used in our convergence analysis (Section~\ref{sec-cvg}). Recall that, in defining it, we place the iterate $x_n$ at the `ODE-time' $\tl t(n) = \sum_{k=0}^{n-1} \tl \alpha_k$, with the random stepsize $\tl \alpha_k = \sum_{i \in Y_k}  \alpha_{\nu(k, i)}$ representing the elapsed time between the $k$th and $k+1$th iterates. In the same manner as before, we divide the time axis into intervals of approximately length $T$ for a given $T > 0$, and we define the scaled trajectory $\hat x(t)$ accordingly. In particular, $T_n$ and $m(n)$ are recursively defined by (\ref{eq-def-tm}), but with $\tl t(m)$ replacing $t(m)$: 
$$ m(0)= T_0 = 0 \ \ \ \text{and} \ \ \   m(n+1) \= \min \{ m : \tl t(m) \geq T_n + T \}, \ \  T_{n+1} \= \tl t\big(m(n+1)\big), \ \ n \geq 0.$$
Observe that \emph{$T_n$ and $m(n)$ are now random variables.} With $r(n) \= \| x_{m(n)} \| \vee 1$, we then have the scaled trajectory $\hat x(t)$ and a `copy' of it, $\hat x^n(t)$, on each closed internal $[T_n, T_{n+1}]$ given by the definitions presented in and below (\ref{eq-hx0}), as in the previous analysis.  

As we discussed in Section~\ref{sec-stab-prf1}, a key step in the stability analysis is to establish $\sup_t \| \hat x(t)\| < \infty$ a.s. We will now proceed to prove this.

For $m(n) \leq k < m(n+1)$, we can express $\hat x^n(\tl t(k+1))$ as
\begin{equation}  \label{eq-alt-hx}
  \hat x^n( \tl t(k+1)) = \hat x(\tl t(k)) + \tl \alpha_k \tl \Lambda_k h_{r(n)}(\hat x^n( \tl t(k))) +  \tl \alpha_k \tl \Lambda_k \hat M_{k+1} + \tl \alpha_k \tl \Lambda_k \hat \epsilon_{k+1},
\end{equation}  
where $\tl \Lambda_k$ is the diagonal matrix defined below (\ref{eq-alg1a}):
$$\tl \Lambda_k =  \text{diag} \big( \tl \q(k, 1), \tl \q(k, 2), \ldots, \tl \q(k, d) \big), \quad \text{with} \ \ \tl \q(k, i) = \alpha_{\nu(k, i)} \ind\{i \in Y_k\} / \tl \alpha_k,$$
$\hat M_{k+1} \= M_{k+1} / r(n) = F(x_k, \zeta_{k+1})/r(n)$ by Assumption~\ref{cond-alt-ns}, and $\hat \epsilon_{k+1} \= \epsilon_{k+1}/r(n)$. 
 
Let us introduce another noise sequence $\{\hat M^o_k\}$ related to $\{\hat M_{k}\}$. For $n \geq 0$ and $k \geq 0$, let
\begin{equation} \label{eq-alt-prf1}
    \hat M^o_{k+1}  \= F (0, \zeta_{k+1})/r(n) \ \ \text{if} \ m(n) \leq k < m(n+1).
\end{equation}  
Equivalently, by the definition of $m(n)$, for each $k \geq 0$, 
\begin{equation} \notag
 \hat M^o_{k+1} = F (0, \zeta_{k+1}) / r(\ell(k)), \ \  \ \text{where} \  \ \ell(k) \= \max \{ \ell \geq 0: T_\ell \leq \tl t(k) \}.
\end{equation} 
Observe that $r(\ell(k)) = \| x_{m(\ell(k))} \| \vee 1$ is $\F_k$-measurable. 
Therefore, by Assumption~\ref{cond-alt-ns}(ii) and (\ref{eq-alt-prf0}),
\begin{equation} \label{eq-alt-prf2}
  \E [ \hat M^o_{k+1} \mid \F_k] = 0, \qquad \E [ \| \hat M^o_{k+1} \|^2 \mid \F_k ] \leq K_F, \quad \forall \, k \geq 0.
\end{equation}
Moreover, by the Lipschitz continuity property of $F$, for $m(n) \leq k < m(n+1)$,
\begin{equation} \label{eq-alt-prf3}
   \| \hat M_{k+1} - \hat M^o_{k+1} \| = \frac{\| F(x_k, \zeta_{k+1}) - F(0, \zeta_{k+1}) \|}{r(n)}  \leq  \frac{L_F \| x_k\|}{r(n)} = L_F \| \hat x^n(\tl t(k)) \|.
\end{equation} 
 
\begin{mylemma} \label{lem-alt1}
Almost surely, the sequence $\zeta_n^o \= \sum_{k=0}^{n-1} \tl \alpha_k \tl \Lambda_k \hat M^o_{k+1}$ (with $\zeta_0^o = 0$) converges in $\R^d$.
\end{mylemma}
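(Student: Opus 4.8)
The plan is to recognize $\{\zeta_n^o\}$ as a square-integrable martingale whose conditional quadratic increments are summable, and then to invoke the martingale convergence theorem \cite[Prop.\ VII-2-3(c)]{Nev75}, exactly as in the proof of Lemma~\ref{lem-cvg-3}, but \emph{without} the stopping-time truncation needed there.

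First I would verify that $\{\zeta_n^o\}$ is a martingale with respect to $\{\F_n\}$. Its $k$th increment is $\tl \alpha_k \tl \Lambda_k \hat M^o_{k+1}$; the factor $\tl \alpha_k \tl \Lambda_k$ is $\F_k$-measurable (its diagonal entries $\alpha_{\nu(k,i)} \ind\{i \in Y_k\}$ depend only on $Y_k$ and $\{\nu(k,i)\}$), and $r(\ell(k))$ is $\F_k$-measurable as noted before \eqref{eq-alt-prf2}, so from $\E[\hat M^o_{k+1} \mid \F_k] = 0$ in \eqref{eq-alt-prf2} each increment has zero conditional mean. For square-integrability, I would use that $\tl \alpha_k$ is bounded by a deterministic constant (since $|Y_k| \le d$ and $\{\alpha_n\}$ is bounded), that $\|\tl \Lambda_k\| \le 1$ (its diagonal entries are nonnegative and sum to $1$), and that $\E[\|\hat M^o_{k+1}\|^2] \le K_F$ by \eqref{eq-alt-prf2}; hence $\E[\|\zeta_n^o\|^2] < \infty$ for every finite $n$.

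The key quantitative estimate is the bound on the sum of conditional second moments of the increments. Using the $\F_k$-measurability of $\tl \alpha_k \tl \Lambda_k$, the bound $\|\tl \Lambda_k\| \le 1$, and the deterministic conditional-variance bound $\E[\|\hat M^o_{k+1}\|^2 \mid \F_k] \le K_F$ from \eqref{eq-alt-prf2}, I would obtain
\begin{equation} \notag
\sum_{k=0}^\infty \E\big[ \|\zeta^o_{k+1} - \zeta^o_k\|^2 \mid \F_k \big] \le K_F \sum_{k=0}^\infty \tl \alpha_k^2 \, \|\tl \Lambda_k\|^2 \le K_F \sum_{k=0}^\infty \tl \alpha_k^2 < \infty \quad \text{a.s.,}
\end{equation}
where the final inequality is \eqref{eq-stepsize}. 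Applying \cite[Prop.\ VII-2-3(c)]{Nev75} then yields the a.s.\ convergence of $\{\zeta_n^o\}$ in $\R^d$.

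The reason this argument is simpler than the one for Lemma~\ref{lem-cvg-3}---and the only point requiring care---is that the conditional-variance bound $K_F$ for $\hat M^o_{k+1}$ is a fixed deterministic constant that does \emph{not} grow with $\|x_k\|$, because $\hat M^o_{k+1}$ is built from $F(0, \zeta_{k+1})$ rather than $F(x_k, \zeta_{k+1})$ and is divided by $r(\ell(k)) \ge 1$. Consequently no localization by stopping times (as in Lemma~\ref{lem-cvg-3}) is needed to secure integrability, and the main obstacle reduces to carefully establishing the measurability and boundedness facts above so that \eqref{eq-alt-prf2} and \eqref{eq-stepsize} can be combined directly.
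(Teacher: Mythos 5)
Your proof is correct and takes essentially the same approach as the paper's: you recognize $(\zeta_n^o, \F_n)$ as a square-integrable martingale, bound $\sum_{n} \E\big[\|\zeta^o_{n+1}-\zeta^o_n\|^2 \mid \F_n\big] \leq K_F \sum_{n} \tl\alpha_n^2 \|\tl\Lambda_n\|^2 < \infty$ a.s.\ via \eqref{eq-alt-prf2} and \eqref{eq-stepsize}, and invoke \cite[Prop.\ VII-2-3(c)]{Nev75}. Your closing observation---that the deterministic conditional-variance bound $K_F$ makes the stopping-time localization of Lemma~\ref{lem-cvg-3} unnecessary---is precisely the reason the auxiliary sequence $\hat M^o_{k+1}$ is introduced.
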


\begin{proof}
Since, for all $k \geq 0$, the stepsizes $\tl \alpha_k$ and the entries of $\tl \Lambda_k$ are bounded by deterministic constants, it follows from (\ref{eq-alt-prf2}) that $(\zeta_n^o, \F_n)$ is a square-integrable martingale and moreover, 
$$\sum_{n=0}^\infty \E \left[ \| \zeta^o_{n+1} - \zeta^o_n \|^2 \mid \F_n \right] \leq 
\sum_{n=0}^\infty \tl \alpha^2_n \| \tl \Lambda_n \|^2 \, \E \left[ \| \hat M^o_{n+1} \|^2 \mid \F_n \right] 
\leq K_F \sum_{n=0}^\infty \tl \alpha_n^2 \| \tl \Lambda_n \|^2 < \infty, \ \ \ a.s.$$
(since $\sum_n \tl \alpha_n^2 < \infty$ a.s.).
Then by \cite[Prop.\ VII-2-3(c)]{Nev75}, almost surely, $\zeta_n^o$ converges in $\R^d$.
\end{proof}

\begin{mylemma} \label{lem-alt2}
$\sup_{n \geq 0} \sup_{t \in [T_n, T_{n+1}]} \| \hat x^n(t) \|  < \infty$ a.s.
\end{mylemma}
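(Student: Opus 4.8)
The plan is to get a \emph{pathwise} bound on each time block $[T_n,T_{n+1}]$ (rather than the in-expectation bound used in Section~\ref{sec-stab}) by feeding the a.s.\ convergence of the noise martingale from Lemma~\ref{lem-alt1} into a discrete Gronwall argument. This mirrors the structure of \cite[Chap.\ 3, Lem.\ 4 and 6]{Bor09} but never passes to second moments. The structural input that makes this work is the splitting $\hat M_{k+1}=\hat M^o_{k+1}+(\hat M_{k+1}-\hat M^o_{k+1})$ already set up in \eqref{eq-alt-prf1}--\eqref{eq-alt-prf3}: the ``zero-state'' part $\hat M^o_{k+1}$ has the $\|x_n\|$-independent conditional second moment $K_F$ from \eqref{eq-alt-prf2} and is therefore tame, while the remainder is controlled \emph{pathwise} by $\| \hat M_{k+1}-\hat M^o_{k+1}\|\le L_F\|\hat x^n(\tl t(k))\|$ from \eqref{eq-alt-prf3}, so it can be folded into the Gronwall coefficient.

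First I would fix a sample path on which Lemma~\ref{lem-alt1} holds and $\delta_n\to 0$. Writing $\hat x^n_k\=\hat x^n(\tl t(k))$ and $S^{(n)}_k\=\sum_{l=m(n)}^{k-1}\tl\alpha_l\tl\Lambda_l\hat M^o_{l+1}=\zeta^o_k-\zeta^o_{m(n)}$, I would substitute the splitting into \eqref{eq-alt-hx} and pass to the shifted sequence $y_k\=\hat x^n_k-S^{(n)}_k$; by construction the $\hat M^o$ term telescopes out of the recursion, leaving
$$y_{k+1}=y_k+\tl\alpha_k\tl\Lambda_k\,h_{r(n)}(\hat x^n_k)+\tl\alpha_k\tl\Lambda_k(\hat M_{k+1}-\hat M^o_{k+1})+\tl\alpha_k\tl\Lambda_k\hat\epsilon_{k+1}.$$
Because $\zeta^o$ converges it is bounded, so $W\=2\sup_k\|\zeta^o_k\|<\infty$ dominates $\|S^{(n)}_k\|$ uniformly in $n$ and $k$, and hence $\|\hat x^n_k\|\le\|y_k\|+W$.

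Next I would take norms, using the deterministic bound on $\|\tl\Lambda_k\|$ (its diagonal entries lie in $[0,1]$), the estimate $\|h_{r(n)}(x)\|\le\|h(0)\|+L\|x\|$ (valid since $r(n)\ge 1$), the pathwise bound \eqref{eq-alt-prf3}, and $\|\hat\epsilon_{k+1}\|\le\delta_{k+1}(1+\|\hat x^n_k\|)$ with the a.s.-finite $\bar\delta\=\sup_k\delta_k$. Replacing $\|\hat x^n_k\|$ by $\|y_k\|+W$ yields a recursion $\|y_{k+1}\|\le(1+\tl\alpha_k p)\|y_k\|+\tl\alpha_k q$ with constants $p,q$ independent of $k$ and $n$ (though sample-path dependent) that absorb $L+L_F+\bar\delta$ and $\|h(0)\|+\bar\delta+W$ respectively. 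Applying the discrete Gronwall inequality over the block, where the accumulated stepsize $\sum_{k=m(n)}^{m(n+1)-1}\tl\alpha_k$ is at most about $T$ and the starting value satisfies $\|y_{m(n)}\|=\|\hat x^n(T_n)\|=\|x_{m(n)}\|/r(n)\le 1$, gives a bound on $\sup_k\|y_k\|$ that is independent of $n$, hence a uniform bound on $\sup_n\sup_k\|\hat x^n_k\|$. Since linear interpolation does not exceed the adjacent grid values, the continuous trajectory obeys the same bound, yielding the claim.

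The main obstacle is conceptual rather than computational: the second-moment argument of Section~\ref{sec-stab} is unavailable here because there is no variance control on $\delta_{k+1}$ and the scaling entries are only a.s.\ bounded. The resolution is exactly the decomposition above, which isolates the martingale $\zeta^o$ whose increments have the state-independent conditional second moment $K_F$ of \eqref{eq-alt-prf2}; this is what forces $\zeta^o$ to converge pathwise and lets $W$ act as a uniform, $n$-independent ``noise budget'' inside the Gronwall estimate. The only point needing care is to make the Gronwall constants uniform across \emph{all} blocks, including the finitely many early ones where $\delta_{k+1}$ may be large, and this is handled by using the global a.s.-finite quantities $\bar\delta$ and $W$ rather than asymptotic ones.
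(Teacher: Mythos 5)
Your proposal is correct and follows essentially the same route as the paper's proof: the same splitting $\hat M_{k+1}=\hat M^o_{k+1}+(\hat M_{k+1}-\hat M^o_{k+1})$, the same use of Lemma~\ref{lem-alt1} to obtain a uniform pathwise bound on the partial sums of the $\hat M^o$ terms, the same pathwise bounds on the drift and $\hat\epsilon$ terms via $\sup_i\delta_i<\infty$ a.s., and a discrete Gronwall estimate over each block with uniform constants. Your shifted variable $y_k=\hat x^n_k - S^{(n)}_k$ is only a cosmetic repackaging of the paper's step, which instead bounds $\bigl\|\zeta^o_{k+1}-\zeta^o_{m(n)}\bigr\|\leq 2B$ directly inside the Gronwall inequality.
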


\begin{proof}
As in \cite[Chap.\ 3, Lem.\ 6]{Bor09}, we will show that $\sup_{t \in [T_n, T_{n+1}]} \| \hat x^n(t) \|$ can be bounded by a number independent of $n$. For each $n \geq 0$, using (\ref{eq-alt-hx}), we have that for $k$ with $m(n) \leq k < m(n+1)$,
$$ \hat x^n(\tl t(k+1))  =   \hat x^n(\tl t(m(n)))  + \sum_{i=m(n)}^{k} \tl \alpha_i \tl \Lambda_i h_{r(n)}(\hat x^n(\tl t(i))) + 
\sum_{i=m(n)}^{k} \tl \alpha_i \tl \Lambda_i \hat M_{i+1} + \sum_{i=m(n)}^{k} \tl \alpha_i \tl \Lambda_i \hat \epsilon_{i+1}.$$
Similarly to the proof of \cite[Chap.\ 3, Lem.\ 6]{Bor09}, we proceed to bound $\| \hat x^n(\tl t(k+1))\|$ by bounding the norm of each term on the r.h.s.\ of the above equation. By the definition of $\hat x(\cdot)$, we have $\| \hat x^n(\tl t(m(n)) \| \leq 1$. Using the Lipschitz continuity of $h_c$ (Assumption~\ref{cond-h}) and the fact that $\sup_{i \geq 0} \| \tl \Lambda_i\|\leq \tl C$ for some deterministic constant $\tl C$, we can bound the norm of the second term by $\sum_{i=m(n)}^{k} \tl \alpha_i \tl C ( \| h(0)\| + L \| \hat x^n(\tl t(i))\| )$. For the forth term, by Assumption~\ref{cond-ns}(ii), we have $\|\hat \epsilon_{i+1} \| = \|  \epsilon_{i+1} \|/r(n) \leq \delta_{i+1} (1 + \| \hat x^n(\tl t(i))\| )$, so $\left\| \sum_{i=m(n)}^{k} \tl \alpha_i \tl \Lambda_i \hat \epsilon_{i+1} \right\| \leq \sum_{i=m(n)}^{k} \tl \alpha_i \tl C B_\delta (1 + \| \hat x^n(\tl t(i))\|)$, where $B_\delta \= \sup_{i \geq 1} \delta_i < \infty$ a.s..  
For the third term, we use (\ref{eq-alt-prf3}) and Lemma~\ref{lem-alt1} to obtain
\begin{align*}
    \left\| \sum_{i=m(n)}^{k} \tl \alpha_i \tl \Lambda_i \hat M_{i+1}  \right\| & \leq 
     \left\| \sum_{i=m(n)}^{k} \tl \alpha_i \tl \Lambda_i \hat M^o_{i+1} \right\| +  
       \sum_{i=m(n)}^{k} \tl \alpha_i  \| \tl \Lambda_i \|  \left\| \hat M_{i+1} - \hat M^o_{i+1} \right\|  \\
       & \leq \| \zeta^o_{k+1} - \zeta^o_{m(n)} \| +  \sum_{i=m(n)}^{k} \tl \alpha_i  \tl C  \cdot L_F\| \hat x^n(\tl t(i)) \| \\
       & \leq 2 B + L_F \tl C \!\sum_{i=m(n)}^{k} \tl \alpha_i  \| \hat x^n(\tl t(i)) \|,
 \end{align*}  
 where $B \= \sup_i \| \zeta^o_i\| < \infty$ a.s.\ (Lemma~\ref{lem-alt1}). Observe also that by the definitions of $m(n)$, $m(n+1)$, and $\tl \alpha_i$, we have
$\sum_{i=m(n)}^{m(n+1) -1} \tl \alpha_i < T + \tl \alpha_{m(n+1) -1} \leq T + d \bar \alpha$,  where $\bar \alpha \= \sup_j \alpha_j < \infty$.
By combining the preceding derivations, we obtain
 $$
 \| \hat x^n(\tl t(k+1)) \| \leq 1 +  2 B +  \tl C (T+d \bar \alpha) (\| h(0) \| + B_\delta) + \tl C(L + L_F + B_\delta) \!\sum_{i=m(n)}^{k} \tl \alpha_i \| \hat x^n(\tl t(i))\|.
 $$
 Then by the discrete Gronwall inequality \cite[Chap.\ 11, Lem.\ 8]{Bor09}, for all $k$ with $m(n) \leq k < m(n+1)$,
 $$  \| \hat x^n(\tl t(k+1)) \| \leq \left(1 +  2 B +  \tl C (T+d \bar \alpha) (\| h(0) \| + B_\delta) \right) e^{\tl C (L+ L_F + B_\delta) (T+d \bar \alpha)}.$$
This shows that almost surely, $\sup_{t \in [T_n, T_{n+1}]} \| \hat x(t) \|$ can be bounded by a finite (random) number independent of $n$, and therefore, $\sup_{n \geq 0} \sup_{t \in [T_n, T_{n+1}]} \| \hat x(t) \| < \infty$ a.s.
\end{proof}
 
With Lemma~\ref{lem-alt2}, we have established the boundedness of the scaled trajectory $\hat x(\cdot)$. This has the following implication, which will be needed shortly in relating $\{\hat x^n(\cdot)\}$ to ODE solutions:

\begin{mylemma} \label{lem-alt3}
Almost surely, as $n \to \infty$, $\hat \epsilon_n \to 0$, and $\zeta_n \= \sum_{k=0}^{n-1} \tl \alpha_k \tl \Lambda_k \hat M_{k+1}$ converges in $\R^d$.
\end{mylemma}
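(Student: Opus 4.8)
The plan is to prove the two assertions separately, in each case substituting the (not-yet-available) boundedness of $\{x_n\}$ by the boundedness of the scaled trajectory provided by Lemma~\ref{lem-alt2}. Write $K^* \= \sup_{n \ge 0}\sup_{t \in [T_n, T_{n+1}]} \|\hat x^n(t)\|$, which is finite a.s.\ by Lemma~\ref{lem-alt2}. For the claim $\hat\epsilon_n \to 0$, I would argue directly: for $m(n) \le k < m(n+1)$ we have $\hat\epsilon_{k+1} = \epsilon_{k+1}/r(n)$, and Assumption~\ref{cond-ns}(ii) gives $\|\epsilon_{k+1}\| \le \delta_{k+1}(1 + \|x_k\|)$. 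Dividing by $r(n) \ge 1$ and using $\|x_k\|/r(n) = \|\hat x^n(\tl t(k))\| \le K^*$ yields $\|\hat\epsilon_{k+1}\| \le \delta_{k+1}(1 + K^*)$ for every $k$; since $\delta_{k+1} \to 0$ a.s.\ by Assumption~\ref{cond-ns}(ii), we conclude $\hat\epsilon_n \to 0$ a.s.

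\textbf{Convergence of $\zeta_n$.} I first observe that $\{\hat M_{k+1}\}$ is a martingale difference sequence: since $x_k$ and $r(\ell(k))$ are $\F_k$-measurable while $\zeta_{k+1}$ is independent of $\F_k$, Assumption~\ref{cond-alt-ns}(ii) gives $\E[\hat M_{k+1} \mid \F_k] = r(\ell(k))^{-1}\int_\Z F(x_k, z)\, p(dz) = 0$; moreover $\tl\alpha_k \tl\Lambda_k$ is $\F_k$-measurable, so $\zeta_n$ is a martingale. To obtain a.s.\ convergence I would localize exactly as in the proof of Lemma~\ref{lem-cvg-3}, but using the scaled trajectory in place of $\{x_n\}$. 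For $N \ge 1$, define the stopping times $\sigma_N \= \inf\{n \ge 0 : \|\hat x(\tl t(n))\| > N\}$ (these are stopping times because $\hat x(\tl t(n)) = x_n/r(\ell(n))$ with $m(\ell(n)) \le n$, hence $\F_n$-measurable), and set $\hat M^{(N)}_{k+1} \= \ind\{k < \sigma_N\}\hat M_{k+1}$ and $\zeta^{(N)}_n \= \sum_{k=0}^{n-1}\tl\alpha_k \tl\Lambda_k \hat M^{(N)}_{k+1}$. On $\{k < \sigma_N\}$ one has $\|x_k\|/r(\ell(k)) \le N$ and $r(\ell(k)) \ge 1$, so by \eqref{eq-alt-prf0}, $\E[\|\hat M^{(N)}_{k+1}\|^2 \mid \F_k] \le K_F\, r(\ell(k))^{-2}(1 + \|x_k\|^2) \le K_F(1 + N^2)$, a deterministic bound. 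Hence $\zeta^{(N)}_n$ is a square-integrable martingale with $\sum_n \E[\|\zeta^{(N)}_{n+1} - \zeta^{(N)}_n\|^2 \mid \F_n] \le K_F(1+N^2)\sum_n \tl\alpha_n^2 \|\tl\Lambda_n\|^2 < \infty$ a.s.\ (using \eqref{eq-stepsize} and the fact that the diagonal entries of $\tl\Lambda_n$ lie in $[0,1]$), so it converges a.s.\ by \cite[Prop.\ VII-2-3(c)]{Nev75}. Finally, Lemma~\ref{lem-alt2} gives $\sup_n \|\hat x(\tl t(n))\| \le K^* < \infty$ a.s., so for every $N > K^*$ we have $\sigma_N = \infty$ and $\zeta_n = \zeta^{(N)}_n$ for all $n$; therefore $\zeta_n$ converges a.s.

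\textbf{Main obstacle.} The crux is precisely that this lemma feeds into the stability proof, so I cannot invoke Theorem~\ref{thm-1} (boundedness of $\{x_n\}$) that drives the analogous Lemma~\ref{lem-cvg-3}. The resolution is to localize on the scaled trajectory rather than on $\{x_n\}$; the key cancellation is that the conditional second moment of $\hat M_{k+1}$ equals $r(\ell(k))^{-2}\int_\Z \|F(x_k,z)\|^2\,p(dz)$, which is at most $r(\ell(k))^{-2} + \|\hat x(\tl t(k))\|^2$ up to the constant $K_F$, both terms being controlled by Lemma~\ref{lem-alt2}. An alternative would be the decomposition $\zeta_n = \zeta^o_n + \sum_{k}\tl\alpha_k\tl\Lambda_k(\hat M_{k+1} - \hat M^o_{k+1})$, handling $\zeta^o_n$ by Lemma~\ref{lem-alt1} and the remainder by the bound \eqref{eq-alt-prf3}; but since that remainder is bounded only by the random quantity $L_F K^*$, establishing its convergence still requires the same localization, so I prefer to localize $\zeta_n$ directly.
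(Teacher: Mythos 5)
Your proposal is correct and takes essentially the same route as the paper's own proof: the identical direct bound $\|\hat\epsilon_{k+1}\| \leq \delta_{k+1}\left(1 + \|\hat x^n(\tl t(k))\|\right)$ combined with Lemma~\ref{lem-alt2} for the first claim, and for the second the same localization of $\zeta_n$ by stopping times defined through the scaled trajectory (your $\sigma_N$ is the paper's $\tau_N$), the same conditional second-moment bound $K_F(1+N^2)$, the same appeal to \cite[Prop.\ VII-2-3(c)]{Nev75}, and the same path-dependent identification $\zeta_n = \zeta^{(N)}_n$ via Lemma~\ref{lem-alt2}. The alternative decomposition through $\zeta^o_n$ that you mention and discard is likewise not what the paper does, and your reason for discarding it is sound.
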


\begin{proof}
By the definition of $\{\hat \epsilon_k\}$ and Assumption~\ref{cond-ns}(ii), we have that for $m(n) \leq k < m(n+1)$, $\|\hat \epsilon_{k+1} \| = \|  \epsilon_{k+1} \|/r(n) \leq \delta_{k+1} (1 + \| \hat x^n(\tl t(k))\|)$, where $\delta_{k} \to 0$ a.s., as $k \to \infty$. 
By Lemma~\ref{lem-alt2}, this implies $\hat \epsilon_k \to 0$ a.s., as $k \to \infty$.

The proof of the a.s.\ convergence of $\{\zeta_n\}$ is similar to the proof of Lemma~\ref{lem-cvg-3} in Section~\ref{sec-cvg}. Specifically, for integers $N \geq 1$, we define stopping times $\tau_N$ and auxiliary variables ${\hat M}^{(N)}_k$ by
\begin{align*}
  \tau_N  & \= \, \min \left\{ k \geq 0 \, \big|\,  \| \hat x^n(\tl t(k))\| > N, \, m(n) \leq k < m(n+1), \, n \geq 0 \right\},  \\
 {\hat M}^{(N)}_{k+1} & \= \, \ind \{ k < \tau_N \} \hat M_{k+1},  \quad k \geq 0.
\end{align*} 
Using Assumption~\ref{cond-alt-ns}, \eqref{eq-alt-prf0b}, and the definition of $\hat M_{k+1}$, we have that for each $N$,  $\{{\hat M}^{(N)}_k \}_{k \geq 1}$ is a martingale difference sequence with 
$$\E[ \| {\hat M}^{(N)}_{k+1} \|^2 \!\mid\! \F_k ] \leq \ind \{ k < \tau_N \} \cdot  K_F  ( 1 +  \| \hat x^{n_k}(\tl t(k))\|^2) \leq  K_F (1 + N^2 ),$$
where $n_k$ is such that $m(n_k) \leq k < m(n_k+1)$ (more specifically, $n_k$ is given by $n_k \= \max \{ \ell \geq 0: T_\ell \leq \tl t(k) \}$ and thus $\F_k$-measurable).
As in the proof of Lemma~\ref{lem-cvg-3}, it then follows that the sequence $\{ \zeta^{(N)}_n\}_{n \geq 0}$ given by $\zeta^{(N)}_n \= \sum_{k=0}^{n-1} \tl \alpha_k \tl \Lambda_k {\hat M}^{(N)}_{k+1}$ with $\zeta^{(N)}_0 \= 0$ is a square-integrable martingale and converges a.s.\ in $\R^{d}$ by \cite[Prop.\ VII-2-3(c)]{Nev75}. Since $\sup_{n \geq 0} \sup_{t \in [T_n, T_{n+1}]} \| \hat x^n(t) \|  < \infty$ a.s.\ by Lemma~\ref{lem-alt2}, the definitions of $\tau_N$ and $\{{\hat M}^{(N)}_k\}$ imply that almost surely, $\{\zeta_n\}_{n \geq 1}$ coincides with $\{\zeta^{(N)}_n\}_{n \geq 1}$ for some sample path-dependent value of $N$. This leads to the a.s.\ convergence of $\{\zeta_n\}$ in $\R^{d}$.
\end{proof}

Using Lemmas~\ref{lem-alt2} and \ref{lem-alt3}, we can now apply essentially the same proof steps used for \cite[Chap.\ 2, Lem.\ 1]{Bor09} to obtain that
\begin{equation}
   \lim_{n \to \infty} \sup_{t \in [T_n, T_{n+1}]} \left\| \hat x^n(t) - x^n(t) \right\| = 0 \ \ \ a.s.,
\end{equation}   
where $x^n(\cdot)$ is redefined to be the unique solution of the ODE
$$ \dot{x}(t) = \tl \lambda(t) \, h_{r(n)} (x(t))  \ \ \  \text{with} \ x^n(T_n) = \hat x(T_n) = x_{m(n)}/r(n),$$
and the function $\tl \lambda(\cdot)$ is given by (\ref{eq-tlambda}) in Section~\ref{sec-prel-ana}.
 
From this point forward, we can argue similarly to Section~\ref{sec-stab-prf2} to establish the a.s.\ boundedness of the iterates $\{x_n\}$ from algorithm (\ref{eq-alg0}). Since, in this case, as $t \to \infty$, $\tl \lambda(t + \cdot)$ converges in $\tl \Upsilon$ to the unique limit point $\bar \lambda(\cdot) \equiv \tfrac{1}{d} I$ (Lemma~\ref{lem-cvg-2}), there is no need to consider multiple limit points as in Section~\ref{sec-stab-prf2}. Consequently, the proof arguments involved are slightly simpler.

\addcontentsline{toc}{section}{References} 
\bibliographystyle{unsrt} 
\let\oldbibliography\thebibliography
\renewcommand{\thebibliography}[1]{%
  \oldbibliography{#1}%
  \setlength{\itemsep}{0pt}%
}
{\fontsize{9}{11} \selectfont
\bibliography{asyn_sa_arxiv.bib}}

\end{document}